\numberwithin{equation}{section}
\newtheorem{theorem}{Theorem}[section]
\newtheorem{lemma}[theorem]{Lemma}
\newtheorem{proposition}[theorem]{Proposition}
\newtheorem{corollary}[theorem]{Corollary}
\theoremstyle{definition}
\newtheorem{definition}[theorem]{Definition}
\newtheorem{remark}[theorem]{Remark}
\def\R{{\mathbb R}}
\def\N{{\mathbb N}}
\def\P{{\mathcal P}}
\begin{document}

\newcommand{\argmin}{\operatornamewithlimits{arg\,min}}
\newcommand{\argmax}{\operatornamewithlimits{arg\,max}}
\newcommand{\wgrad}{\nabla_\mathbb{W}}
\newcommand{\wpartial}{\partial_{\mathbb{W}}}
\newcommand{\W}{\mathbb{W}}
\newcommand{\wgradp}{\nabla_\mathbb{W}}
\newcommand{\h}{H} 

\title[Convergence of CAVI for log-concave measures]{Convergence of Coordinate Ascent Variational Inference for log-concave measures via optimal transport}
\author{Manuel Arnese}
\author{Daniel Lacker}
\address{Department of Industrial Engineering \& Operations Research, Columbia University}
\email{ma4339@columbia.edu, daniel.lacker@columbia.edu}
\thanks{M.A.\ is supported by the Unicredit Marco Fanno scholarship. D.L.\ is partially supported by the NSF CAREER award DMS-2045328.}
\date{}

\begin{abstract}
Mean field variational inference (VI) is the problem of finding the closest product (factorized) measure, in the sense of relative entropy, to a given high-dimensional probability measure $\rho$. The well known Coordinate Ascent Variational Inference (CAVI) algorithm aims to approximate this product measure by iteratively optimizing over one coordinate (factor) at a time, which can be done explicitly. Despite its popularity, the convergence of CAVI remains poorly understood.
In this paper, we prove the convergence of CAVI for log-concave densities $\rho$. If additionally $\log \rho$ has Lipschitz gradient, we find a linear rate of convergence, and if also $\rho$ is strongly log-concave, we find an exponential rate.
Our analysis starts from the observation that mean field VI, while notoriously non-convex in the usual sense, is in fact displacement convex in the sense of optimal transport when $\rho$ is log-concave.
This allows us to adapt techniques from the optimization literature on coordinate descent algorithms in Euclidean space.
\end{abstract}

\maketitle

\section{Introduction}

Variational inference (VI) is a method to approximate probability measures that has found great success in applications to Bayesian statistics, rivaling Markov Chain Monte Carlo methods. The main idea of variational inference is to fix a family of probability measures $\mathcal{F}$ and find the element of this family which is closest in relative entropy to the measure $\rho$ that we wish to approximate. That is, we approximate $\rho$ by solving the optimization problem
\[\inf_{\mu \in \mathcal{F}}\,\h(\mu\,\|\,\rho),\]
with $\h$ being the relative entropy, or Kullback-Leibler divergence.  
Different families $\mathcal{F}$ present different advantages and disadvantages, and likewise require different algorithms. Aside from the family of Gaussian measures, one of the most common choices is the mean field family, where $\mathcal{F}$ is the class of product probability measures. This gives rise to what is known in the literature as Mean Field VI (MFVI), for a measure $\rho$ on $\R^k$:
\begin{equation}
\label{MFVI}
    \inf_{\mu^1,...,\mu^k  }
    \h(\mu^1\otimes...\otimes \mu^k \,\|\,\rho).
\end{equation} 
The algorithm most commonly used to solve problem \eqref{MFVI} is known as Coordinate Ascent Variational Inference (CAVI), which picks a coordinate $i$, selects the optimal marginal $\mu^i$ while the other marginals are kept fixed, and then switches coordinates. Part of the appeal of this algorithm is that the one-coordinate updates are explicit; see \eqref{def:update-explicit} below. Two versions of CAVI are widely used: a sequential implementation, where coordinates are updated one by one, and a parallel implementation where coordinates are updated at the same time. See \cite{JordanVI} and \cite{BleiVI} for detailed introductions to VI, with the latter featuring a thorough discussion of CAVI.

Despite its popularity, theoretical convergence guarantees for the CAVI algorithm are scarce. After all, the optimization problem \eqref{MFVI} is famously non-convex \cite[Section 5.4]{JordanVI}.
To the best of our knowledge, the only paper proving convergence rates for a general target measure  $\rho$ is the recent \cite{bhattacharya2023CAVI}, which proves exponential convergence in the two-block case under an intriguing new condition of low ``generalized correlation" for $\rho$; we refer to Section \ref{lit review} for further discussion.
Prior work has focused on the applications of CAVI to specific models, such as \cite{CAVIcommunity,WangVarBayes,BoweiVar,IsingVar}.

In this paper, we perform a convergence analysis of the sequential CAVI algorithm assuming that the target measure admits a log-concave density, $\rho=e^{-\psi}$.  We prove convergence to a minimizer under mild integrability assumptions on $\psi$. We obtain a linear convergence rate if also $\psi$ has Lipschitz gradient, and an exponential convergence rate if also $\psi$ is strongly convex.

The philosophy behind our approach is that MFVI \eqref{MFVI} is a \emph{geodesically convex} (or \emph{displacement convex}) problem in Wasserstein space, in the sense of McCann \cite{mccann1997convexity}, when $\rho$ is log-concave. See Section \ref{se:geometry} for a detailed discussion.
Our analysis is then based on viewing CAVI as an equivalent in Wasserstein space of the (Block) Coordinate Descent (BCD) algorithm from the convex optimization literature, described in \cite[page 160]{bertsekas1997} or \cite[Chapter 14]{beck2017first}. The analysis of BCD uses classical tools from convex optimization, and in particular the gradient of the objective function plays a central role. To take the place of Euclidean gradients, we use Wasserstein gradients in the sense of Otto calculus  \cite{OttoGeometry,Ambrosio2008-dt}, with some adaptations to the sub-manifold of product measures within  Wasserstein space.
In the end, our convergence results closely parallel those known for the BCD algorithm applied to the convex Euclidean function $\psi$.

Our work fits into the booming literature on  connections between optimization and sampling, which leans heavily on the geometry of Wasserstein space and the theory of its gradient flows, and we refer to the monograph-in-progress \cite{chewi2023log} for a beautiful introduction to this connection. The
assumption of log-concavity of $\rho$ is not an innocuous one, and we do not claim that it covers all
cases of interest. However, log-concavity plays the same role for entropy-minimization problems
on Wasserstein space that convexity plays for standard Euclidean optimization. Of course, nonconvex Euclidean optimization problems frequently arise in applications, but few would question
the significance of convex optimization theory. The log-concave family is sufficiently broad to
cover applications of interest, one of which we discuss in Section \ref{se:regression}, and more generally it serves
as a natural benchmark or best case scenario. See \cite{saumard2014log} for a survey of log-concave probability
measures, which includes examples and applications in statistics and machine learning.

\subsection{Setting and main results}

The target probability measure $\rho$ on $\R^k$ is assumed to admit a density function proportional to $e^{-\psi(x)}$, where $\psi : \R^k\to\R$ is a given convex function. We routinely identify a measure with its density when it exists. We will sometimes assume that $\psi$ is $\lambda$-strongly convex, which means that $x\mapsto \psi(x)-\frac{\lambda}{2}|x|^2$ is convex.
Recall the definition of relative entropy
\[\h(\mu\,\|\,\nu) := \begin{cases}
    \int \frac{d\mu}{d\nu} \log\frac{d\mu}{d\nu} \,d\nu &\text{if }\mu\ll\nu,\\
    +\infty &\text{otherwise.}
\end{cases}\]
We partition our variables $1,\ldots,k$ into $d \leq k$ blocks, and assume that each block has dimension $k_i$ with $\sum_i k_i = k$.
A generic $x \in \R^k$ is written $x=(x^1,\ldots,x^d)$, with $x^i \in \R^{k_i}$.

Let $\P(\R^k)$ denote the space of probability measures on $\R^k$, and let $\P^{\otimes d}(\R^k)$ denote the subset of product measures, under which the blocks are independent:
\begin{equation}
\P^{\otimes d}(\R^k) = \{\mu^1 \otimes \cdots \otimes \mu^d : \mu^i \in \P(\R^{k_i}), \ i=1,\ldots,d\} \subset \P(\R^k). \label{def:productmeasures}
\end{equation}
We will routinely identify $\P^{\otimes d}(\R^k)$ with $\bigtimes_{i=1}^d \P(\R^{k_i})$ in the natural way, by identifying a product measure $\mu^1 \otimes \cdots \otimes \mu^d$ with the vector $(\mu^1,\ldots, \mu^d)$ of marginal measures. With this notation in mind, we may write the MFVI problem as
\begin{equation}
\inf_{\mu \in \P^{\otimes d}(\R^k)} \h(\mu\,\|\,\rho). \label{MFVI-formal}
\end{equation}

The sequential CAVI algorithm is defined as follows, as discussed in \cite[Section 2.4]{BleiVI}. 
Fix an initial product measure $\mu_0=\bigotimes_{i=1}^d \mu_0^i$.
Denote by $\mu_n=\bigotimes_{i=1}^d \mu_n^i$ the CAVI iterates, defined explicitly by the probability densities
\begin{equation}
   \mu_{n+1}^i(x^i)\propto \exp\bigg(-\int_{\R^{k-k_i}}\psi(x^1,\ldots,x^n)\,\bigotimes_{j < i} \mu^j_{n+1}(dx^j) \bigotimes_{j > i} \mu^j_{n}(dx^j)\bigg). \label{def:update-explicit}
\end{equation}
We will see in Lemma \ref{form of the iterates lemma} that this is well-defined under the assumptions of Theorem \ref{main theorem}, in the sense that the right-hand side of \eqref{def:update-explicit} is $dx^i$-integrable, i.e., the hidden constant of proportionality is finite (and non-zero).
Moreover, $\mu_{n+1}^i$ is the unique optimizer
\begin{equation}
\label{Cavi coordinate}
    \mu^{i}_{n+1} = \argmin_{\nu \in \P(\R^{k_i})} H(\mu_{n+1}^1 \otimes  \cdots \otimes \mu_{n+1}^{i-1}\otimes\nu \otimes \mu_n^{i+1}\otimes \cdots\otimes \mu_n^d\,\|\,\rho),
\end{equation}
whenever the value of the minimum on the right-hand side is finite, which holds automatically once $n \ge 1$.
It follows from \eqref{Cavi coordinate} that $\h(\mu_n\,\|\,\rho)$ is non-increasing in $n$. 

In what follows, let $\W_2$ denote the Wasserstein distance (defined in \eqref{def:W2}). 

\begin{theorem} \label{main theorem}
Let $\psi : \R^k \to \R$ be a convex function such that $\rho(x) \propto e^{-\psi(x)}$ defines a probability density, and assume there exist finite constants $c > 0$ and $p \ge 2$ such that
\begin{equation}
|\psi(x)| \leq c(1+|x|^p), \qquad |\nabla\psi(x)|\leq c(1+|x|^p), \qquad \text{for almost every } x\in \R^k, \label{asmp:growth-psi}
\end{equation}
where $\nabla \psi$ is the weak gradient.
Let $\mu_0 \in \P^{\otimes d}(\R^{k_i})$ have finite $p$moment, and define the CAVI iterates $\mu_n=\bigotimes_{i=1}^d \mu_n^i$ as in \eqref{def:update-explicit}. Then $H(\mu_1\,\|\,\rho) < \infty$, and the following hold:
    \begin{enumerate} 
\item The sequence $(\mu_n)$ is tight, and every weak limit point is a minimizer of \eqref{MFVI-formal}. 
\item If $\psi$ is strictly convex, then \eqref{MFVI-formal} admits a unique minimizer $\mu_*$, and $\mu_n \to \mu_*$ weakly.
        \item If $\psi$ is differentiable and $\nabla\psi$ is $L$-Lipschitz for $L > 0$, and if $\mu_*$ is a minimizer of  \eqref{MFVI-formal}, then  
    \[H(\mu_{n}\,\|\,\rho)-H(\mu_*\,\|\,\rho) \leq \left(2+H(\mu_1\,\|\,\rho)-H(\mu_*\,\|\,\rho)+\frac{1}{R\sqrt{Ld}}\right)\cdot\frac{2R^2Ld}{n},\]
    where $R := \sup_{n \in \N}\W_2(\mu_n,\mu_*) < \infty$.

    \item If $\psi$ is $\lambda$-strongly convex with $L$-Lipschitz gradient, for $L \ge \lambda > 0$, then \eqref{MFVI-formal} admits a unique minimizer $\mu_*$, and
    \[\frac{\lambda}{2} \W_2^2(\mu_n,\mu_*)\leq H(\mu_{n}\,\|\,\rho)-H(\mu_*\,\|\,\rho)\leq \left(1-\frac{\lambda^2}{L^2 d+\lambda^2}\right)^{n-1}\big(H(\mu_{1}\,\|\,\rho)-H(\mu_*\,\|\,\rho)\big).\]
    \end{enumerate}
\end{theorem}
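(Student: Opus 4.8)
The plan is to view CAVI as block coordinate descent for the functional $F(\mu):=H(\mu\,\|\,\rho)$ on the submanifold $\P^{\otimes d}(\R^k)$ of product measures, and to run the classical convex‑optimization analysis in Wasserstein space. Identifying $\mu=\mu^1\otimes\cdots\otimes\mu^d$ with $(\mu^1,\ldots,\mu^d)$, one has $F(\mu)=\sum_i\mathrm{Ent}(\mu^i)+\int\psi\,d\mu+\mathrm{const}$. Equip $\P^{\otimes d}$ with the metric inherited from $\W_2$, which on product measures equals $(\sum_i\W_2^2(\mu^i,\nu^i))^{1/2}$; geodesics are products of Wasserstein geodesics, and along such a geodesic $t\mapsto\int\psi\,d\mu_t=\int\psi((1-t)x+tT(x))\,d\mu_0(x)$, so $F$ is displacement convex when $\psi$ is convex, strictly so if $\psi$ is strictly convex, and $\lambda$‑strongly so if $\psi$ is $\lambda$‑strongly convex (the entropy contributes convexity in all cases); see Section \ref{se:geometry}. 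The update \eqref{Cavi coordinate} makes $\mu^i_{n+1}$ the unique minimizer of $\nu\mapsto\mathrm{Ent}(\nu)+\int\bar\psi^{(n)}_i\,d\nu=H(\nu\,\|\,\mu^i_{n+1})+\mathrm{const}$, where $\bar\psi^{(n)}_i(x^i):=\int\psi\bigotimes_{j<i}\mu^j_{n+1}\bigotimes_{j>i}\mu^j_n$ is convex, and $L$‑smooth when $\nabla\psi$ is $L$‑Lipschitz. Two facts are recorded for use below: the exact‑decrease identity
\[ F(\mu_n)-F(\mu_{n+1})=\sum_{i=1}^d H(\mu^i_n\,\|\,\mu^i_{n+1}), \]
and (via Lemma \ref{form of the iterates lemma} and \eqref{asmp:growth-psi}) that for $n\ge1$ every $\mu^i_n\propto e^{-\bar\psi^{(n-1)}_i}$ is log‑concave with finite moments of all orders, and is $L$‑log‑smooth (i.e.\ $-\log\mu^i_n$ has Hessian $\preceq L$) when $\nabla\psi$ is $L$‑Lipschitz, so the Wasserstein gradients $\nabla_{W_i}F(\mu_n)=\nabla(\log\mu^i_n+\bar\psi_i[\mu_n])$ and the first‑variation/subdifferential formulas of Otto calculus are available from $n=1$ onward.

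\textbf{Parts (1) and (2).} Since $H(\mu_n\,\|\,\rho)$ is non‑increasing and $H(\mu_1\,\|\,\rho)<\infty$, the Gibbs variational inequality applied to a finite exponential moment of the log‑concave $\rho$ yields a uniform bound on $\int|x|\,d\mu_n$, hence tightness. From the decrease identity, $\sum_n\sum_i H(\mu^i_n\,\|\,\mu^i_{n+1})\le H(\mu_1\,\|\,\rho)-\lim_n H(\mu_n\,\|\,\rho)<\infty$, so along any weakly convergent subsequence successive iterates merge; with lower semicontinuity of $F$ and continuity of the update map this forces the limit to be a fixed point of \eqref{def:update-explicit}, equivalently to satisfy $\nabla_{W_i}F=0$ for all $i$, which by displacement convexity means it is a global minimizer of \eqref{MFVI-formal}. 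If $\psi$ is strictly convex then $F$ is strictly displacement convex, the minimizer $\mu_*$ is unique, and tightness plus uniqueness of the limit point give $\mu_n\to\mu_*$ weakly.

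\textbf{Parts (4) and (3).} Fix $n\ge1$ and write $\Delta_n:=H(\mu_n\,\|\,\rho)-H(\mu_*\,\|\,\rho)$. For (4): (A) since $\mu^i_{n+1}$ is $\lambda$‑strongly log‑concave, Talagrand's inequality gives $H(\mu^i_n\,\|\,\mu^i_{n+1})\ge\tfrac\lambda2\W_2^2(\mu^i_n,\mu^i_{n+1})$, so by the decrease identity $\W_2^2(\mu_n,\mu_{n+1})\le\tfrac2\lambda(\Delta_n-\Delta_{n+1})$. (B) From $\log\mu^i_{n+1}=-\bar\psi^{(n)}_i+\mathrm{const}$ we get $\nabla_{W_i}F(\mu_{n+1})=\nabla\bar\psi_i[\mu_{n+1}]-\nabla\bar\psi^{(n)}_i$, which differs only by replacing $\mu^j_n$ with $\mu^j_{n+1}$ in the blocks $j>i$; coupling these optimally and using $|\nabla_{x^i}\psi(\,\cdot\,,z^{>i})-\nabla_{x^i}\psi(\,\cdot\,,w^{>i})|\le L|z^{>i}-w^{>i}|$ gives $\|\nabla_{W_i}F(\mu_{n+1})\|^2_{L^2(\mu^i_{n+1})}\le L^2\sum_{j>i}\W_2^2(\mu^j_n,\mu^j_{n+1})$, hence $\sum_i\|\nabla_{W_i}F(\mu_{n+1})\|^2\le L^2 d\,\W_2^2(\mu_n,\mu_{n+1})$. (C) By $\lambda$‑strong displacement convexity at $\mu_{n+1}$, Cauchy–Schwarz and Young's inequality, $\Delta_{n+1}\le\langle\nabla_W F(\mu_{n+1}),\mu_{n+1}-\mu_*\rangle-\tfrac\lambda2\W_2^2(\mu_{n+1},\mu_*)\le\tfrac1{2\lambda}\sum_i\|\nabla_{W_i}F(\mu_{n+1})\|^2$. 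Chaining (C)$\to$(B)$\to$(A), $\Delta_{n+1}\le\tfrac{L^2d}{2\lambda}\W_2^2(\mu_n,\mu_{n+1})\le\tfrac{L^2d}{\lambda^2}(\Delta_n-\Delta_{n+1})$, i.e.\ $\Delta_{n+1}\le\bigl(1-\tfrac{\lambda^2}{L^2d+\lambda^2}\bigr)\Delta_n$; iterating from $n=1$ gives the rate. For the lower bound, $\lambda$‑strong displacement convexity along the geodesic from the minimizer $\mu_*$ (at which the first‑order term is nonnegative) gives $\Delta_n\ge\tfrac\lambda2\W_2^2(\mu_n,\mu_*)$, and also uniqueness of $\mu_*$. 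Part (3) follows the same skeleton with $\lambda=0$: Talagrand in (A) is replaced by a Wasserstein ``descent lemma'' for the block subproblems — for $n\ge1$ the iterates are log‑concave and $L$‑log‑smooth, so the exact minimizer $\mu^i_{n+1}$ of $\nu\mapsto\mathrm{Ent}(\nu)+\int\bar\psi^{(n)}_i\,d\nu$ improves on a short run of the Wasserstein gradient flow of that functional, whose dissipation rate is the relative Fisher information $\|\nabla_{W_i}F(\,\cdot\,)\|^2$, yielding a sufficient decrease of the form $\Delta_n-\Delta_{n+1}\gtrsim\tfrac1{Ld}\sum_i\|\nabla_{W_i}F(\mu_n)\|^2$. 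Combined with the $\lambda=0$ displacement‑convexity bound $\Delta_n\le\langle\nabla_W F(\mu_n),\mu_n-\mu_*\rangle\le R\,(\sum_i\|\nabla_{W_i}F(\mu_n)\|^2)^{1/2}$, where $R:=\sup_n\W_2(\mu_n,\mu_*)<\infty$ by an a priori bound on the second moments of the iterates, this gives a recursion of the form $\Delta_n-\Delta_{n+1}\gtrsim\Delta_n^2/(R^2Ld)$, whose elementary analysis yields $\Delta_n=O(R^2Ld/n)$ with the stated explicit prefactor absorbing the boundary terms of the recursion.

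\textbf{Main obstacle.} The sufficient‑decrease step of part (3) is the crux. In Wasserstein space the objective of the block subproblem is not smooth — relative entropy admits no quadratic upper bound — so ``the exact minimizer beats a gradient step'' is not automatically quantitative: one must exploit the a priori regularity of the CAVI iterates (log‑concavity and $L$‑log‑smoothness, coming from their explicit exponential form) to control the second‑order behavior of the entropy along the perturbation and show the resulting error is subordinate to the Fisher‑information term. A secondary but pervasive technical burden is to make the Otto‑calculus ingredients — first‑variation formulas, the subdifferential form of displacement convexity, and Talagrand's inequality for the conditional marginals — rigorous on the submanifold of product measures; this is where the growth hypotheses \eqref{asmp:growth-psi} and the regularity from Lemma \ref{form of the iterates lemma} are essential, and it also underlies the finiteness of $R$ in part (3).
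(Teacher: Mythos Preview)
Your treatment of part (4) is essentially correct and closely parallels the paper's, with one nice twist: where the paper proves the ``sufficient decrease'' inequality $H(\mu_n)-H(\mu_{n+1})\ge\tfrac{\lambda}{2}\W_2^2(\mu_n,\mu_{n+1})$ via strong displacement convexity of $H$ and a cancellation of the first-order term, you obtain it from your exact-decrease identity $F(\mu_n)-F(\mu_{n+1})=\sum_i H(\mu^i_n\,\|\,\mu^i_{n+1})$ together with Talagrand's inequality for the $\lambda$-strongly log-concave $\mu^i_{n+1}$. Both routes are valid; yours is arguably cleaner. Steps (B) and (C) match the paper's Proposition~6.2 almost verbatim.

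The genuine gap is in part (3). Your proposed descent lemma $\Delta_n-\Delta_{n+1}\gtrsim\tfrac{1}{Ld}\sum_i\|\nabla_{W_i}F(\mu_n)\|^2$ is not established, and the heuristic ``the exact minimizer beats a short run of the gradient flow'' does not yield it: along the flow $\tfrac{d}{dt}H(\nu_t\,\|\,\mu^i_{n+1})=-I(\nu_t\,\|\,\mu^i_{n+1})$, so running for time $t$ gives a decrease of $\int_0^t I(\nu_s)\,ds$, and the fact that the exact minimizer does at least this well gives $H(\mu^i_n\,\|\,\mu^i_{n+1})\ge\int_0^t I(\nu_s)\,ds$---but there is no uniform $t$ for which the right side dominates $cI(\mu^i_n\,\|\,\mu^i_{n+1})/L$. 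Concretely, you are asking for $H(\mu\,\|\,\nu)\gtrsim L^{-1}I(\mu\,\|\,\nu)$ when $\nu$ is merely $L$-log-smooth and log-concave; even restricting $\mu$ to the same class, this inequality is delicate and does not follow from the log-smoothness alone. (The Euclidean ``gradient step'' argument works because the objective has a global quadratic upper bound; here the entropy part has no such bound, which you yourself flag.) There is also an index mismatch: the gradient available after the $(i{-}1)$-th block update is $\nabla_{W_i}F(\mu_{n:i-1})$, not $\nabla_{W_i}F(\mu_n)$, so even if a descent lemma held it would not pair directly with your upper bound at $\mu_n$.

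The paper sidesteps this entirely by \emph{not} using the full Wasserstein gradient $\nabla_W F$ as the common currency. Instead it lifts the Euclidean inequality $\psi(y)-\psi(x)\ge\langle\nabla\psi(x),y-x\rangle+\tfrac{1}{2L}|\nabla\psi(y)-\nabla\psi(x)|^2$ (valid for convex $\psi$ with $L$-Lipschitz gradient) to Wasserstein space by setting $y=T_{\nu\to\mu}(x)$ and integrating. Combined with the displacement convexity of $h$ at each $\mu^i_{n+1}$, this gives the sufficient decrease
\[
H(\mu_n)-H(\mu_{n+1})\ \ge\ \frac{1}{2L}\sum_{i=1}^d\|\nabla\psi\circ T_i-\nabla\psi\|_{L^2(\mu_{n:i})}^2,\qquad T_i:=T_{\mu_{n:i}\to\mu_{n:i-1}},
\]
and, crucially, the upper bound is derived in terms of the \emph{same} quantities $\|\nabla\psi\circ T_i-\nabla\psi\|_{L^2(\mu_{n:i})}$ via displacement convexity of $H$ at $\mu_{n+1}$, a telescoping over the intermediate iterates $\mu_{n:j}$, and Cauchy--Schwarz. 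The point is that only the smooth part $\psi$ is asked to carry the Lipschitz-gradient structure; the non-smooth entropy contributes only through convexity inequalities, never through a quadratic upper bound.

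For parts (1)--(2), your decrease identity and Pinsker do show that successive iterates merge in total variation, which is a more direct route than the paper's argument based on strict convexity of differential entropy. However, the step ``continuity of the update map forces the limit to be a fixed point'' hides real work: the map $\mu\mapsto$ (marginal $\propto e^{-\int\psi\,d\mu^{-i}}$) is not obviously continuous in the weak topology without the uniform moment bounds and the growth condition \eqref{asmp:growth-psi}, and the paper devotes Proposition~4.1 to an argument that avoids asserting such continuity directly.
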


The main assumptions made in Theorem \ref{main theorem} regard the convexity and regularity of $\psi$, which are reflected in convexity and regularity of the objective functional $H(\mu_*\,\|\,\rho)$. Indeed, $\psi$ is (strongly) convex if and only if $H(\mu_*\,\|\,\rho)$ is (strongly) geodesically convex \cite[Section 9.4.1]{Ambrosio2008-dt}. 
The rate itself mirrors common results in convex optimization: linear convergence for convex objectives and exponential for strongly convex objectives. The constant $R$ is the diameter of the sequence of iterates; while this quantity is usually assumed to be finite in the convex optimization literature, we have explicit bounds on its size, discussed in detail in Section \ref{se:Restimates}.

We prove part (1) in two stages: First, even without convexity of $\psi$, we show that limit points of $(\mu_n)$ are \emph{stationary points} (coordinatewise minimizers) of $\h(\mu^1 \otimes \cdots \otimes \mu^d\,|\,\rho)$. This result (stated in Proposition \ref{qualitative converegence}), unlike our others, does not involve any Wasserstein geometry and might be known already, though we were unable to find a reference. Second, we deduce (1) by showing that stationary points are minimizers of \eqref{MFVI-formal} when $\psi$ is convex. Part (2) then follows from uniqueness of \eqref{MFVI-formal} under the strict log-concavity assumption, which is known from \cite{MeanFieldApprox}. The bulk of our efforts are in proving the quantitative parts (3) and (4), and we discuss the ideas in the next section.

\begin{remark} \label{re:convergencemode}
The mode of convergence in Theorem \ref{main theorem} can be strengthened in two directions:
\begin{itemize}
\item In Lemma \ref{form of the iterates lemma}(1) below we show that  the $q$th moment of $\mu_n$ is bounded in $n \ge 1$, for any $q \in [1,\infty)$, and thus the weak convergence in Theorem \ref{main theorem} can be upgraded to the $q$-Wasserstein convergence for any $q$.
\item In Proposition \ref{pr:normcompact} in the appendix we show that the sequence of densities $(\mu_n)_{n \ge 1}$ is  precompact in the norm topology of $L^q(\R^k)$, for any $q \in [1,\infty)$, which implies that the (subsequential) convergence in Theorem \ref{main theorem} holds in this stronger topology as well.
\end{itemize}
\end{remark}

\subsection{Wasserstein Geometry of MFVI} \label{se:geometry}

As alluded to above, the MFVI problem \eqref{MFVI-formal} fails to be a convex optimization problem, when the space of probability measures inherits the usual vectorial structure of signed measures. Indeed, one could view MFVI as the minimization of the convex function $\h(\cdot\,\|\,\rho)$ over the non-convex subset $\P^{\otimes d} (\R^k)$  of $\P(\R^k)$, or as the minimization over the convex set $\bigtimes_{i=1}^d \P(\R^{k_i})$  of the non-convex $d$-argument function $\h(\mu^1\otimes\cdots\otimes \mu^d \,\|\,\rho)$.

The starting point of our analysis is that MFVI is a \emph{geodesically convex} optimization problem on Wasserstein space.
 Let $\P_2(\R^k)$ denote the space of probability measures on $\R^k$ with finite second moments, equipped with the Wasserstein distance defined below in \eqref{def:W2}. Let $\P_2^{\otimes d}(\R^k) = \P_2(\R^k) \cap \P^{\otimes d}(\R^k)$.
Under the assumptions of Theorem \ref{main theorem}, it turns out that  optimizers of \eqref{MFVI-formal} must have finite second moment, and so
\begin{equation}
\inf_{\mu \in \P^{\otimes d}(\R^k)}\h(\mu\,\|\,\rho) = \inf_{\mu \in \P_2^{\otimes d}(\R^k)}\h(\mu\,\|\,\rho). \label{MFVI-P2}
\end{equation}
The constraint set $\P_2^{\otimes d}(\R^k)$ is geodesically convex, as observed in \cite[Proposition 3.2]{lacker2023IndependentProjection}. That is, let $\nu_0,\nu_1 \in \P_2^{\otimes,d}(\R^k)$ be absolutely continuous, let $T_{\nu_0 \to \nu_1}$ denote the optimal transport (Brenier) map from $\nu_0$ to $\nu_1$, and let $\nu_t$ denote the pushfoward of $\nu_0$ by the map $x \mapsto (1-t)x + tT_{\nu_0\to\nu_1}(x)$, i.e., the \emph{displacement interpolation}. 
Then $\nu_t$ remains in $\P_2^{\otimes d}(\R^k)$ for every $t\in (0,1)$. 
Intuitively, the optimal transport between two product measures is obtained by optimally transporting between the corresponding marginals, i.e., $T_{\nu_0\to\nu_1}(x^1,\ldots,x^n) = (T_{\nu_0^i\to\nu_1^i}(x^i))_{i=1,\ldots,n}$.

A remarkable and well known fact from optimal transport \cite[Section 9.4.1]{Ambrosio2008-dt} is that a probability measure $\rho$ is log-concave if and only if the relative entropy $\h(\cdot\,\|\,\rho)$ is a geodesically convex function on $\P_2(\R^k)$. That is, if $\nu_t$ is the displacement interpolation mentioned in the previous paragraph, then $\h(\nu_t\,\|\,\rho)$ is a convex function of $t \in [0,1]$.
Hence, \eqref{MFVI-P2} is a convex optimization problem when viewed through the lens of Wasserstein geometry, as it is the minimization of a geodesically convex functional over a geodesically convex set.

Embracing this perspective, our proofs are based on a combination of techniques from optimal transport and convex optimization.  We will make extensive use of the notion of Wasserstein gradient, treated rigorously in \cite{Ambrosio2008-dt} as the minimal norm subgradient.
Despite its abstract definition, the Wasserstein gradient takes a simple tractable form for many functionals of interest. In particular, for nice enough $\mu$, the following formula is well known:
\begin{equation}
\nabla_\W \h(\mu\,\|\,\rho)(x) = \nabla \log \frac{\mu(x)}{\rho(x)} = \nabla \log \mu(x) + \nabla \psi(x). \label{def:Wgrad-H}
\end{equation}
This expression will appear in several convexity inequalities that are inspired by their Euclidean analogues.
As a first simplest example, the inequality $f(x) \ge f(y) + \langle \nabla f(y),x-y\rangle$ for a convex function on  Euclidean space admits the following analogue for a geodesically convex function $\Phi$ on $\P_2(\R^k)$:
\begin{equation*}
\Phi(\mu)\geq \Phi(\nu)+\int_{\R^k} \left\langle\wgrad \Phi(\mu,x),T_{\nu \to \mu}(x)-x\right\rangle\,\nu(dx),
\end{equation*}
where $T_{\nu \to \mu}$ is the optimal transport map from $\nu$ to $\mu$ and $\wgrad \Phi(\mu,\cdot)$ is the Wasserstein gradient of $\Phi$ at $\mu$. This  inequality has found application in the analysis of Langevin Monte Carlo  using  methods of convex optimization \cite{durmus2019analysis} (see also \cite[Section 4.3]{chewi2023log}), similar in spirit to our work.

On the other hand, it is typically more difficult to implement Wasserstein space analogues of \emph{descent lemmas}, to ensure sufficient decrease of the objective at each iterate. 
Descent lemmas for gradient-descent-type algorithms normally require that the convex objective function $f$ on Euclidean space has $L$-Lipschitz gradient, which is equivalent to the following useful inequality \cite[Theorem 5.8]{beck2017first}:
\begin{align*}
f(y) &\ge f(x) + \left \langle \nabla f(x), y-x\right\rangle+\frac{1}{2L}|\nabla f(y)- \nabla f(x)|^2.
\end{align*}
To develop a descent lemma in Wasserstein space, a natural analogue of the ``$L$-Lipschitz gradient" assumption might be that $\wgrad \h(\mu\,\|\,\rho)(x)$ is Lipschitz jointly in $(\mu,x)$. But this can never hold \emph{globally}, simply because $\h(\mu\,\|\,\rho)$ blows up for certain $\mu$.
Fortunately, the explicit update rule \eqref{def:update-explicit} imparts a rigid structure on the iterates $\mu_n^i$, which we exploit extensively to resolve these smoothness issues.

\subsection{CAVI as BCD in Wasserstein Space}

The BCD algorithm has emerged as a powerful tool for \emph{composite} optimization problems of the form
\begin{equation}
\label{BCD problem}
    \min_{x \in \R^k} F(x) := f(x)+\sum_{i=1}^d g^i(x^i),
\end{equation}
for functions $f:\R^k \to \R$ and $g^i:\R^{k_i}\to \R$, which
are common in  statistics and machine learning communities.
Typically, the function $f$ is well-behaved, say (uniformly) convex with Lipschitz gradient, whereas each of the functions $g^i$ may be non-smooth and even discontinuous but, conveniently, acts only on a single variable.
For example, the LASSO is defined by \eqref{BCD problem} with $f(x)$ quadratic and $g^i(x^i)=|x^i|$. The iterates of the BCD algorithm are defined as:
\[x_{n+1}^i \in \argmin_{y \in \R^{k_i}}F\big(x_{n+1}^1,\ldots,x_{n+1}^{i-1},y,x_n^{i+1},\ldots,x^d_n\big).\]
Clearly this resembles \eqref{Cavi coordinate}.
The behavior of BCD and its variants are by now well understood, thanks to work done in the last twenty years, starting from \cite{QualitativeConvergenceTseng2001} (and earlier references therein) for qualitative convergence and \cite{Sahaconvergence} for convergence rates.

In fact, the MFVI problem can be written in precisely the composite form \eqref{BCD problem}.
Define the (negative) differential entropy 
\begin{equation}
h(\mu):= \begin{cases}
    \int_{\R^k} \mu(x)\log\mu(x)\,dx  &\text{if }\mu \text{ admits a density with } \mu\log \mu \in L^1(\R^k) \\
    +\infty &\text{otherwise.}
\end{cases} \label{def:differentialentropy}
\end{equation}
For a product measure $\mu= \mu^1 \otimes \cdots \otimes \mu^d$, the entropy tensorizes,  $h(\mu) = h(\mu^1)+\cdots+h(\mu^d)$, and thus
\begin{equation}
\h(\mu\,\|\,\rho)=\int_{\R^{k}} \psi \,d\mu + \sum_{i=1}^d h(\mu^i). \label{eq:Hcomposite}
\end{equation}
It is well known that $h$ is geodesically convex and lower semicontinuous in Wasserstein space, but it is not differentiable in the Wasserstein sense, except at sufficiently nice measures $\mu$. On the other hand, the linear part $\int \psi\,d\mu$ inherits  regularity from $\psi$ itself; for instance, it is geodesically (strongly) convex if and only if $\psi$ is (strongly) convex. 
Moreover, the  Wasserstein gradient of $\mu \mapsto \int \psi\,d\mu$ is just $\nabla\psi(x)$, and thus the former shares Lipschitz assumptions with the latter.
This puts us squarely in the setting of \eqref{BCD problem}, and it is thus natural that CAVI would behave well in analogy with the Euclidean setting.
We are certainly not the first to view relative entropy minimization as a composite optimization problem, a perspective which is thoughtfully exploited in \cite{wibisono2018sampling} for the  analysis and design of sampling algorithms.

The analogy with BCD suggests several possible generalizations of our work. First, it is conceivable that the space of product measures could be replaced by another geodesically convex subset thereof; optimization problems of this form show up in practical implementations of MFVI, in which one restricts from the non-parametric mean field family to a more tractable parametric sub-family.
Second, our methods could likely yield similar convergence results for more general coordinate descent problems on the space of probability measures, in which the entropy functional $H(\cdot\,\|\,\rho)$ is replaced by a composite functional of the form $\mathcal{F}(\mu^1 \otimes \cdots \otimes \mu^d) + \sum_i \mathcal{G}(\mu^i)$, where $\mathcal{F}$ and $\mathcal{G}$ are geodesically convex and $\mathcal{F}$ is smooth.  
Lastly, it should be possible to replace the underlying Euclidean spaces by Riemannian manifolds, where the geometry of Wasserstein space is essentially equally well understood.

\subsection{Applications to Bayesian Linear Regression}\label{se:regression}

We showcase the applicability of our results in a common setting of Gaussian linear regression. For simplicity, consider $d=k$ blocks of dimension $k_i=1$. In this model we observe data $(y_i,X_i)_{i=1}^m$ with $y_i \in \R$ while $X_i \in \R^k$. Let $y=[y_1,\ldots,y_m]^\top$ and $X= [X_1,\ldots,X_m]^\top$ and suppose we wish to estimate the  model
\[y=X\beta + \varepsilon,\]
where $\varepsilon\sim \mathcal{N}(0,\sigma^2 I)$ for some $\sigma > 0$. The vector $\beta=[\beta_1,\ldots,\beta_k]^\top$ has i.i.d.\ components with marginal $\beta_i \sim\pi\propto e^{-\phi}$, where $\phi : \R \to \R$ has bounded second derivative, $A\ge \phi''(x)\geq a$. Assume also that $bI\leq X^\top X \leq BI$ in semidefinite order, for some constants $b \le B$. 

The posterior conditional on our data is then given by the density
\[\rho(\beta) =\pi(\beta\mid y,X)\propto \exp\left\{-\sum_{i=1}^k \phi(\beta_i)-\frac{1}{2\sigma^2}|y-X\beta|^2\right\}.\]
The posterior is often approximated by using MFVI, as discussed at length in \cite{BleiVI}. Let $\mu_*$ be the solution of  \eqref{MFVI}, and let $\mu_n$ be the iterates of the resulting CAVI algorithm, initialized from some product measure having finite second moment. 

\begin{proposition} 
In the setting above, let $\lambda = a+b\sigma^{-2}$ and $L=A + B\sigma^{-2}$. If $\lambda \ge 0$, then Theorem \ref{main theorem}(3) applies with this choice of $L$. 
If $\lambda > 0$, then Theorem \ref{main theorem}(4) applies with this choice of $(\lambda,L)$.
\end{proposition}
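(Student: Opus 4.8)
The plan is to verify that $\psi(\beta) := \sum_{i=1}^k \phi(\beta_i) + \tfrac{1}{2\sigma^2}|y-X\beta|^2$ meets all the hypotheses of Theorem \ref{main theorem} with the stated pair $(\lambda,L)$, and then simply invoke parts (3) and (4). Write $\rho \propto e^{-\psi}$. The computation driving everything is the Hessian bound: since $\phi$ has bounded second derivative it is $C^2$, hence $\psi \in C^2$, and $\nabla^2\psi(\beta) = \mathrm{diag}(\phi''(\beta_1),\ldots,\phi''(\beta_k)) + \sigma^{-2}X^\top X$. From $a \le \phi'' \le A$ (so the diagonal matrix has all eigenvalues in $[a,A]$) and $bI \preceq X^\top X \preceq BI$ we get
\[ \lambda I = (a+b\sigma^{-2})I \;\preceq\; \nabla^2\psi(\beta) \;\preceq\; (A+B\sigma^{-2})I = LI, \qquad \forall\,\beta \in \R^k. \]
Consequently: when $\lambda \ge 0$, $\psi$ is convex; when $\lambda > 0$, $\psi$ is $\lambda$-strongly convex; in all cases $\psi$ is differentiable with $L$-Lipschitz gradient; and $L \ge \lambda$ (since $a \le A$ and $b \le B$).

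Next I would confirm that $\rho$ is a genuine probability density and that the growth condition \eqref{asmp:growth-psi} holds with $p = 2$. For the former, note $|y - X\beta|^2 \ge 0$, so $e^{-\psi(\beta)} \le \prod_{i=1}^k e^{-\phi(\beta_i)}$, and the right-hand side is integrable because $\pi \propto e^{-\phi}$ is assumed to be a probability density on $\R$; hence $\int_{\R^k} e^{-\psi} < \infty$ and $\rho$ is well-defined. For the growth bounds, $a \le \phi'' \le A$ and Taylor's theorem give $|\phi(t)| \le |\phi(0)| + |\phi'(0)|\,|t| + \tfrac12 \max(|a|,|A|)\,t^2$ and $|\phi'(t)| \le |\phi'(0)| + \max(|a|,|A|)\,|t|$; summing over coordinates and adding the manifestly quadratic term $\tfrac{1}{2\sigma^2}|y-X\beta|^2$, whose gradient $-\sigma^{-2}X^\top(y-X\beta)$ is affine in $\beta$, produces a finite constant $c$ with $|\psi(\beta)| \le c(1+|\beta|^2)$ and $|\nabla\psi(\beta)| \le c(1+|\beta|^2)$, i.e. \eqref{asmp:growth-psi} with $p=2$. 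Finally, $\mu_0$ is assumed to be a product measure with finite second moment, which is exactly the finite $p$-moment requirement for $p = 2$.

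With all hypotheses of Theorem \ref{main theorem} verified, part (3) applies whenever $\psi$ is convex, i.e. whenever $\lambda \ge 0$, and part (4) applies whenever $\psi$ is $\lambda$-strongly convex with $L \ge \lambda > 0$, i.e. whenever $\lambda > 0$ — which is precisely the assertion. I do not expect any genuine obstacle: the proposition is a verification that the Gaussian linear regression model falls within the scope of Theorem \ref{main theorem}. The only points needing (elementary) care are the integrability of $e^{-\psi}$ when $\lambda$ is merely nonnegative — handled by the domination $e^{-\psi} \le \prod_i e^{-\phi(\beta_i)}$ — and reading off the polynomial growth \eqref{asmp:growth-psi} from the two-sided bound on $\phi''$.
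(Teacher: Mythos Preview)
Your proposal is correct and follows essentially the same approach as the paper: compute the Hessian $\nabla^2\psi(\beta) = \mathrm{diag}(\phi''(\beta_i)) + \sigma^{-2}X^\top X$, read off the eigenvalue bounds $\lambda I \preceq \nabla^2\psi \preceq LI$, and invoke Theorem \ref{main theorem}. In fact you supply more detail than the paper, which omits the proof entirely beyond stating the Hessian; your explicit checks of integrability (via the domination $e^{-\psi} \le \prod_i e^{-\phi(\beta_i)}$) and of the growth bound \eqref{asmp:growth-psi} with $p=2$ are exactly the verifications the paper leaves to the reader.
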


We omit the proof, which is a simple application of Theorem \ref{main theorem} (with $p=2$), after computing the log-Hessian of $\rho=\pi(\cdot\mid y,X)$:
\begin{equation*}
-\nabla^2_\beta \log \rho(\beta) = \begin{pmatrix}
\phi''(\beta_1) & \ & \ \\
\ &  \ddots & \ \\
\ & \ & \phi''(\beta_k) 
\end{pmatrix} + \frac{1}{\sigma^2} X^\top X.
\end{equation*}
The eigenvalues are contained in $[\lambda,L]$. Thus
$\psi(\beta)=-\log\rho(\beta)$ is convex when $\lambda \ge 0$. In particular, it is $\lambda$-strongly convex when $\lambda > 0$. Moreover, if $\lambda \ge 0$, we see that $\nabla \psi$ is Lipschitz with constant $L$. 
Note that $X^\top X$ is positive semidefinite, so we may always take $b \ge 0$, but we impose no sign assumptions on $a$ or $A$ and just need $\lambda = a+b\sigma^{-2} \ge 0$. This implies that non-convexity of $\phi$ ($a < 0$) can be offset by a sufficiently positive realization of $X^\top X$ ($b >0$). 

\subsection{The Gaussian case}
As another example, let us again take $d=k$ blocks of dimension $k_i=1$ for simplicity, and consider the case
\begin{equation*}
\psi(x)=\frac{1}{2}(x-m)^\top A(x-m), \quad x \in \R^k
\end{equation*}
for some given symmetric and positive definite matrix $A$ and $m \in \R^k$. This of course corresponds to $\rho\propto e^{-\psi}$ being a Gaussian measure, $\rho\sim \mathcal{N}(m,A^{-1})$. The unique MFVI minimizer $\mu_*$ is well known to be the Gaussian with the same mean $m$ and with inverse covariance matrix given by  zeroing out $A$ off the main diagonal. That is, $\mu_* \sim \mathcal{N}(m,(A \odot I)^{-1})$, where $\odot$ denotes the Hadamard (entrywise) product. See \cite[Example 5.3]{JordanVI} for additional discussion.

Theorem \ref{main theorem}(4) applies and yields exponential convergence to the MFVI minimizer, with $L$ and $\lambda$ therein taken to be the largest and smallest eigenvalues of $A$, respectively. In fact, we can improve upon Theorem \ref{main theorem}(4) in this case by removing the dependence on the dimension $d$ in the convergence rate. 

\begin{proposition}
\label{gaussian case}
In the setting above, let $\mu_n$ be the iterates of the CAVI algorithm, initialized from some product measure $\mu_0$ having finite second moment. 
Then, for each $n \ge 1$, $\mu_n \sim\mathcal{N}(m_n,(A \odot I)^{-1})$ for some $m_n \in \R^k$, and
\[
H(\mu_n\,\|\,\rho)-H(\mu_*\,\|\,\rho) = \psi(m_n) \leq \left(1-\frac{\lambda^2}{\lambda^2+64(L-\lambda)^2 \log^2(3)}\right)^{n-1}\psi(m_1).
\]
\end{proposition}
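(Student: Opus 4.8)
The plan is to use the fact that, for a Gaussian target, CAVI degenerates into an explicit affine recursion on the mean, so the whole problem becomes finite-dimensional linear algebra. First I would show by induction (on the sweep index $n\ge1$, and within a sweep on the block index) that each iterate is Gaussian with the \emph{fixed} covariance $(A\odot I)^{-1}=D^{-1}$, where $D:=A\odot I$. Indeed, inserting $\psi(x)=\tfrac12(x-m)^\top A(x-m)$ into the explicit update \eqref{def:update-explicit} and integrating out the remaining blocks (legitimate since $\mu_0$ has finite second moment, so these are just Gaussian/finite-variance integrals against a quadratic), the exponent is a quadratic in $x^i$ with leading coefficient $-\tfrac12 A_{ii}$, so $\mu_{n+1}^i\sim\mathcal N(m_{n+1,i},A_{ii}^{-1})$ for some scalar $m_{n+1,i}$. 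Reading off the mean and setting $v_n:=m_n-m$, one obtains the Gauss--Seidel recursion $(D+N)v_{n+1}=-N^\top v_n$, where $A=D+N+N^\top$ with $N$ the strictly lower-triangular part of $A$; equivalently, $v_{n+1}$ is produced from $v_n$ by cyclically minimizing $v\mapsto\tfrac12 v^\top A v$ one coordinate at a time. Since $A\succ0$, the only fixed point is $v=0$, which matches $\mu_*\sim\mathcal N(m,D^{-1})$.

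Next I would reduce the entropy gap to $\psi(m_n)$. Because $\mu_n$ (for $n\ge1$) and $\mu_*$ are Gaussian with the \emph{same} covariance $D^{-1}$, the differential-entropy terms in \eqref{eq:Hcomposite} are identical for $\mu_n$ and $\mu_*$ and cancel, while $\int\psi\,d\mu_n=\psi(m_n)+\tfrac12\mathrm{tr}(AD^{-1})$ and $\int\psi\,d\mu_*=\tfrac12\mathrm{tr}(AD^{-1})$ since $m_*=m$ and $\psi(m)=0$. Hence $H(\mu_n\,\|\,\rho)-H(\mu_*\,\|\,\rho)=\psi(m_n)=\tfrac12 v_n^\top A v_n$, as claimed. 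So it suffices to prove the per-sweep contraction $\psi(m_{n+1})\le(1-c)\,\psi(m_n)$ with $c=\lambda^2/(\lambda^2+64(L-\lambda)^2\log^2 3)$, and then iterate it from $n=1$.

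For the contraction I would first record the \emph{exact} one-sweep decrease: since each coordinate update is an exact one-dimensional minimization of a parabola, summing the coordinatewise decreases gives the clean identity $\|v_{n+1}\|_A^2=\|v_n\|_A^2-\|w_n\|_D^2$, where $w_n:=v_n-v_{n+1}=(D+N)^{-1}Av_n$ and $\|u\|_A^2:=u^\top A u$, $\|u\|_D^2:=u^\top D u$. Combining this with the algebraic identity $(D+N)D^{-1}(D+N)^\top=A+ND^{-1}N^\top$ yields $\|w_n\|_D^2=v_n^\top A\,(A+ND^{-1}N^\top)^{-1}A\,v_n\ge\|v_n\|_A^2/\big(1+\|A^{-1/2}ND^{-1/2}\|_{\mathrm{op}}^2\big)$, so the contraction holds with $c=\big(1+\|A^{-1/2}ND^{-1/2}\|_{\mathrm{op}}^2\big)^{-1}$. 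Thus the whole proposition comes down to the operator-norm estimate $\|A^{-1/2}ND^{-1/2}\|_{\mathrm{op}}\le 8\log(3)\,(L-\lambda)/\lambda$.

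This last estimate is the main obstacle. The crude bound $\|A^{-1/2}ND^{-1/2}\|_{\mathrm{op}}\le\|N\|_{\mathrm{op}}/\lambda$ does not suffice: $N$ is the strictly lower-triangular truncation of $A-\lambda I$, and the operator norm of triangular truncation genuinely grows like $\log(\dim)$ — even on the cone of symmetric positive semidefinite matrices (e.g.\ for suitable compressions of a spectral projection), so $\|N\|_{\mathrm{op}}$ is bounded only by $\|\mathcal T_{\mathrm{strict}}\|_{\mathrm{op}}(L-\lambda)$, which reintroduces exactly the dimension dependence we want to eliminate. One must therefore exploit the full structure $A=D+N+N^\top$: I would split $N$ into $O(\log\dim)$ dyadically-banded pieces $N^{(j)}$, bound each \emph{sandwiched} piece $A^{-1/2}N^{(j)}D^{-1/2}$ rather than $N^{(j)}$ alone, and arrange the telescoping so that the contributions across scales sum to a dimension-free quantity; the numerical constant $\log 3$ is the residue of this decomposition (and of the rescaling $A\mapsto D^{-1/2}AD^{-1/2}$). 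Once the estimate is established, iterating the per-sweep contraction $n-1$ times gives $\psi(m_n)\le(1-c)^{n-1}\psi(m_1)$, which is the stated bound.
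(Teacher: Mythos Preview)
Your first two steps---that for $n\ge1$ the iterates are Gaussian with the fixed covariance $(A\odot I)^{-1}$ and that $H(\mu_n\,\|\,\rho)-H(\mu_*\,\|\,\rho)=\psi(m_n)$---are correct and coincide with the paper's argument. You also correctly identify that the mean update is exactly the Gauss--Seidel (equivalently, cyclic coordinate descent) recursion for the quadratic $\psi$.

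The divergence is in how the contraction factor is obtained. The paper does \emph{not} attempt any spectral analysis: once the iterates are recognized as Euclidean BCD applied to the strongly convex quadratic $\psi$, it simply invokes \cite[Theorem~4]{StronglyConvexCaseLi2018}, which already contains the dimension-free rate with the constant $\lambda^2/(\lambda^2+64(L-\lambda)^2\log^2 3)$. Everything from your one-sweep identity onward is a (partial) re-derivation of that cited theorem.

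Your reduction to the operator-norm bound $\|A^{-1/2}ND^{-1/2}\|_{\mathrm{op}}\le 8\log(3)(L-\lambda)/\lambda$ is algebraically clean and correct; the identities $\|v_n\|_A^2-\|v_{n+1}\|_A^2=\|w_n\|_D^2$ and $(D+N)D^{-1}(D+N)^\top=A+ND^{-1}N^\top$ check out, and the resulting contraction factor $c=(1+\|A^{-1/2}ND^{-1/2}\|_{\mathrm{op}}^2)^{-1}$ is exactly what is needed. However, the operator-norm bound itself---which you rightly flag as ``the main obstacle''---is left as a sketch, and the sketch is not self-sufficient. A dyadic-band decomposition of $N$ produces $O(\log k)$ pieces, and you assert that sandwiching by $A^{-1/2}$ and $D^{-1/2}$ will make the contributions sum to something dimension-free, but you give no mechanism for why the sandwich gains anything over the naive $\lambda^{-1}$ factor on each band; without that, the triangle inequality over bands simply reinstates the $\log k$. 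This is precisely the nontrivial content of \cite{StronglyConvexCaseLi2018}, and it does not follow from general triangular-truncation lore.

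In short: the structural part of your proof matches the paper, but the paper's route to the rate is to cite the Euclidean BCD result, whereas you attempt to reprove it and leave its core estimate unfinished. Either cite \cite[Theorem~4]{StronglyConvexCaseLi2018} after your reduction (which would make your proof complete and essentially identical to the paper's), or supply a genuine proof of the operator-norm bound.
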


We suspect this result is known, but we give a proof in Appendix \ref{se:gaussian} for completeness, which is mainly an application of a result of \cite{StronglyConvexCaseLi2018} on BCD in Euclidean space. 
The main ideas are as follows.
In this setting, a single cycle of CAVI updates from any initialization $\mu_0$ results in $\mu_1$ being Gaussian with the same covariance matrix as the optimal $\mu_*$ but with a potentially different mean. Subsequent CAVI updates preserve the covariance matrix and simply update the mean, in exactly the same sequence as the (Euclidean space) BCD algorithm applied to the quadratic function $\psi$ (which happens to be equivalent to the Gauss-Seidel algorithm). The sharp recent results of \cite{StronglyConvexCaseLi2018} on BCD for quadratic functions yield the  dimension free rate.

\subsection{Related Literature}
\label{lit review}

While general theoretical guarantees on the convergence of CAVI have been unknown for a long time, there has been substantial progress in the past year.

We begin with the important recent paper \cite{bhattacharya2023CAVI}. They introduce a novel notion of \emph{generalized correlation}, and use calculus of variations techniques to prove that if the generalized correlation of the target measure is lower than $2$, then the two-block sequential and parallel CAVI converge exponentially fast to the minimizer. They prove a similar result for $d$-block parallel CAVI, with the caveat that the generalized correlation needs to be smaller than $2\slash\sqrt{d-1}$. Our methods are quite different, and our results are complementary. We exchange the assumption of low generalized correlation with convexity, and the scopes of these two conditions are not easy to compare. We obtain a worse rate in general, but a similar exponential convergence rate in the strongly convex case. Our results have the advantage of extending seamlessly to an arbitrary number of blocks of arbitrary dimension.
Another important difference is that \cite{bhattacharya2023CAVI}  focuses on parallel CAVI, more than sequential. These two versions behave very differently in high dimension. The Gaussian case provides an excellent example: Proposition \ref{gaussian case} shows that sequential CAVI for Gaussians always converges with a \textit{dimension free rate}, while the discussion following \cite[Theorem 6]{bhattacharya2023CAVI} provides an example of \textit{non-convergence} of parallel CAVI with a Gaussian reference measure, if the dimension (and the number of blocks) is at least $3$.

Another notable recent work is \cite{Chewi23MFVI}, which develops fully implementable algorithms for optimization over finite dimensional, polyhedral, geodesically convex sets in Wasserstein space, with MFVI being their main application.
They prove convergence guarantees assuming, like us, that the reference measure is log-concave. Their algorithms are very different from CAVI, and their approach is different from ours but complementary: they develop a finite dimensional approximation of \eqref{MFVI}, which allows them to apply finite-dimensional convex optimization techniques, whereas we focus on the infinite dimensional problem \eqref{MFVI}, lifting convex optimization techniques to Wasserstein space. Nonetheless, we share the focus on Wasserstein geometry and the emphasis on geodesic convexity.
This perspective of Wasserstein geometry is shared also by the recent papers \cite{lacker2023IndependentProjection} and \cite{lambert2022variational}, which study gradient flows in (submanifolds of) Wasserstein space for mean field and Gaussian variational inference, respectively, with the latter proving convergence rates for implementable discrete-time algorithms. See also \cite{yao2022mean} for a related algorithmic analysis of MFVI with latent variables, again from the perspective of Wasserstein gradient flows.

While our work is focused on the convergence of the CAVI algorithm toward the mean field optimizer $\mu_*$, we would be remiss not to mention the important though orthogonal problem of quantifying how close $\mu_*$ is to the target measure $\rho$. In the strongly log-concave setting of Theorem \ref{main theorem}(4), the recent work \cite{MeanFieldApprox} proves the sharp estimate $H(\mu_* \,\|\,\rho) \le \lambda^{-2} \sum_{i < j} \int |\partial_{ij}\psi|^2\,d\mu_*$ and explains how it leads to quantitative laws of large numbers, among other implications.

\subsection*{Outline of the paper}
The rest of the paper is organized as follows. In Section \ref{Wasserstein Calculus} we summarize the relevant result about Wasserstein calculus and the  geometry of $\P_2^{\otimes d}(\R^k)$. In Section \ref{CAVI ALGORITHM SECTION} we prove regularity and integrability properties for the CAVI iterates. After these technical preliminaries, we prove convergence of CAVI in Section \ref{qualitative section} for convex and sufficiently integrable $\psi$, i.e., parts (1) and (2) of Theorem \ref{main theorem}. We add an assumption of Lipschitz gradient in Section \ref{Lipschitz Gradient} and prove the linear convergence rate announced in Theorem \ref{main theorem}(3). We finally add a strong convexity assumption in Section \ref{strongly convex} and prove the exponential convergence of Theorem \ref{main theorem}(4).

\section{Convexity and calculus in Wasserstein space} \label{Wasserstein Calculus}

In this section we elaborate on some of the geometric and differential properties of $\P^{\otimes d}_2(\R^k)$ that will be key to our analysis. We   denote by $\P_{2,ac}(\R^k)$ the set of measures  with finite second moment that admit a density with respect to the Lebesgue measure, and similarly $\P_{2,ac}^{\otimes d}(\R^k) = \P^{\otimes d}_2(\R^k) \cap \P_{2,ac}(\R^k)$.
We will also write $\P_p(\R^k)$ for the set of probability measures on $\R^k$ with finite $p$th moment, and let $\P_p^{\otimes d}(\R^k) = \P^{\otimes d}(\R^k) \cap \P_p(\R^k)$.
Define the (quadratic) Wasserstein distance on $\P_{2}(\R^k)$ as usual by
\begin{equation}
\W_2(\mu,\nu)= \sqrt{ \inf_{\pi }\int_{\R^k\times \R^k} |x-y|^2\,\pi(dx,dy)}, \label{def:W2}
\end{equation}
where  the infimum is over couplings of $\mu$ and $\nu$, i.e., probability measures $\pi$ on $\R^k \times \R^k$ with first marginal $\mu$ and second marginal $\nu$.
If $\nu \in \P_{2,ac}(\R^k)$, then there exist a measurable function $T_{\nu \to \mu}:\R^k \to \R^k$, unique up to $\nu$-a.e.\ equality, such that the pushforward of $\nu$ by $T_{\nu \to \mu}$ is $\mu$  and 
\begin{equation*}
\W_2^2(\mu,\nu) = \int_{\R^k}|x-T_{\nu \to \mu}(x)|^2\,\nu(dx).
\end{equation*}
This is Brenier's theorem  \cite[Theorem 2.12]{villani2021topics}, and the map $T_{\nu \to \mu}$ is called the \emph{Brenier map} (or \emph{optimal transport map}).
In fact, the map $T_{\nu \to \mu}$ can alternatively be characterized as the unique gradient of a convex function which pushes forward $\nu$ to $\mu$.

\subsection{Geodesic convexity}
Given   $\mu_0,\mu_1 \in \P_{2,ac}(\R^k)$ and $t \in (0,1)$, let $\mu_t$ denote the pushforward of $\mu_0$ by the map $x \mapsto (1-t)x + t T_{{\mu_0}\to {\mu_1}}(x)$. This curve $(\mu_t)_{t \in [0,1]}$ is the unique \emph{geodesic} that connects $\mu_0$ to $\mu_1$, also known as the \emph{displacement interpolation} \cite[Chapter 5]{villani2021topics}. We will say that a set $S\subset \P_{2,ac}(\R^k)$ is \emph{geodesically convex} if for each $\mu_0,\mu_1 \in S$, the geodesic $(\mu_t)_{t \in [0,1]}$ connecting $\mu_0$ to $\mu_1$ lies entirely in $S$. Similarly, for $\lambda \in \R$, a functional $\Phi:\P_{2,ac}(\R^k) \to\R$ is said to be $\lambda$-geodesically convex if for each $\mu_0,\mu_1 \in \P_{2,ac}(\R^k)$ and for each $t \in [0,1]$ we have
\[\Phi(\mu_t)\leq (1-t)\Phi(\mu_0)+t\Phi(\mu_1)-t(1-t)\frac{\lambda}{2} \W^2_2(\mu_1,\mu_0).\]
For $\lambda=0$ we simply say \emph{geodesically convex}, and if the inequality is always strictly we say \emph{strictly $\lambda$-geodesically convex}.
The definition readily adapts to a functional $\Phi:S \to \R$, where $S \subset \P_{2,ac}(\R^k)$ is a geodesically convex set.

With the above terminology in place, we can explain the precise sense in which the MFVI problem \eqref{MFVI-formal} is a convex optimization problem. We begin by noting that the domain $\P_{2,ac}^{\otimes d}(\R^k)$ is geodesically convex. 
\begin{proposition} \cite[Proposition 3.3]{lacker2023IndependentProjection}
\label{geodesic convexity of product measures}
The set $\P_{2,ac}^{\otimes d}(\R^k) \subset \P_{2,ac}(\R^k)$ is geodesically convex.
\end{proposition}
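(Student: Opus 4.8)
The plan is to formalize the intuition sketched in Section~\ref{se:geometry}: the Brenier map between two absolutely continuous product measures is itself of product form, acting block-by-block, and pushing a product measure forward through such a block-diagonal map again yields a product measure, so the displacement interpolation stays inside $\P_{2,ac}^{\otimes d}(\R^k)$.

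First I would fix $\nu_0,\nu_1 \in \P_{2,ac}^{\otimes d}(\R^k)$ and write them as $\nu_0 = \bigotimes_{i=1}^d \nu_0^i$ and $\nu_1 = \bigotimes_{i=1}^d \nu_1^i$. Since a product measure on $\R^k$ has a Lebesgue density if and only if each factor does, and finiteness of its second moment forces finiteness of the second moment of each factor, we get $\nu_0^i,\nu_1^i \in \P_{2,ac}(\R^{k_i})$ for every $i$. Hence Brenier's theorem applied in each block produces a convex $\varphi^i$ on $\R^{k_i}$ whose gradient $S^i := \nabla\varphi^i$ is the Brenier map from $\nu_0^i$ to $\nu_1^i$. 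Setting $\Phi(x) := \sum_{i=1}^d \varphi^i(x^i)$, which is convex as a sum of convex functions of separate blocks, the map $S := \nabla\Phi$ satisfies $S(x) = (S^1(x^1),\ldots,S^d(x^d))$. Because $S$ acts block-by-block and $\nu_0$ is a product, its pushforward factorizes: $S_\#\nu_0 = \bigotimes_i S^i_\#\nu_0^i = \bigotimes_i \nu_1^i = \nu_1$. By the characterization of the Brenier map as the \emph{unique} gradient of a convex function transporting $\nu_0$ to $\nu_1$ (recalled just after Brenier's theorem above), this forces $S = T_{\nu_0\to\nu_1}$.

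Then I would pass to the displacement interpolation. For $t \in [0,1]$ the map $x \mapsto (1-t)x + tS(x)$ again acts block-by-block, sending $x^i$ to $(1-t)x^i + tS^i(x^i)$, so pushing $\nu_0$ forward through it factorizes and gives $\nu_t = \bigotimes_{i=1}^d \nu_t^i$, where each $\nu_t^i$ is exactly the displacement interpolation between $\nu_0^i$ and $\nu_1^i$ inside $\P_2(\R^{k_i})$. Each $\nu_t^i$ has finite second moment (e.g.\ since $\W_2$ is a metric and $t\mapsto \W_2(\nu_0^i,\nu_t^i)$ is continuous), and each $\nu_t^i$ is absolutely continuous for $t<1$ by the standard fact that $(1-t)\mathrm{id} + t\nabla\varphi^i$ is the gradient of a strongly convex function and hence has a Lipschitz inverse, so it sends Lebesgue-null sets to null sets; the endpoints $t\in\{0,1\}$ are in $\P_{2,ac}^{\otimes d}(\R^k)$ by hypothesis. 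Therefore $\nu_t \in \P_{2,ac}^{\otimes d}(\R^k)$ for all $t$, which is the claim.

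I expect the only genuine content to be the identification of $T_{\nu_0\to\nu_1}$ through the ``gradient of a convex function'' characterization; the factorization of pushforwards under block-diagonal maps and the preservation of absolute continuity along interpolations are routine facts from optimal transport. The one bookkeeping point to state carefully is that absolute continuity and finite moments of a product measure are inherited by each of its factors, since that is what licenses the block-wise application of Brenier's theorem.
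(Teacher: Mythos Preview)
Your proof is correct and follows the same approach as the paper: the paper establishes the product structure of the Brenier map (Lemma~\ref{product structure of Brenier Maps}) via the ``gradient of a convex function'' characterization exactly as you do, and then remarks that Proposition~\ref{geodesic convexity of product measures} ``follows quickly from it.'' Your argument simply fills in the routine details of that last step (factorization of the interpolation, preservation of absolute continuity and second moments), which the paper leaves implicit by deferring to the cited reference.
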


The proof given in \cite{lacker2023IndependentProjection} treats the case of blocks of equal dimension but clearly generalizes to blocks of different dimension.
The next theorem collects known results on the geodesic convexity of three key functionals on $\P_2(\R^k)$:
\begin{enumerate}
\item The differential entropy $h(\mu)$, defined in \eqref{def:differentialentropy}.
\item The \emph{potential energy} associated with the convex function  $\psi : \R^k \to \R$.
\begin{equation}
\Psi(\mu)= \int_{\R^k} \psi(x)\,\mu(dx). \label{def:Psi}
\end{equation}
\item The relative entropy $H(\mu\,|\,\rho)$, which can be written as $H(\mu\,|\,\rho)= \Psi(\mu)+h(\mu)$ when $\rho(dx)=e^{-\psi(x)}dx$ is a probability measure.
\end{enumerate}

\begin{theorem} \label{th:geoconvex-examples} \cite[Theorem 5.15]{villani2021topics}
\begin{enumerate}
    \item The differential entropy $h$ is geodesically convex.
    \item If $\psi : \R^k \to \R$ is (strictly) $\lambda$-convex with $\lambda\geq0$, then the functional $\Psi$ defined in \eqref{def:Psi} is (strictly) $\lambda$-geodesically convex.    
    \item If $\psi$ is proper and (strictly) $\lambda$-convex with $\lambda\geq0$, and $\rho(x)=e^{-\psi(x)}$ defines a probability density, then the functional $\mu \mapsto H(\mu\,\|\,\rho)$ is (strictly) $\lambda$-geodesically convex.
\end{enumerate}
\end{theorem}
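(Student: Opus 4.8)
The plan is to establish these three classical instances of McCann's displacement convexity theory \cite{mccann1997convexity} --- recorded as \cite[Theorem 5.15]{villani2021topics} --- in the order (2), (1), (3), deducing (3) from the first two by addition. For (2), fix absolutely continuous $\mu_0,\mu_1 \in \P_{2,ac}(\R^k)$, let $T := T_{\mu_0\to\mu_1}$ be the Brenier map, and let $\mu_t$ be the pushforward of $\mu_0$ under $x\mapsto (1-t)x + tT(x)$, so that $\Psi(\mu_t) = \int_{\R^k}\psi\big((1-t)x + tT(x)\big)\,\mu_0(dx)$. Applying $\lambda$-convexity of $\psi$ pointwise inside the integral and integrating against $\mu_0$ gives
\[
\Psi(\mu_t) \le (1-t)\Psi(\mu_0) + t\,\Psi(\mu_1) - \tfrac{\lambda}{2}\,t(1-t)\int_{\R^k}|T(x)-x|^2\,\mu_0(dx),
\]
and the final integral equals $\W_2^2(\mu_0,\mu_1)$ by optimality of $T$; strictness follows when $\psi$ is strictly $\lambda$-convex and $\mu_0\ne\mu_1$, since then $\{T(x)\ne x\}$ has positive $\mu_0$-measure.

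Part (1) is the analytic core. Set $S := \nabla T$, which exists $\mu_0$-a.e.\ by Alexandrov's theorem and is symmetric and positive semidefinite because $T$ is the gradient of a convex function (Brenier). The Monge--Ampère change of variables along the interpolation reads $\mu_t\big((1-t)x + tT(x)\big)\,\det\big((1-t)I + tS(x)\big) = \mu_0(x)$ for $\mu_0$-a.e.\ $x$, and substituting into the definition of $h$ yields
\[
h(\mu_t) = \int_{\R^k}\mu_0(x)\log\frac{\mu_0(x)}{\det\big((1-t)I + tS(x)\big)}\,dx = h(\mu_0) - \int_{\R^k}\log\det\big((1-t)I + tS(x)\big)\,\mu_0(dx).
\]
For fixed $x$, with $s_1,\dots,s_k\ge 0$ the eigenvalues of $S(x)$, one has $\log\det\big((1-t)I + tS(x)\big) = \sum_{j}\log\big((1-t)+ts_j\big)$, and each summand is concave in $t\in[0,1]$; hence $t\mapsto -\log\det\big((1-t)I + tS(x)\big)$ is convex, and convexity is preserved by integration, so $t\mapsto h(\mu_t)$ is convex, i.e.\ $h$ is geodesically convex.

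Finally, (3) follows from $H(\mu\,\|\,\rho) = \Psi(\mu) + h(\mu)$ (valid since $\rho = e^{-\psi}$ is a probability density): this is the sum of a $\lambda$-geodesically convex functional and a $0$-geodesically convex functional, hence $\lambda$-geodesically convex, and strictly so when $\psi$ is strictly $\lambda$-convex. When one of the endpoints has infinite differential entropy the convexity inequality is trivial, and otherwise the standard lower bound on $h$ in terms of the second moment ensures no integral is an indeterminate $\infty-\infty$. I expect the main obstacle to be the rigorous justification of the Monge--Ampère identity underlying (1) --- that $\nabla T$ is defined $\mu_0$-a.e.\ with the stated properties, that the Jacobian $\det\big((1-t)I + tS(x)\big)$ does not vanish for $t\in[0,1)$, and that all integrals are meaningful; these facts belong to the regularity theory of optimal transport maps, and I would import them from \cite{villani2021topics} rather than reprove them.
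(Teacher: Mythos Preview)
The paper does not prove this theorem at all; it simply records the statement with a citation to \cite[Theorem 5.15]{villani2021topics}. Your sketch is essentially the classical McCann displacement convexity argument that underlies that cited result, so there is nothing to compare against and your approach is the expected one.
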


Theorem \ref{th:geoconvex-examples}(3) and Proposition \ref{geodesic convexity of product measures} show the precise sense in which the MFVI problem \eqref{MFVI-P2} is a (strongly) convex optimization problem, when $\psi$ is (strongly) convex: It is the minimization of a geodesically convex functional on a geodesically convex set.

We next explain the product structure of optimal transport maps on $\P_2^{\otimes d}(\R^k)$ and a closely related tensorization identity for the Wasserstein distance between product measures. This is known and straightforward, but we report it for completeness, as it will be used repeatedly throughout the paper.
Recall that for a product measure $\nu$ we write $\nu^1,\ldots,\nu^d$ for its marginals, so that $\nu = \nu^1 \otimes \cdots \otimes \nu^d$.

\begin{lemma}
\label{product structure of Brenier Maps}
For $\nu, \mu \in \P_{2,ac}^{\otimes d}(\R^k)$,  we have
\[T_{\nu \to \mu}(x)= (T_{\nu^1 \to \mu^1}(x^1),\ldots,T_{\nu^d\to\mu^d}(x^d)),\]
as well as
\[
\W_2^2(\mu,\nu) = \sum_{i=1}^d\W_2^2(\mu^i,\nu^i).
\]
\end{lemma}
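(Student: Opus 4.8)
The plan is to prove both claims in Lemma~\ref{product structure of Brenier Maps} by reducing the Wasserstein problem over $\P_{2,ac}^{\otimes d}(\R^k)$ to the problems over the individual blocks, exploiting the product structure and Brenier's uniqueness. First I would establish the tensorization of the distance by a two-sided inequality. For the ``$\le$'' direction: given optimal couplings $\pi^i$ of $(\nu^i,\mu^i)$ for each block, the product coupling $\pi := \pi^1 \otimes \cdots \otimes \pi^d$ is a coupling of $\nu$ and $\mu$ (since $\nu$ and $\mu$ are themselves products), and $\int |x-y|^2\,\pi(dx,dy) = \sum_i \int |x^i-y^i|^2\,\pi^i(dx^i,dy^i) = \sum_i \W_2^2(\mu^i,\nu^i)$, which gives $\W_2^2(\mu,\nu) \le \sum_i \W_2^2(\mu^i,\nu^i)$. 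For the ``$\ge$'' direction: any coupling $\pi$ of $\nu$ and $\mu$ on $\R^k \times \R^k$ projects, via the $i$-th block coordinates on each side, to some coupling $\pi^i$ of $\nu^i$ and $\mu^i$; then $\int |x-y|^2\,\pi(dx,dy) = \sum_i \int |x^i-y^i|^2\,\pi^i(\cdot) \ge \sum_i \W_2^2(\mu^i,\nu^i)$, and taking the infimum over $\pi$ finishes. This yields the tensorization identity.

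Next I would identify the Brenier map. Consider the candidate map $S(x) := (T_{\nu^1\to\mu^1}(x^1),\ldots,T_{\nu^d\to\mu^d}(x^d))$. Since each $T_{\nu^i\to\mu^i}$ pushes $\nu^i$ forward to $\mu^i$, the product map $S$ pushes $\nu = \nu^1\otimes\cdots\otimes\nu^d$ forward to $\mu^1\otimes\cdots\otimes\mu^d = \mu$. Moreover, the coupling $(\mathrm{id},S)_\#\nu$ has cost $\int |x-S(x)|^2\,\nu(dx) = \sum_i \int |x^i - T_{\nu^i\to\mu^i}(x^i)|^2\,\nu^i(dx^i) = \sum_i \W_2^2(\mu^i,\nu^i) = \W_2^2(\mu,\nu)$ by the tensorization identity just proved. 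Hence $S$ induces an optimal coupling, so by uniqueness of the Brenier map from the absolutely continuous measure $\nu$ (Brenier's theorem, invoked in the excerpt via \cite[Theorem 2.12]{villani2021topics}), we conclude $T_{\nu\to\mu} = S$ $\nu$-a.e. Alternatively, one can argue directly that each $T_{\nu^i\to\mu^i}$ is the gradient of a convex function $\varphi_i$ on $\R^{k_i}$, so $S = \nabla\varphi$ with $\varphi(x) = \sum_i \varphi_i(x^i)$ convex, and then invoke the characterization of the Brenier map as the unique gradient of a convex function transporting $\nu$ to $\mu$; I would mention this as a remark but lead with the cost-minimization argument since it is cleanest.

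There is no serious obstacle here — this is a standard tensorization fact — but the one point requiring a little care is the well-definedness of the projected couplings $\pi^i$ in the ``$\ge$'' direction and, dually, the measurability and push-forward property of the product map $S$; both are routine once one writes the marginal projections explicitly. I would also note at the outset that absolute continuity of $\nu$ (and each $\nu^i$, which follows since a product measure is absolutely continuous iff each marginal is) is exactly what guarantees existence and uniqueness of all the maps involved, so the statement is internally consistent. The whole proof is short, so I would present it in full rather than sketch it.
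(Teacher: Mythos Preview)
Your argument is correct, but the paper proceeds in the reverse order and via the alternative you relegate to a remark. The paper's proof goes straight to the gradient-of-convex characterization: write each $T_{\nu^i\to\mu^i}=\nabla\phi_i$ with $\phi_i$ convex, observe that the product map $S$ pushes $\nu$ to $\mu$ and equals $\nabla\big(\sum_i\phi_i(x^i)\big)$, and invoke the uniqueness part of Brenier's theorem to conclude $T_{\nu\to\mu}=S$. The tensorization identity for $\W_2^2$ is then read off as an immediate corollary, with no separate two-sided coupling argument needed. Your route---tensorization first via product and projected couplings, then optimality of $S$ by matching costs---is a bit longer but has the mild advantage of not requiring the reader to recall both characterizations of the Brenier map; the paper's route is shorter and makes the convex-potential structure (which is what drives the geodesic convexity of $\P_{2,ac}^{\otimes d}$ used elsewhere) completely explicit.
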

\begin{proof}
The second claim follows immediately from the first.
By Brenier's theorem, we may write $T_{\nu^i \to \mu^i}(x^i) = \nabla\phi_i (x^i)$, with $\phi_i$ being a convex function. 
    Clearly $x \mapsto (T_{\nu^i \to \mu^i}(x^i))_{i=1}^d$ pushes $\nu$ forward to $\mu$, and it is the gradient of the convex function $x \mapsto \sum_{i=1}^d  \phi_i (x^i)$.
Conclude by the characterization of Brenier map as the unique gradient of a convex function that pushes $\nu$ forward to $\mu$.
\end{proof}

Lemma \ref{product structure of Brenier Maps} shows that the geodesic convexity of $\P_{2,ac}^{\otimes d}(\R^k)$ is reflected on the structure of the Brenier map between two product measures: the $i$th entry of the vector $T_{\nu\to\mu}(x)$ depends only on the $i$th variable $x^i$.
The above argument is taken from \cite[Proposition 3.3]{lacker2023IndependentProjection}, and in fact Proposition \ref{geodesic convexity of product measures} follows quickly from it.

\subsection{Subdifferential calculus}
We next recall some notions of subdifferential calculus in Wasserstein space, borrowed from the classic text \cite{Ambrosio2008-dt}.
\begin{definition}
Let $\Phi:\P_{2}(\R^k)\to \R \cup \{\infty\}$ be proper and lower semi-continuous in the $\mathbb{W}_2$-topology.
    Fix $\nu \in \P_{2,ac}(\R^k)$ such that $|\Phi(\nu)|<\infty$. A vector $\xi \in L^2(\nu)$ is in the \emph{subdifferential} of $\Phi$ at $\nu$ if for every $\mu \in \P_{2,ac}(\R^k)$ we have
    \[\Phi(\mu)-\Phi(\nu)\geq \int_{{\R^k}} \left\langle\xi(x),T_{\nu \to \mu}(x)-x\right\rangle \,\nu(dx) + o(\W_2(\mu,\nu)).\]
    Or equivalently,
    \[\liminf_{\mu \to \nu}\frac{\Phi(\mu)-\Phi(\nu) -\int_{{\R^k}} \left\langle\xi(x),T_{\nu \to \mu}(x)-x\right\rangle \,\nu(dx)}{\W_2(\mu,\nu)}\geq 0,\]
    where the limit is understood in the $\W_2$ sense.
\end{definition}

We will denote by $\wgrad \Phi(\mu,x)$ the $L^2(\nu)$-unique function in the subdifferential of $\Phi$ that solves the \emph{minimal selection problem}
\[\wgrad \Phi(\mu,\cdot) \in \argmin \|\xi\|_{L^2(\nu)},\]
where the argmin is taken over all $\xi$ in the subdifferential of $\Phi$ at $\nu$. For a proof of uniqueness, see \cite[Lemma 10.1.5]{Ambrosio2008-dt}. Notice that this functional is guaranteed to exist as long as $\Phi$ has nonempty subdifferential at $\nu$. We will often call $\wgrad \Phi$ the \emph{Wasserstein gradient} of $\Phi$. 

Proving that a functional has nonempty subdifferential is often difficult. The next three theorems collect  the known results that we will use, which pertain to the same three key functionals discussed in the previous section.

\begin{theorem} \cite[Proposition 10.4.2]{Ambrosio2008-dt}
\label{differentiability of potential energy}
Let $\psi:\R^k \to \R$ be convex, and define $\Psi:\P_2(\R^k)\to\R$ by \eqref{def:Psi}.
Fix $\mu \in \P_{2,ac}(\R^k)$ such that $\psi \in L^1(\mu)$. Then $\Psi$ has non empty subdifferential at $\mu$ if and only if $\nabla \psi \in L^2(\mu)$, and in particular $\wgrad \Psi(\mu,x)=\nabla \psi(x)$, where $\nabla \psi$ is the weak gradient. 
\end{theorem}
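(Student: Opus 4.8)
The plan is to verify directly that $\nabla\psi\in\partial\Psi(\mu)$ whenever $\nabla\psi\in L^2(\mu)$ — which already yields the ``if'' direction — and then to obtain the ``only if'' direction together with the identification $\wgrad\Psi(\mu,\cdot)=\nabla\psi$ by computing the metric slope $|\partial\Psi|(\mu)$. Throughout I would invoke the standard fact \cite[Ch.~10]{Ambrosio2008-dt} that for a proper, lower semicontinuous, $\lambda$-geodesically convex functional, the subdifferential at a point of its domain is nonempty if and only if the slope there is finite, and in that case the slope equals the $L^2(\mu)$-norm of the minimal selection.

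For the ``if'' direction, assume $\nabla\psi\in L^2(\mu)$ and let $\mu'\in\P_{2,ac}(\R^k)$ be arbitrary with Brenier map $T:=T_{\mu\to\mu'}$. A finite convex function on $\R^k$ is locally Lipschitz and differentiable Lebesgue-a.e., hence $\mu$-a.e.\ since $\mu\ll\mathrm{Leb}$, with weak gradient equal to the classical one, so the pointwise inequality $\psi(T(x))\ge\psi(x)+\langle\nabla\psi(x),T(x)-x\rangle$ holds for $\mu$-a.e.\ $x$; its right-hand side is $\mu$-integrable by Cauchy--Schwarz because $T-\mathrm{id}\in L^2(\mu)$. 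Integrating and using $\Psi(\mu')=\int\psi\circ T\,d\mu$ gives $\Psi(\mu')-\Psi(\mu)\ge\int_{\R^k}\langle\nabla\psi(x),T(x)-x\rangle\,\mu(dx)$, which is the defining inequality for $\nabla\psi\in\partial\Psi(\mu)$ with no $o(\W_2)$ remainder. Hence $\partial\Psi(\mu)\neq\emptyset$ and $\|\wgrad\Psi(\mu,\cdot)\|_{L^2(\mu)}\le\|\nabla\psi\|_{L^2(\mu)}$.

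Next I would prove the slope lower bound $|\partial\Psi|(\mu)\ge\|\nabla\psi\|_{L^2(\mu)}$ (value $+\infty$ allowed) for any $\mu\in\P_{2,ac}(\R^k)$ with $\psi\in L^1(\mu)$. Fix $n$, put $u_n:=\nabla\psi\,\mathbf{1}_{B_n}$ (bounded, by local Lipschitzness), and set $\nu_\epsilon:=(\mathrm{id}-\epsilon u_n)_\#\mu$ for small $\epsilon>0$. Since $(\mathrm{id},\mathrm{id}-\epsilon u_n)_\#\mu$ is a coupling, $\W_2(\mu,\nu_\epsilon)\le\epsilon\|u_n\|_{L^2(\mu)}$ — importantly, $\mathrm{id}-\epsilon u_n$ need not be a Brenier map, because the slope is tested against \emph{arbitrary} nearby measures. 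Convexity gives $\psi(x)-\psi(x-\epsilon u_n(x))\ge\langle g_{\epsilon,x},\epsilon u_n(x)\rangle$ for any subgradient $g_{\epsilon,x}\in\partial\psi(x-\epsilon u_n(x))$, these subgradients converge to $\nabla\psi(x)$ at $\mu$-a.e.\ $x$ as $\epsilon\downarrow0$, and the difference quotients $\frac{\psi(x)-\psi(x-\epsilon u_n(x))}{\epsilon}$ are bounded below by a fixed bounded compactly supported function, so Fatou's lemma yields $\liminf_{\epsilon\downarrow0}\frac{\Psi(\mu)-\Psi(\nu_\epsilon)}{\epsilon}\ge\int_{B_n}|\nabla\psi|^2\,d\mu$. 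Dividing by $\W_2(\mu,\nu_\epsilon)$ and sending $\epsilon\downarrow0$ gives $|\partial\Psi|(\mu)\ge\|u_n\|_{L^2(\mu)}$, and then $n\to\infty$. Combining: if $\partial\Psi(\mu)\neq\emptyset$, any $\xi\in\partial\Psi(\mu)$ satisfies $\|\nabla\psi\|_{L^2(\mu)}\le|\partial\Psi|(\mu)\le\|\xi\|_{L^2(\mu)}<\infty$, so $\nabla\psi\in L^2(\mu)$; conversely the ``if'' step gives $\nabla\psi\in L^2(\mu)\Rightarrow\partial\Psi(\mu)\neq\emptyset$. Finally, once $\nabla\psi\in L^2(\mu)$, the chain $\|\wgrad\Psi(\mu,\cdot)\|\le\|\nabla\psi\|\le|\partial\Psi|(\mu)=\|\wgrad\Psi(\mu,\cdot)\|$ is forced to be an equality, and uniqueness of the minimal selection \cite[Lemma 10.1.5]{Ambrosio2008-dt} gives $\wgrad\Psi(\mu,\cdot)=\nabla\psi$.

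I expect the slope lower bound to be the one part requiring genuine work: it is the only place that does not reduce to quoting convex analysis or the structural theory of \cite{Ambrosio2008-dt}, precisely because it exploits that the metric slope — unlike the Wasserstein subdifferential inequality — may be probed with non-geodesic competitors such as $\mathrm{id}-\epsilon\nabla\psi\,\mathbf{1}_{B_n}$, and it needs the spatial truncation $\mathbf{1}_{B_n}$ together with a careful Fatou/monotone-convergence argument to absorb the non-differentiability of $\psi$ on a Lebesgue-null set. The remaining ingredients — local Lipschitzness and a.e.\ differentiability of finite convex functions, upper semicontinuity of $\partial\psi$, and the slope/subdifferential/minimal-selection identities for geodesically convex functionals — are standard.
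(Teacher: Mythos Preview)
The paper does not supply its own proof of this statement; it is quoted verbatim as \cite[Proposition 10.4.2]{Ambrosio2008-dt} and used as a black box. Your proposal is a correct reconstruction of that argument: the ``if'' direction via the pointwise convexity inequality integrated against the Brenier coupling, and the ``only if'' direction plus the identification of the minimal selection via the metric slope, tested against the truncated perturbations $(\mathrm{id}-\epsilon\nabla\psi\,\mathbf{1}_{B_n})_\#\mu$, with the equality $|\partial\Psi|(\mu)=\|\wgrad\Psi(\mu,\cdot)\|_{L^2(\mu)}$ for geodesically convex functionals closing the loop. This is precisely the route taken in \cite{Ambrosio2008-dt}, so there is nothing to compare.

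One minor sharpening you could make explicit: for $x\in B_n$ the difference quotients $\epsilon\mapsto\frac{\psi(x)-\psi(x-\epsilon u_n(x))}{\epsilon}$ are in fact monotone nondecreasing as $\epsilon\downarrow 0$ by convexity of $t\mapsto\psi(x-tu_n(x))$, so monotone convergence can replace your Fatou step and the auxiliary lower bound becomes unnecessary.
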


The differential entropy $h$ must be handled with more care, and this is a recurring technical obstacle throughout this article. Recall that we abuse notation by using the same letter for both a measure $\mu$ and its density.

\begin{theorem} \cite[Theorem 10.4.6]{Ambrosio2008-dt}
\label{differentiability of internal energy}
If $\mu \in \P_{2,ac}(\R^k)$ has a positive, weakly differentiable density with $\nabla \log \mu \in L^2(\mu)$, then $h$ has nonempty subdifferential at $\mu$, and $\wgrad h(\mu,\cdot)=\nabla \log \mu(\cdot)$.
\end{theorem}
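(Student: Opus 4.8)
The plan is to verify the defining subgradient inequality directly, with base point $\mu$, avoiding any differentiation along geodesics, and then to identify $\nabla\log\mu$ as the minimal-norm subgradient. Before starting I would record the consequences of the hypotheses: since $\mu\in\P_{2,ac}(\R^k)$ has finite second moment and finite Fisher information $\int|\nabla\log\mu|^2\,d\mu<\infty$, the entropy $h(\mu)$ is finite (a standard fact), so the subdifferential of $h$ at $\mu$ is meaningfully defined; moreover $\nabla\mu=\mu\,\nabla\log\mu\in L^1(\R^k;\R^k)$ by Cauchy--Schwarz, and for any competitor $\nu\in\P_{2,ac}(\R^k)$ the Brenier map $T_{\mu\to\nu}$ lies in $L^2(\mu)$, so the pairing $\int\langle\nabla\log\mu,\,T_{\mu\to\nu}-\mathrm{id}\rangle\,d\mu$ is a well-defined finite quantity.

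The heart of the argument is the change-of-variables identity for the entropy. Writing $T_{\mu\to\nu}=\nabla\phi$ for a convex $\phi$ (Brenier), the Monge--Amp\`ere equation for optimal maps between absolutely continuous measures (McCann; see \cite{villani2021topics}) gives, for $\mu$-a.e.\ $x$, that the Alexandrov Hessian $D^2\phi(x)$ exists, is positive definite, and satisfies $\mu(x)=\nu(\nabla\phi(x))\det D^2\phi(x)$. Substituting $y=\nabla\phi(x)$ in $h(\nu)=\int\nu\log\nu\,dy$ yields
\[
h(\nu)-h(\mu)=-\int_{\R^k}\mu(x)\,\log\det D^2\phi(x)\,dx .
\]
Applying the elementary inequality $\log\det M\le\mathrm{tr}(M)-k$ (sum over eigenvalues of $\log\lambda_i\le\lambda_i-1$) with $M=D^2\phi(x)$, and then using that for a convex function the distributional Laplacian dominates the pointwise trace of the Alexandrov Hessian, $\mathrm{tr}\,D^2\phi(x)\,dx\le\Delta\phi$ as measures, together with $\mu\ge0$, we obtain
\[
h(\nu)-h(\mu)\ \ge\ \int_{\R^k}\mu(x)\big(k-\mathrm{tr}\,D^2\phi(x)\big)\,dx\ \ge\ k-\int_{\R^k}\mu\,d(\Delta\phi).
\]

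Two integrations by parts — using $\Delta(\tfrac12|x|^2)=k$ for the first term and the definition of the distributional Laplacian $\Delta\phi=\mathrm{div}(\nabla\phi)$ for the second, with the boundary terms at infinity killed by a cutoff argument since $\mu|x|$, $\mu|\nabla\phi|$ and $|\nabla\mu|$ are all integrable — convert the right-hand side into
\[
k-\int_{\R^k}\mu\,d(\Delta\phi)=\int_{\R^k}\langle\nabla\log\mu,\ \nabla\phi-\mathrm{id}\rangle\,d\mu=\int_{\R^k}\langle\nabla\log\mu,\ T_{\mu\to\nu}-\mathrm{id}\rangle\,d\mu .
\]
Hence $h(\nu)-h(\mu)\ge\int\langle\nabla\log\mu,\,T_{\mu\to\nu}-\mathrm{id}\rangle\,d\mu$ with \emph{no} remainder term, which is stronger than the $o(\W_2(\mu,\nu))$ form required; thus $\nabla\log\mu$ belongs to the subdifferential of $h$ at $\mu$. (One may equivalently read this as the computation that the right derivative of $t\mapsto h(\mu_t)$ along the displacement geodesic from $\mu$ to $\nu$ equals this pairing, combined with the displacement convexity of $h$ from Theorem \ref{th:geoconvex-examples}(1).) To upgrade ``belongs to the subdifferential'' to ``equals the minimal selection $\wgrad h(\mu,\cdot)$'', I would invoke the structure theory in \cite{Ambrosio2008-dt}: the minimal-norm element of the subdifferential of a geodesically convex functional always lies in the tangent cone $\overline{\{\nabla\varphi:\varphi\in C_c^\infty(\R^k)\}}^{L^2(\mu)}$; since $\nabla\log\mu\in L^2(\mu)$ is itself a gradient it can be approximated there by gradients of smooth compactly supported functions (truncate and mollify $\log\mu$), so it lies in this cone and must coincide with $\wgrad h(\mu,\cdot)$.

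The main obstacle I anticipate is the rigorous justification of the change-of-variables identity and of the two integrations by parts under the weak regularity assumed: one must use the Monge--Amp\`ere equation in its almost-everywhere/Alexandrov form (the monotone map $\nabla\phi$ is only twice differentiable in Alexandrov's sense, and $\det D^2\phi>0$ holds only $\mu$-a.e.), compare carefully the pointwise trace of $D^2\phi$ with the distributional Laplacian $\Delta\phi$ — which may carry a nonnegative singular part, entering with the favorable sign — and control the boundary terms at infinity having only the integrability $\nabla\log\mu\in L^2(\mu)$ and finiteness of moments at one's disposal, rather than any pointwise decay of $\mu$ or growth bound on $\nabla\phi$.
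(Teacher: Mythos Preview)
The paper does not give a proof of this statement; it is quoted as a known result from \cite[Theorem 10.4.6]{Ambrosio2008-dt} and invoked as a black box throughout Sections \ref{CAVI ALGORITHM SECTION}--\ref{strongly convex}. There is therefore nothing in the paper to compare your argument against.

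On its own merits, your outline is the standard McCann/Ambrosio--Gigli--Savar\'e route: Monge--Amp\`ere change of variables, the inequality $\log\det M\le\mathrm{tr}\,M-k$, comparison of the pointwise Alexandrov trace with the distributional Laplacian, and integration by parts. This is essentially how the cited reference proceeds, and you have correctly identified the genuine technical obstacles (Alexandrov second differentiability, the nonnegative singular part of $\Delta\phi$, and killing the boundary terms with only $\nabla\log\mu\in L^2(\mu)$ and finite second moment available). One step deserves tightening: your minimal-selection argument --- ``$\nabla\log\mu$ is a gradient, hence in the tangent cone, hence equals $\wgrad h$'' --- is incomplete as written, since lying in the tangent space is necessary for the minimal element but not by itself sufficient. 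What actually closes the gap is that for \emph{any} subgradient $\xi$ and any $\varphi\in C_c^\infty(\R^k)$, perturbing along $T_t=\mathrm{id}+t\nabla\varphi$ and sending $t\to 0^{\pm}$ forces $\int\langle\xi,\nabla\varphi\rangle\,d\mu=\int\langle\nabla\log\mu,\nabla\varphi\rangle\,d\mu$; thus every subgradient has tangential projection equal to $\nabla\log\mu$, and minimality follows because $\nabla\log\mu$ is its own projection.
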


We will also use an analogous result for relative entropy.
\begin{theorem} \cite[Theorem 10.4.9]{Ambrosio2008-dt}
\label{differentiability of relative internal energy}
Let $\nu,\mu \in \P_{2,ac}(\R^k)$. 
If $\mu\slash\nu$ is positive and weakly differentiable with $\nabla \log (\mu\slash\nu)\in L^2 (\mu)$, then the functional $H(\cdot\,\|\,\nu)$ has nonempty subdifferential at $\mu$, and it holds that $\wgrad H(\mu\,\|\,\nu)=\nabla \log (\mu\slash\nu)$.
\end{theorem}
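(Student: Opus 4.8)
Since this is \cite[Theorem 10.4.9]{Ambrosio2008-dt}, I would mainly recall the structure of the argument rather than reprove it from scratch. For the reference measures we actually need, namely $\nu=\rho=e^{-\psi}$ with $\psi$ convex, the cleanest route is a reduction to the two preceding theorems. I would write $H(\cdot\,\|\,\nu)=\Psi+h$ with $\psi=-\log\nu$ and $\Psi(\mu)=\int\psi\,d\mu$, and work under the (for us harmless) additional hypothesis $\nabla\psi\in L^2(\mu)$ --- automatic whenever $\psi$ grows polynomially and $\mu$ has all moments, as in our applications --- which together with $\nabla\log(\mu\slash\nu)\in L^2(\mu)$ forces $\nabla\log\mu=\nabla\log(\mu\slash\nu)-\nabla\psi\in L^2(\mu)$ too. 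Then Theorem~\ref{differentiability of potential energy} puts $\nabla\psi$ in the subdifferential of $\Psi$ at $\mu$, Theorem~\ref{differentiability of internal energy} puts $\nabla\log\mu$ in the subdifferential of $h$ at $\mu$, and adding the two subdifferential inequalities (the $o(\W_2)$ remainders simply add) shows $\nabla\log\mu+\nabla\psi=\nabla\log(\mu\slash\nu)$ lies in the subdifferential of $H(\cdot\,\|\,\nu)$ at $\mu$. Finally, since $\nabla\log(\mu\slash\nu)$ is a gradient with $L^2(\mu)$ integrability, it lies in the tangent space $\overline{\{\nabla\varphi:\varphi\in C_c^\infty(\R^k)\}}^{L^2(\mu)}$ (by mollification), and the minimal-norm subgradient is the unique tangent one \cite[Lemma 10.1.5]{Ambrosio2008-dt}; so I would conclude $\wgrad H(\mu\,\|\,\nu)=\nabla\log(\mu\slash\nu)$.

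For a general $\nu\in\P_{2,ac}(\R^k)$, where $-\log\nu$ need not be convex and this shortcut is unavailable, I would argue directly along the lines of \cite[Section 10.4]{Ambrosio2008-dt}. First I would establish the first-variation identity: for $\zeta\in C_c^\infty(\R^k;\R^k)$ and $\mu_t:=(\mathrm{id}+t\zeta)_\#\mu$, the change of variables $y=(\mathrm{id}+t\zeta)(x)$ and the expansion $\det(I+t\nabla\zeta)=1+t\,\mathrm{div}\,\zeta+o(t)$ give
\[
\frac{d}{dt}\Big|_{t=0}H(\mu_t\,\|\,\nu)=-\int_{\R^k}\mathrm{div}\,\zeta\,d\mu-\int_{\R^k}\langle\nabla\log\nu,\zeta\rangle\,d\mu=\int_{\R^k}\langle\nabla\log(\mu\slash\nu),\zeta\rangle\,d\mu,
\]
the last equality being an integration by parts using weak differentiability of the density of $\mu$. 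Then I would upgrade this infinitesimal statement to the subdifferential inequality against an arbitrary target $\mu'$: along the geodesic $\mu_t=(\mathrm{id}+t(T_{\mu\to\mu'}-\mathrm{id}))_\#\mu$ one has a one-sided displacement-convexity estimate for the relative internal energy (a consequence of the change-of-variables formula applied with the convex Brenier potential), so that $H(\mu'\,\|\,\nu)-H(\mu\,\|\,\nu)$ exceeds $\frac{d}{dt}\big|_{0^+}H(\mu_t\,\|\,\nu)$ up to an $O(\W_2^2(\mu,\mu'))=o(\W_2(\mu,\mu'))$ error; and that one-sided derivative would be identified with $\int_{\R^k}\langle\nabla\log(\mu\slash\nu),T_{\mu\to\mu'}-\mathrm{id}\rangle\,d\mu$ by approximating $T_{\mu\to\mu'}-\mathrm{id}$ in $L^2(\mu)$ by gradient fields and passing to the limit, which is legitimate exactly because $\nabla\log(\mu\slash\nu)\in L^2(\mu)$. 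Minimality would then follow as before.

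The hard part is the rigor behind this second step: the transport maps are only defined $\mu$-a.e., $-\log\nu$ is merely measurable, and one must control $H(\mu_t\,\|\,\nu)$ uniformly for small $t$ and interchange limit and integral --- which is precisely where lower semicontinuity of $H(\cdot\,\|\,\nu)$, the displacement-convexity estimates for integral functionals, and the standing hypothesis $\nabla\log(\mu\slash\nu)\in L^2(\mu)$ are needed, and why the general statement genuinely requires the machinery of \cite[Chapter 10]{Ambrosio2008-dt}. In the log-concave case we actually use, all of this collapses into the clean additivity argument of the first paragraph.
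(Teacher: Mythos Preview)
The paper does not itself prove this theorem: it is stated with a citation to \cite[Theorem 10.4.9]{Ambrosio2008-dt}, followed only by the remark that whenever $h$ and $\Psi$ are separately subdifferentiable at a point, so is $H(\cdot\,\|\,\nu)$ and the Wasserstein gradients add. Your first paragraph is precisely this additivity argument, carried out correctly under the extra hypothesis $\nabla\psi\in L^2(\mu)$, which does hold in every instance where the paper invokes the theorem. So for the paper's purposes your reduction is sound and matches the paper's own viewpoint.

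Your sketch for general $\nu$, however, contains a real gap. You claim that along the geodesic from $\mu$ to $\mu'$ one has
\[
H(\mu'\,\|\,\nu)-H(\mu\,\|\,\nu)\ \ge\ \frac{d}{dt}\Big|_{0^+}H(\mu_t\,\|\,\nu)+O\big(\W_2^2(\mu,\mu')\big),
\]
calling this a ``one-sided displacement-convexity estimate.'' But for an arbitrary $\nu\in\P_{2,ac}(\R^k)$ this fails: decomposing $H(\mu_t\,\|\,\nu)=h(\mu_t)+\int(-\log\nu)\,d\mu_t$, the entropy term is displacement convex, but $t\mapsto\int(-\log\nu)\big((1-t)x+tT(x)\big)\,\mu(dx)$ is convex in $t$ only when $-\log\nu$ is convex, and with no regularity assumed on $\nu$ there is no lower bound on its second variation and hence no $O(\W_2^2)$ remainder. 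The argument in \cite{Ambrosio2008-dt} does not pass through geodesic semi-convexity of $H(\cdot\,\|\,\nu)$ at all; it works with the convexity of $s\mapsto s\log s$ and the metric-slope characterization of the subdifferential. You are right that the general case requires that machinery, but the specific route you outline---through an $O(\W_2^2)$ semi-convexity bound---is not the one that works.
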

We remark that if $h$ and $\Psi$ have non empty subdifferential at $\eta$, a fortiori $H$ has non empty subdifferential at $\eta$ and Theorem \ref{differentiability of relative internal energy} tells us that $\wgrad H(\eta\,\|\,\nu)(x)= \wgrad \Psi(\eta,x)+\wgrad h(\eta,x)$.

Putting the above machinery to use, we state a well known convexity inequality that will be used extensively in this paper:

\begin{proposition} \cite[Section 10.1.1]{Ambrosio2008-dt} 
\label{characterization of convexity with gradient}
    If $\Phi$ is $\lambda$-geodesically convex and has nonempty subdifferential at $\nu\in \P_{2,ac}(\R^k)$, then for each $\mu\in \P_{2,ac}(\R^k)$ we have
    \[\Phi(\mu)\geq \Phi(\nu)+\int_{\R^k}\left\langle\wgrad \Phi(\nu,x),T_{\nu \to \mu}(x)-x\right\rangle\,\nu(dx)+\frac{\lambda}{2}\W_2^2(\mu,\nu).\]
\end{proposition}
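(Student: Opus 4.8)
The plan is to run the standard first-order argument for geodesically convex functionals: use the displacement interpolation between $\nu$ and $\mu$ to convert the $\lambda$-geodesic convexity inequality into the claimed subdifferential bound. If $\mu=\nu$ the inequality is trivial (since $T_{\nu\to\nu}=\mathrm{id}$ $\nu$-a.e.\ and both extra terms vanish), so assume $\W_2(\mu,\nu)>0$. Write $T:=T_{\nu\to\mu}$ for the Brenier map, let $\mu_t$ be the pushforward of $\nu$ by $x\mapsto (1-t)x+tT(x)$, i.e.\ the geodesic from $\nu$ to $\mu$, and abbreviate $\xi:=\wgrad \Phi(\nu,\cdot)$.

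First I would record two elementary facts about this interpolation. Since $T=\nabla\varphi$ for a convex $\varphi$ by Brenier's theorem, the map $x\mapsto (1-t)x+tT(x)=\nabla\big((1-t)\tfrac{|\cdot|^2}{2}+t\varphi\big)$ is again the gradient of a convex function for $t\in[0,1]$, so by the uniqueness characterization of the Brenier map it equals $T_{\nu\to\mu_t}$. Consequently $T_{\nu\to\mu_t}(x)-x=t(T(x)-x)$ and $\W_2(\mu_t,\nu)=t\,\W_2(\mu,\nu)$. Moreover, because $\nu$ is absolutely continuous, $\mu_t\in\P_{2,ac}(\R^k)$ for $t\in[0,1)$ (regularity of displacement interpolants), so $\mu_t$ is an admissible test measure in the definition of the subdifferential.

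Next I would plug $\mu_t$ into the subdifferential inequality for $\xi$ at $\nu$, which reads $\Phi(\mu_t)-\Phi(\nu)\ge \int_{\R^k}\langle \xi(x),T_{\nu\to\mu_t}(x)-x\rangle\,\nu(dx)+o(\W_2(\mu_t,\nu))$. Using the two facts above, the right-hand side equals $t\int_{\R^k}\langle\xi(x),T(x)-x\rangle\,\nu(dx)+o(t)$; dividing by $t>0$ and letting $t\downarrow 0$ gives
\[\liminf_{t\to0^+}\frac{\Phi(\mu_t)-\Phi(\nu)}{t}\ \ge\ \int_{\R^k}\langle\xi(x),T(x)-x\rangle\,\nu(dx).\]
On the other hand, $\lambda$-geodesic convexity applied to the curve $(\mu_t)$ yields, for every $t\in(0,1)$,
\[\frac{\Phi(\mu_t)-\Phi(\nu)}{t}\ \le\ \Phi(\mu)-\Phi(\nu)-(1-t)\frac{\lambda}{2}\W_2^2(\mu,\nu),\]
whose right-hand side converges to $\Phi(\mu)-\Phi(\nu)-\frac{\lambda}{2}\W_2^2(\mu,\nu)$ as $t\downarrow 0$ and thus bounds the $\liminf$ on the left. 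Chaining the two displays gives $\int_{\R^k}\langle\wgrad\Phi(\nu,x),T_{\nu\to\mu}(x)-x\rangle\,\nu(dx)\le \Phi(\mu)-\Phi(\nu)-\frac{\lambda}{2}\W_2^2(\mu,\nu)$, which rearranges to the claim.

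The only genuinely delicate point is the passage to the limit in the subdifferential inequality: one must know that $\mu_t$ is absolutely continuous (so it is a legitimate test measure) and that the error $o(\W_2(\mu_t,\nu))$ is $o(t)$ along the geodesic, which is where $\W_2(\mu_t,\nu)=t\W_2(\mu,\nu)$ is used. Both facts are standard consequences of optimal transport theory — Brenier's characterization of optimal maps as gradients of convex functions and the preservation of absolute continuity along displacement interpolations — so once they are invoked the argument is pure bookkeeping.
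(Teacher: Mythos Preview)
Your argument is correct and is precisely the standard derivation one finds in the cited reference: the paper does not supply its own proof but simply invokes \cite[Section~10.1.1]{Ambrosio2008-dt}, where the inequality is obtained by exactly this combination of the subdifferential inequality along the displacement interpolation and the $\lambda$-geodesic convexity bound on the difference quotient. The only points worth flagging are the ones you already isolate---that $\mu_t$ stays absolutely continuous (so it is an admissible test measure in the paper's definition of the subdifferential) and that $T_{\nu\to\mu_t}=(1-t)\mathrm{id}+tT_{\nu\to\mu}$ via the Brenier characterization---and both are handled correctly.
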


\section{The CAVI algorithm}
\label{CAVI ALGORITHM SECTION}

In this section we introduce notation that will be used throughout the paper and prove some important integrability properties of the CAVI algorithm. In this section, we work under more general assumptions on the measurable function $\psi : \R^k \to \R$ than those of Theorem \ref{main theorem}:

\begin{itemize}
    \item[\textbf{(Q.1)}] We can find $\alpha\in \R$ and $\beta > 0$ such that $\psi(x)\geq\alpha+ \beta|x|$ almost everywhere.
    \item[\textbf{(Q.2)}] There exist $c>0$ and $p \ge 1$ such that $\psi(x) \leq c(1+|x|^p)$ almost everywhere.
    \item[\textbf{(Q.3)}] $\int_{\R^k} e^{-\psi(x)}dx =1$, so that $\rho(x) = e^{-\psi(x)}$ defines a probability density function.
\end{itemize}

\begin{remark} \label{re:convex->Q}
If $\psi$ is convex and satisfies (Q.3) then (Q.1) holds automatically  \cite[Lemma 1]{Cule2010LogConcave}.
Note that as long as $\int_{\R^k} e^{-\psi(x)}dx <\infty$ we may shift $\psi$ to make the normalization constant equal to 1 with no loss in generality.
\end{remark}

For $x=(x^1,\ldots,x^d)$ in $\R^k = \R^{k_1} \times \cdots \times \R^{k_d}$, for $i \in [d]=\{1,\ldots,d\}$, and for $y \in \R^{k_i}$, we use the notation
\begin{align*}
x^{-i} &=(x^1,\ldots,x^{i-1},x^{i+1},\ldots,x^d) \quad \, \in \R^{k-k_i} \\
(y^{i},x^{-i} )&=(x^1,\ldots,x^{i-1},y^i,x^{i+1},\ldots,x^d) \in \R^k.
\end{align*}
Similarly, given $i \in [d]$ and a product measure $\mu=\bigotimes_{i=1}^d \mu^i \in \P^{\otimes d}(\R^k)$ (recalling the notation \eqref{def:productmeasures}), we denote by $\mu^{-i}\in \P(\R^{k-k_i})$ the marginal of $x^{-i}$, i.e.,
\[
\mu^{-i} = \mu^1 \otimes \cdots \otimes \mu^{i-1} \otimes \mu^{i+1} \otimes \cdots \otimes \mu^d.
\]
And given $\nu \in \P(\R^{k_i})$, we write (with a slight abuse of notation)
\[\nu\otimes \mu^{-i} = \mu^1\otimes\cdots\otimes\mu^{i-1}\otimes \nu\otimes \mu^{i+1}\otimes\cdots\otimes \mu^d.\]
If $(\mu_n)_{n \ge 0}$ are the iterates of the CAVI algorithm as defined in \eqref{def:update-explicit}, we define
\[\mu_{n:i} = \mu_{n+1}^1 \otimes \cdots \otimes \mu_{n+1}^i \otimes \mu_{n}^{i+1} \otimes \cdots \otimes \mu_n^d.\]
In words, $\mu_{n:i}$ is the product measure obtained from $\mu_n$ after updating the first $i$ marginals.
We adopt the convention that $\mu_{n:0}=\mu_n$, and note that $\mu_{n:d}=\mu_{n+1}$.
In terms of the marginals,
\[
\mu^j_{n:i} = \begin{cases} \mu_n^j &\text{if } j > i \\ \mu_{n+1}^j &\text{if } j \le i. \end{cases}
\]
With this in mind we can restate \eqref{def:update-explicit} as
\begin{equation}
\mu^i_{n+1}(x^i) \propto \exp\left\{-\int_{\R^{k-k_i}} \psi(x^i,y^{-i})\,\mu_{n:i-1}^{-i}(dy^{-i})\right\}. \label{eq:iterates2}
\end{equation}
From now on, we will use the shorthand $H(\mu)=H(\mu\,\|\,\rho)$.

\subsection{Iterates are well defined}

We begin by showing that the iterates are well defined if $\mu_0$ has finite $p$th moment. Remarkably, even if $\mu_0$ has infinite relative entropy, the iterates from \eqref{def:update-explicit} are well defined and lead to finite entropy $H(\mu_1) <\infty$ after one iteration (through all $d$ coordinates). In particular, this allows for $\mu_0$ to be a Dirac.
Recall that a distribution $\eta$ is said to be \emph{subexponential} if there exists $c>0$ such that $\int e^{c|x|}\,\eta(dx)< \infty$.

\begin{lemma}
\label{form of the iterates lemma}
Assume (Q.1--3).
  If $\mu_0 \in \P_p^{\otimes d}(\R^k)$, then the following hold for each $n \ge 1$:
\begin{enumerate}[(1)]
\item The iterates $\mu_n$ are well defined subexponential distributions.
\item $H(\mu_n)<\infty$.
\item For each $i \in [d]$, $\mu_{n+1}^i$ is the unique minimizer
\begin{align}
    \label{form of the iterates}
        \mu_{n+1}^i = \argmin_{ \eta \in \P(\R^{k_i})} H(\eta\otimes \mu_{n:i-1}^{-i}).
    \end{align}
\item $H(\mu_n) \le H(\mu_{n-1})$, and more generally $H(\mu_{n:i}) \le H(\mu_{n:i-1})$ for $i \in [d]$.
\end{enumerate}    
\end{lemma}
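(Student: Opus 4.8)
The plan is to work one coordinate update at a time and induct on the cycle count $n$. Fix $n \ge 0$ and suppose as inductive hypothesis that $\mu_{n:i-1}$ is a well-defined product measure with finite $p$th moment (the base case $\mu_{n:0} = \mu_0 \in \P_p^{\otimes d}$ is assumed, and for $n \ge 1$ the previous cycle will have established it). The first task is to show that the density $\mu_{n+1}^i$ in \eqref{eq:iterates2} is well-defined, i.e.\ that the function $x^i \mapsto \exp(-\int \psi(x^i,y^{-i})\,\mu_{n:i-1}^{-i}(dy^{-i}))$ is $dx^i$-integrable and not identically zero. By (Q.2) and the finite $p$th moment of $\mu_{n:i-1}^{-i}$, the inner integral $\bar\psi_i(x^i) := \int \psi(x^i,y^{-i})\,\mu_{n:i-1}^{-i}(dy^{-i})$ is finite for a.e.\ $x^i$ and satisfies $\bar\psi_i(x^i) \le c(1+|x^i|^p + C)$ for a constant $C$ depending on the $p$th moment; this gives a lower bound $e^{-\bar\psi_i(x^i)} \ge c' e^{-c|x^i|^p}$ on a set of positive measure, hence non-triviality. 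For integrability (the crux of part (1)), I would apply Jensen's inequality in the form $\bar\psi_i(x^i) \ge \alpha + \beta\int|(x^i,y^{-i})|\,\mu_{n:i-1}^{-i}(dy^{-i}) \ge \alpha + \beta|x^i|$ using (Q.1), which gives $e^{-\bar\psi_i(x^i)} \le e^{-\alpha - \beta|x^i|}$, an integrable envelope; the same bound, with $\beta$ replaced by $\beta/2$ say, shows $\int e^{(\beta/2)|x^i|}\mu_{n+1}^i(dx^i) < \infty$, i.e.\ $\mu_{n+1}^i$ is subexponential, and in particular has finite moments of all orders, so $\mu_{n:i}$ again has finite $p$th moment and the induction propagates. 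Running $i$ from $1$ to $d$ establishes (1) for $\mu_{n+1}$.

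For part (3), the argument is the standard Gibbs variational principle: for any $\eta \in \P(\R^{k_i})$, write $H(\eta \otimes \mu_{n:i-1}^{-i}\,\|\,\rho) = \int \psi\,d(\eta\otimes\mu_{n:i-1}^{-i}) + h(\eta) + \sum_{j\ne i} h(\mu_{n:i-1}^j)$ using the tensorization \eqref{eq:Hcomposite} (valid once we know the relevant entropies make sense; if $H(\eta\otimes\mu_{n:i-1}^{-i}) = \infty$ there is nothing to prove). Grouping the $\eta$-dependent terms, $\int \bar\psi_i\,d\eta + h(\eta) = H(\eta \,\|\, \tilde\mu_{n+1}^i) - \log Z_i$ where $\tilde\mu_{n+1}^i \propto e^{-\bar\psi_i}$ is exactly the density in \eqref{eq:iterates2} and $Z_i$ its normalizing constant; since relative entropy is minimized (uniquely, and with value $0$) at $\eta = \tilde\mu_{n+1}^i = \mu_{n+1}^i$, this gives \eqref{form of the iterates}. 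One should check the bookkeeping that all integrals appearing are well-defined and finite — this is where part (1) and the subexponential bound are used, e.g.\ to ensure $\int \bar\psi_i\,d\mu_{n+1}^i$ and $h(\mu_{n+1}^i)$ are finite.

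Part (2) then follows by applying (3) a total of $d$ times within the cycle from $\mu_n$ to $\mu_{n+1}$: each update cannot increase $H$, and we just need $H(\mu_{n:i})<\infty$ for some $i$ to bootstrap. For $n = 1$: after the first update $\mu_{0:1} = \mu_1^1 \otimes \mu_0^{-1}$, one shows $H(\mu_{0:1}) < \infty$ directly by expanding via \eqref{eq:Hcomposite}: $\int \psi\,d\mu_{0:1}$ is finite by (Q.2) plus finiteness of $p$th moments, $h(\mu_1^1)$ is finite because $\mu_1^1$ is subexponential with a density bounded as above (so $\mu_1^1\log\mu_1^1 \in L^1$), and the remaining terms $h(\mu_0^j)$ — which could a priori be $+\infty$, e.g.\ if $\mu_0$ is Dirac — are dealt with by instead comparing against the \emph{optimal} choice at coordinate $2, \dots, d$; more cleanly, note $H(\mu_1) = H(\mu_{0:d}) \le H(\mu_{0:1})$ is the wrong direction, so the right move is: after updating coordinate $2$, $H(\mu_{0:2}) \le H(\eta\otimes \mu_{0:1}^{-2})$ for the minimizing $\eta$, and one can pick a convenient competitor $\eta$ (e.g.\ a Gaussian) making the right side finite even when $\mu_0^2$ has infinite entropy; iterating, after all $d$ coordinates are updated at least once we get $H(\mu_1) < \infty$. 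For $n \ge 2$ there is nothing new: $H(\mu_n) \le H(\mu_{n-1}) < \infty$. Finally part (4) is immediate from (3): $\mu_{n:i}$ is of the form $\mu_{n+1}^i \otimes \mu_{n:i-1}^{-i}$ with $\mu_{n+1}^i$ the minimizer over the $i$th coordinate, so $H(\mu_{n:i}) \le H(\mu_{n:i-1})$, and chaining over $i$ gives $H(\mu_{n+1}) \le H(\mu_n)$.

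I expect the main obstacle to be part (2) in the degenerate case where $\mu_0$ has infinite differential entropy (such as a Dirac mass): one cannot expand $H(\mu_{0:i})$ naively because individual entropy terms may be $+\infty$, and the monotonicity from (4) only pushes $H$ \emph{down} from something that starts at $+\infty$. The resolution is to avoid evaluating $H$ at the iterates directly and instead sandwich: use \eqref{form of the iterates} to bound $H(\mu_{0:i})$ above by $H$ evaluated at a hand-picked competitor product measure (with smooth, e.g.\ Gaussian, marginals in the not-yet-updated coordinates) whose relative entropy is manifestly finite by (Q.2) and the moment bounds. Care is also needed to justify the interchange of integrals (Fubini/Tonelli) defining $\bar\psi_i$ and the tensorization identity \eqref{eq:Hcomposite} under only one-sided integrability of $\psi$; (Q.1) providing a clean lower bound $\psi(x) \ge \alpha + \beta|x|$ is what makes all of these manipulations legitimate.
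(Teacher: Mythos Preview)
Your treatment of (1), (3), and (4) is essentially the paper's. The gap is in (2), precisely in the degenerate case you flagged. Your competitor move at coordinate $2$ asserts that $H(\eta\otimes \mu_{0:1}^{-2})$ is finite for a Gaussian $\eta$, but $\mu_{0:1}^{-2} = \mu_1^1 \otimes \mu_0^3 \otimes \cdots \otimes \mu_0^d$ still carries the original marginals $\mu_0^3,\ldots,\mu_0^d$; if any of \emph{those} is a Dirac the right-hand side is $+\infty$ and the iteration never leaves $+\infty$. The competitor argument only becomes informative at the very last coordinate $i=d$, where $\mu_{0:d-1}^{-d}=\mu_1^1\otimes\cdots\otimes\mu_1^{d-1}$ already has all subexponential marginals by (1); applying your Gibbs-variational bound once there does give $H(\mu_1)<\infty$, but the ``iterating'' through intermediate $i$ is vacuous.

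The paper avoids this detour by computing $H(\mu_1)$ directly from the explicit density. Since $\log\mu_1(x) = -\log Z_1 - \sum_i \int \psi(x^i,y^{-i})\,\mu_{0:i-1}^{-i}(dy^{-i})$, one gets
\[
H(\mu_1) = \int_{\R^k}\psi\,d\mu_1 - \log Z_1 - \sum_{i=1}^d \int_{\R^k}\psi\,d\mu_{0:i},
\]
and every term on the right is controlled by (Q.1--2) together with the $p$th-moment bounds from (1). No differential entropy of $\mu_0$ ever enters, so the Dirac case is handled for free.
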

\begin{proof} { \ }
\begin{enumerate}[(1)]
\item We prove that $\mu_1^1$ is a well defined subexponential measure and the rest of claim (1) follows similarly.
We first check that the integration constant is finite and strictly positive. To check that it is strictly positive we use assumption (Q.2):
\begin{align*}
        \exp\left\{-\int_{\R^{k-k_1}} \psi(x^1,y^{-1})\,\mu_0^{-1}(dy^{-1})\right\}  &\geq e^{-c-c |x^1| } 
        \exp\left\{-c\int_{\R^{k-k_1}}|y^{-1}| \,\mu_0^{-1}(dy^{-1})\right\}.
    \end{align*}
     To check that it is finite we use assumption (Q.1):
        \begin{align*}
        \exp &\left\{-\int_{\R^{k-k_1}} \psi(x^1,y^{-1})\,\mu_0^{-1}(dy^{-1})\right\}
        \\
        &\leq  e^{-2^{p-1}\beta|x^1|^p}\exp\left\{ - \alpha -2^{p-1}\beta \int_{\R^{k-k_1}} |y^{-1}|^p\,\mu_0^{-1}(dy^{-1})\right\}.
    \end{align*}
    The right-hand side of each inequality is strictly positive and $dx^1$-integrable because $\mu_0$ was assumed to have finite $p$th moment.
    Thus, $\mu_1^1$ is well defined.
    The second inequality also shows that $\mu_1^1$ is subexponential, and in particular the integrability of order $p$ is preserved along the iterates. 
    \item Letting $Z_1$ be the normalization constant of $\mu_1$, we next check that 
    \begin{align*}
        H(\mu_1)&=\int_{\R^k} \psi(x)\,\mu_1(dx)-\log Z_1 - \sum_{i=1}^d \int_{\R^{k_i}}\int_{\R^{k-k_i}} \psi(x^i,y^{-i})\,\mu^{-i}_{0:i-1}(dy^{-i})\,\mu_1^i(dx^i)\\
        &\leq c\int_{\R^k}(1+|x|^p)\,\mu_1(dx) - \log Z_1 -d\alpha  < \infty.
    \end{align*}
\item We check the case $n=i=1$ by showing that $\mu_2^1=\mathrm{argmin}_\eta H(\eta \otimes \mu_1^{-1})$, and the rest will follow similarly. Let $f(x^1)= \int_{\R^{k-k_1}} \psi(x^1,y^{-1})\,\mu_2^{-1}(dy^{-1})$, so that $\mu_2^1(x^1)= (1/Z_2) e^{-f(x^1)}$ by definition for some constant $Z_2$.
For $\eta \in \P(\R^{k_1})$, using the identity \eqref{eq:Hcomposite}, 
\begin{align*}
H(\eta \otimes \mu_1^{-1}) &= h(\eta) + \sum_{i=2}^d h(\mu_1^i) + \int_{\R^{k_1}}\int_{\R^{k-k_1}} \psi(x)\,\mu_1^{-1}(dx^{-1})\eta(dx^1) \\
	&= h(\eta) + \sum_{i=2}^d h(\mu_1^i) + \int_{\R^{k_1}}f(x^1)\,\eta(dx^1) \\
	&= H(\eta\,\|\,\mu_2^1)  + \sum_{i=2}^d h(\mu_1^i) - \log Z_2.
\end{align*}    
Because  $H(\eta\,\|\,\mu_2^1)$ is uniquely minimized by $\eta=\mu_2^1$, we deduce that $H(\eta \otimes \mu_1^{-1})$ is uniquely minimized by the same choice.
\item This follows immediately from (3).
\end{enumerate}   \vskip-0.5cm
\end{proof}

\subsection{Uniform moment bounds}
Lemma \ref{form of the iterates lemma} tells us that $\mu_n$ is subexponential for every $n$, but not necessarily with the same parameter.  We obtain uniform estimates in the next Lemma.

\begin{lemma}
\label{moment bounds}
Assume (Q.1--3).
    For each $q \geq 1$ and  $n \in \N$,
    \[\int_{\R^{k}} |x|^q\,\mu_n(dx)\leq e^{(2d+1)H(\mu_1)-(d+1)\alpha}\left(\int_{\R^{k}} e^{-(\beta|x|+\alpha)\slash2}\,dx\right)^{2d} \int_{\R^k} |x|^q e^{-\beta\sum_i |x^i|}\,dx<\infty.\]
\end{lemma}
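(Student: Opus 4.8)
The plan is to exploit the explicit form \eqref{eq:iterates2} of the update together with the entropy decrease from Lemma \ref{form of the iterates lemma}(4) to get a uniform upper bound on the density $\mu_n$, and then integrate against $|x|^q$. First I would fix $n \ge 1$ and an arbitrary coordinate, say write $\mu_n = \mu_{n-1:d}$ and unfold the product $\mu_n = \bigotimes_i \mu_n^i$, where each marginal $\mu_n^i(x^i) = (1/Z_n^i)\exp\{-\int \psi(x^i,y^{-i})\,\mu_{n-1:i-1}^{-i}(dy^{-i})\}$. Using (Q.1), $\psi(x^i,y^{-i}) \ge \alpha + \beta|x^i|$ pointwise in $y^{-i}$ (since $|(x^i,y^{-i})| \ge |x^i|$), so the numerator of $\mu_n^i(x^i)$ is bounded above by $e^{-\alpha-\beta|x^i|}$. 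Multiplying over $i=1,\ldots,d$ gives the pointwise bound
\[
\mu_n(x) \le \frac{1}{\prod_i Z_n^i}\, e^{-d\alpha - \beta\sum_i |x^i|}.
\]
It remains to lower-bound each normalizing constant $Z_n^i = \int_{\R^{k_i}} \exp\{-\int \psi(x^i,y^{-i})\,\mu_{n-1:i-1}^{-i}(dy^{-i})\}\,dx^i$, uniformly in $n$.

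For the lower bound on $Z_n^i$, the idea is to relate it to the entropy $H(\mu_1)$ via the monotonicity $H(\mu_{n-1:i}) \le \cdots \le H(\mu_1)$ from Lemma \ref{form of the iterates lemma}(4). From the computation in the proof of Lemma \ref{form of the iterates lemma}(3), updating coordinate $i$ of $\mu_{n-1:i-1}$ yields $H(\mu_{n-1:i}) = -\log Z_n^i + \sum_{j\ne i} h(\mu_{n-1:i}^j)$, so
\[
-\log Z_n^i = H(\mu_{n-1:i}) - \sum_{j \ne i} h(\mu_{n-1:i}^j) \le H(\mu_1) - \sum_{j \ne i} h(\mu_{n-1:i}^j).
\]
Thus I need a lower bound on each differential entropy $h(\mu_{n-1:i}^j)$. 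Each such marginal is itself a CAVI-type update, of the form $\nu(z) \propto e^{-g(z)}$ with $g(z) \ge \alpha + \beta|z|$ (again by (Q.1)); for any such density, $h(\nu) = -\log(\text{norm. const.}) - \int g\,d\nu \ge -\log\int e^{-\alpha-\beta|z|}dz - \int g\,d\nu$. To control $\int g\,d\nu$ from above one can use Jensen/variational bounds, or more directly: $\int g\,d\nu = -\log(\text{const}) - h(\nu)$ again, which is circular, so instead I would bound $h(\nu)$ below by comparing $\nu$ to the "worst case" product of the $\beta$-exponential-type reference. Concretely, since $\nu(z) \le \frac{1}{Z}e^{-\alpha-\beta|z|}$ with $Z \ge e^{-\alpha}\int e^{-\beta|z|}dz \cdot(\text{something})$... — the cleanest route is: $h(\nu) \ge -\log\|\nu\|_\infty \ge$ a constant depending only on $\alpha,\beta$, using $\|\nu\|_\infty \le (1/Z)e^{-\alpha}$ and a lower bound on $Z$ in terms of $\alpha,\beta$ obtained exactly as the finiteness/positivity checks in Lemma \ref{form of the iterates lemma}(1). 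Tracking the constants (there are $d$ or $2d$ marginals over the cycle, each contributing a factor $\int e^{-(\beta|x|+\alpha)/2}dx$ and costing at most an $H(\mu_1)$ or $\alpha$ term) should produce exactly the stated prefactor $e^{(2d+1)H(\mu_1)-(d+1)\alpha}\big(\int e^{-(\beta|x|+\alpha)/2}dx\big)^{2d}$.

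Finally, combining the pointwise density bound $\mu_n(x) \le C\, e^{-\beta\sum_i|x^i|}$ with $C = e^{(2d+1)H(\mu_1)-(d+1)\alpha}(\int e^{-(\beta|x|+\alpha)/2}dx)^{2d}$, I integrate:
\[
\int_{\R^k} |x|^q\,\mu_n(dx) \le C \int_{\R^k} |x|^q e^{-\beta\sum_i |x^i|}\,dx,
\]
which is finite since the Gaussian-type... (rather, exponential-type) integral converges for every $q \ge 1$. This gives the claimed bound, uniform in $n$ because $H(\mu_1) < \infty$ by Lemma \ref{form of the iterates lemma}(2) and $H(\mu_n) \le H(\mu_1)$.

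The main obstacle I anticipate is getting the bookkeeping of constants exactly right: one must carefully lower-bound every intermediate normalization constant $Z_n^i$ appearing over a full cycle, and each such bound requires a lower bound on the differential entropies of the not-yet-updated marginals, which are themselves defined by nested CAVI updates. The exponent $2d+1$ on $H(\mu_1)$ and the power $2d$ on the reference integral strongly suggest that one applies a two-step argument (bound $Z_n^i$ by $H$ plus entropies, then bound those entropies by another $H$ plus more reference integrals), so the delicate part is matching the combinatorics rather than any single estimate. The analytic inputs — (Q.1) for the upper tail bound on $\psi$, (Q.2) only indirectly, and Lemma \ref{form of the iterates lemma}(4) for monotonicity of $H$ — are all already in hand.
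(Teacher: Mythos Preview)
Your overall architecture---pointwise upper bound on the density via (Q.1), combined with a uniform lower bound on the normalization constant---is exactly the paper's. The gap is in the second step: you never close the lower bound on $Z_n^i$. Your attempt via $h(\nu) \ge -\log\|\nu\|_\infty$ requires a lower bound on the normalization of $\nu$, which in turn (through (Q.2)) requires a uniform $p$th-moment bound on the surrounding marginals---precisely the statement you are trying to prove. You notice the circularity but do not break it, and ``tracking the constants'' will not rescue the argument because the circularity is structural, not combinatorial.

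The missing ingredient is the Donsker--Varadhan (Gibbs) variational inequality, which you mention in passing (``Jensen/variational bounds'') but then abandon. The paper applies it in the form
\[
\int_{\R^k}\psi\,d\mu \;\le\; 2H(\mu\,\|\,\rho) + 2\log\int_{\R^k} e^{\psi/2}\,d\rho \;\le\; 2H(\mu_1) + 2\log\int_{\R^k} e^{-(\beta|x|+\alpha)/2}\,dx \;=:\; I,
\]
valid for every $\mu$ in the sublevel set $\mathcal{S}=\{H(\cdot)\le H(\mu_1)\}$. The paper also works with the \emph{product} normalization $Z_n=\prod_i Z_n^i$ directly rather than marginal by marginal: writing $h(\mu_n) = -\log Z_n - \sum_{i=1}^d \int\psi\,d\mu_{n-1:i}$ and combining with $H(\mu_n)=\int\psi\,d\mu_n + h(\mu_n)\ge\alpha - \log Z_n - dI$ yields immediately $Z_n \ge \exp(\alpha - dI - H(\mu_1))$. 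Substituting $I$ gives exactly the exponents $(2d+1)$ and $2d$ that you correctly anticipated. Once you replace your circular entropy argument with this single variational bound, the rest of your proof goes through unchanged.
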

\begin{proof}
Recalling $H(\cdot) = H(\cdot\,\|\,\rho)$, define
    \[\mathcal{S}= \left\{\mu \in \P_2^{\otimes d}(\R^k) : H(\mu)\leq H(\mu_1)\right\}.\]
 Lemma \ref{form of the iterates lemma}(4) ensures that entropy is nonincreasing along iterates, which implies $\mu_n \in \mathcal{S}$ and $\mu_{n:i} \in \mathcal{S}$ for each $n \ge 1$ and $i \in [d]$.
    Using the variational characterization of entropy (see, for example, \cite[Proposition 2.3(b)]{BudhirajaDupuis}) along with  (Q.1), we find for each $\mu \in \mathcal{S}$ that
    \begin{align}
        \int_{\R^{k}} \psi(x)\,\mu(dx) &\leq 2\h(\mu \,\|\,\rho)+2\log \int_{\R^{k}}e^{ \psi(x)/2}\,\rho(dx) \le I, \label{pf:Ibound} \\
\text{ where }  \ \        I &=  2H(\mu_1) + 2\log\int_{\R^{k}} e^{-(\beta|x|+\alpha)/2}\,dx < \infty. \nonumber
    \end{align}
We next prove that the integration constant $Z_n$ of $\mu_n$ is uniformly bounded away from zero.
     By using the form of $\mu_n$ we obtain
    \[H(\mu_n)=\int_{\R^{k}} \psi(x)\,\mu_n(dx)-\log Z_n - \sum_{i=1}^d\int_{\R^{k_i}}\int_{\R^{k-k_i}}\psi(x^i,y^{-i})\,\mu^{-i}_{n-1:i-1}(dy^{-i})\,\mu^i_n(dx^i)\]
 Observe that $\mu_n^i \otimes \mu_{n-1:i-1}^{-i} = \mu_{n-1:i}$ for each $i$. Apply \eqref{pf:Ibound} to  $\mu_{n-1:i} \in \mathcal{S}$ to get
\begin{align*}
    \int_{\R^{k_i}} &\int_{\R^{k-k_i}}\psi(x^i,y^{-i})\,\mu^{-i}_{n-1:i-1}(dy^{-i})\,\mu^i_n(dx^i) = \int_{\R^{k}} \psi(x) \,\mu_{n-1:i}(dx) \leq I.
\end{align*}
Notice that this bound is uniform in $n$ and $i$. 
 Using also $\psi\ge \alpha$, we obtain
\begin{align*}
    H(\mu_1)\geq H(\mu_n)\geq \alpha -\log Z_n - dI .
\end{align*}
We deduce
\[
Z_n \geq \exp \big(\alpha - dI-H(\mu_1) \big)>0.
\]
Finally, let
\[f(x)=\sum_{i=1}^d\int_{\R^{k-k_i}} \psi(x^i,y^{-i})\,\mu_{n:i-1}(dy^{-i})\]
and notice that (Q.1) implies $f(x)\geq \beta \sum_{i=1}^d|x^i|+d\alpha$. Then we have
\begin{align*}
\int_{\R^k} |x|^q\,\mu_n(dx) = \frac{1}{Z_n}\int_{\R^k}|x|^q e^{-f(x)}\,dx \leq e^{dI + H(\mu_1) - (d+1)\alpha }\int_{\R^k} |x|^q e^{-\beta\sum_i |x^i|}\,dx.
\end{align*}
The right-hand side is bounded in $n$, and the proof is complete upon substituting $I$.
\end{proof}

A useful corollary is that the limit points of the iterates enjoy some continuity properties:

\begin{corollary}
\label{continuity of integral on sublevel sets}
The sequence $(\mu_n)$ is tight.
If a subsequence $(\mu_{n_t})$ converges weakly to some $\mu_*$, then 
    \[\int_{\R^k}\psi(x)\,\mu_{n_t}(dx) \to \int_{\R^k} \psi(x)\,\mu(dx).\]
\end{corollary}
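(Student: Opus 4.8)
The plan is to prove the two assertions of Corollary \ref{continuity of integral on sublevel sets} essentially as consequences of the uniform moment bound in Lemma \ref{moment bounds}, combined with a uniform integrability argument. For tightness: Lemma \ref{moment bounds} gives $\sup_n \int_{\R^k} |x|^q \, \mu_n(dx) =: M_q < \infty$ for any fixed $q \ge 1$, say $q = 1$. Then by Markov's inequality, $\mu_n(\{|x| > R\}) \le M_1/R$ uniformly in $n$, so the family $(\mu_n)$ is tight by Prokhorov's theorem. That is the easy half.

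For the convergence of $\int \psi \, d\mu_{n_t}$, I would argue as follows. Since $\psi$ is continuous (it is convex and finite on $\R^k$, hence continuous) and $\mu_{n_t} \to \mu_*$ weakly, it suffices to establish uniform integrability of $\psi$ along the sequence, i.e., that $\{\psi \cdot \mu_{n_t}\}$ has no mass escaping to infinity. Concretely, I would show
\begin{equation*}
\lim_{R \to \infty} \sup_{t} \int_{\{|x| > R\}} |\psi(x)|\, \mu_{n_t}(dx) = 0,
\end{equation*}
which, together with weak convergence and continuity of $\psi$, yields $\int \psi \, d\mu_{n_t} \to \int \psi \, d\mu_*$ by a standard truncation argument (approximate $\psi$ by $\psi \wedge R' \vee (-R')$ composed with a cutoff, pass to the limit, then let the cutoffs go to infinity). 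To get the displayed uniform integrability bound, I would use the growth hypotheses: under (Q.2), $|\psi(x)| \le c(1+|x|^p)$ on the support region where $\psi \ge \alpha$ one also controls $|\psi|$ by $\psi - 2\alpha + (\text{const})$, so in any case $|\psi(x)| \lesssim 1 + |x|^p$. Hence
\begin{equation*}
\int_{\{|x|>R\}} |\psi(x)|\,\mu_{n_t}(dx) \le c\int_{\{|x|>R\}} (1+|x|^p)\,\mu_{n_t}(dx) \le c\,\frac{M_{2p}}{R^p} \cdot (\text{finite factor}),
\end{equation*}
using Cauchy–Schwarz or simply $1 + |x|^p \le 2|x|^{2p}/R^p$ for $|x| > R \ge 1$, and the bound $\sup_n \int |x|^{2p}\,\mu_n(dx) = M_{2p} < \infty$ from Lemma \ref{moment bounds}. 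This tends to $0$ as $R \to \infty$, uniformly in $t$.

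Assembling the truncation argument: fix $\varepsilon > 0$, choose $R$ so the tail bound above is below $\varepsilon$ for all $t$ and also $\int_{\{|x|>R\}} |\psi|\,d\mu_* < \varepsilon$ (the limit measure also has finite $p$th moment by Fatou, hence this holds for $R$ large). Pick a continuous cutoff $\chi_R$ with $\mathbf{1}_{\{|x|\le R\}} \le \chi_R \le \mathbf{1}_{\{|x| \le R+1\}}$. Then $\psi \chi_R$ is bounded and continuous, so $\int \psi \chi_R\,d\mu_{n_t} \to \int \psi \chi_R\,d\mu_*$, and the errors $|\int \psi(1-\chi_R)\,d\mu_{n_t}|$ and $|\int \psi(1-\chi_R)\,d\mu_*|$ are each at most $\varepsilon$. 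Letting $t \to \infty$ and then $\varepsilon \to 0$ gives the claim.

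I expect the main (though modest) obstacle to be purely bookkeeping: controlling $|\psi|$ rather than just $\psi$ on the tail, and making sure the moment bound of Lemma \ref{moment bounds} is invoked at a high enough order $q$ (here $q = 2p$, or really any $q > p$ suffices) to beat the polynomial growth of $\psi$ with room to spare for the tail decay. There is no deep difficulty since Lemma \ref{moment bounds} already furnishes finite $q$th moments uniformly in $n$ for every $q \ge 1$; the only care needed is that the limiting measure $\mu_*$ inherits finite $p$th moment, which follows from weak lower semicontinuity of $\mu \mapsto \int |x|^p\,\mu(dx)$ (Fatou / portmanteau).
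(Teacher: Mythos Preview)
Your argument is correct when $\psi$ is continuous, but the corollary is stated (and later used in Proposition \ref{qualitative converegence}) under assumptions (Q.1--3) alone, where $\psi$ is merely measurable. The step that breaks is the truncation: you need $\psi\chi_R$ to be bounded and \emph{continuous} so that weak convergence of $\mu_{n_t}$ gives $\int \psi\chi_R\,d\mu_{n_t}\to\int \psi\chi_R\,d\mu_*$; without continuity of $\psi$, weak convergence is not enough for this. The paper closes precisely this gap by using the relative entropy bound rather than just moments: since $\sup_{n\ge 1} H(\mu_n\,\|\,\rho)<\infty$ by Lemma \ref{form of the iterates lemma}(2,4), the sequence lies in a sublevel set of $H(\cdot\,\|\,\rho)$, and weak convergence within such a set upgrades to $\int\varphi\,d\mu_{n_t}\to\int\varphi\,d\mu_*$ for every bounded \emph{measurable} $\varphi$ (this is \cite[Lemma 2.5(a)]{BudhirajaDupuis}). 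Combined with uniform integrability---the paper uses $\sup_n\int\psi^2\,d\mu_n<\infty$, which follows from (Q.1,2) and Lemma \ref{moment bounds} just as your $\sup_n\int|x|^{2p}\,d\mu_n<\infty$ does---this yields the conclusion with no continuity assumption on $\psi$. If you are content to prove the corollary only under the hypotheses of Theorem \ref{main theorem} (where convexity gives continuity), your proof is fine as written; but as stated in Section \ref{CAVI ALGORITHM SECTION} it needs the entropy-level-set upgrade.

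Your tightness argument via Lemma \ref{moment bounds} and Markov's inequality is correct and more elementary than the paper's (which invokes compactness of entropy sublevel sets, \cite[Lemma 2.4(c)]{BudhirajaDupuis}); either route works here.
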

\begin{proof}
Tightness follows from the fact that $\sup_{n \ge 1} H(\mu_n\,\|\,\rho) < \infty$ by Lemma \ref{form of the iterates lemma}(2,4), and from the compactness of level sets of relative entropy \cite[Lemma 2.4(c)]{BudhirajaDupuis}.
Let $\mu_{n_t} \to \mu_*$. Because $(\mu_{n_t})$ is contained in a level set of relative entropy, it follows that $\int \varphi \,d\mu_{n_t} \to \int\varphi\,d\mu_*$ for any bounded measurable $\varphi$; see \cite[Lemma 2.5(a)]{BudhirajaDupuis}.
From (Q.1,2) and Lemma \ref{moment bounds} we have $\sup_{n \in \N}\int_{\R^k}\psi^2 \,d\mu_n  < \infty$, and the claim follows by a standard uniform integrability argument.
\end{proof}

\section{Qualitative Convergence}\label{qualitative section}

In this section we maintain assumptions (Q.1--3) from the previous section, as well as the abbreviation $H(\cdot)=H(\cdot\,\|\,\rho)$.
For very general $\psi$ the CAVI algorithm need not converge at all, much less to a minimizer. We can nonetheless characterize the limit points of the sequence of iterates as \emph{stationary points}. A measure $\mu \in \P^{\otimes d}_p(\R^k)$ is said to be a stationary point if $H(\mu) < \infty$ and if for each block $i\in [d]$ and each $\nu \in \P_p(\R^{k_i})$ it holds  that
\[H(\nu\otimes\mu^{-i}) \geq H(\mu).\] 
The restriction to measures with finite $p$ moments is convenient as it ensures $\psi$ is integrable under the assumptions (Q.1--3), and it is not a restriction in light of Lemma \ref{moment bounds}.

Intuitively, if the iterates $\mu_n$ (or a subsequence) converge, they should converge to a stationary point. In this section, we prove in Proposition \ref{qualitative converegence} that this is indeed the case under assumptions (Q.1--3) by exploiting the tractable structure of the updates.
The natural follow up question is if stationary points are minimizers, and in  in Proposition \ref{differentiability at stationary points} below we give an affirmative answer when $\psi$ is convex, by exploiting a well known characterization of stationary points in terms of the \emph{mean field equations} (Proposition \ref{mean field equation}).

\begin{proposition}
\label{qualitative converegence}
Assume (Q.1--3). The limit points of the tight sequence $(\mu_n)$ are stationary points of $H$.
\end{proposition}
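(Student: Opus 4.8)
The plan is to fix a subsequence $(\mu_{n_t})$ converging weakly to some limit $\mu_*$ and show that $\mu_*$ satisfies $H(\nu \otimes \mu_*^{-i}) \ge H(\mu_*)$ for every block $i$ and every $\nu \in \P_p(\R^{k_i})$. The starting point is the monotonicity $H(\mu_{n:i}) \le H(\mu_{n:i-1}) \le H(\mu_n)$ from Lemma \ref{form of the iterates lemma}(4), together with the fact that $H(\mu_n)$ is non-increasing and bounded below, hence convergent; call the limit $H_\infty$. Since $H(\mu_n) \to H_\infty$ and the cyclic blockwise decrements are nonnegative and sum (telescopically) to $H(\mu_n) - H(\mu_{n+1}) \to 0$, each individual decrement $H(\mu_{n:i-1}) - H(\mu_{n:i}) \to 0$ as $n \to \infty$. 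In particular, along the subsequence, $H(\mu_{n_t:i}) \to H_\infty$ for every $i$, and one shows $\mu_{n_t:i} \to \mu_*$ weakly as well: by Lemma \ref{moment bounds} the $q$th moments are uniformly bounded, so $\W_2(\mu_{n_t:i}, \mu_{n_t}) \to 0$ would follow if the updated marginals converge to the corresponding marginals of $\mu_*$; this requires a small argument (passing to a further subsequence so that each $\mu_{n_t}^j$ and each $\mu_{n_t+1}^j$ converges, then identifying all limits with the marginals of $\mu_*$ using that the decrements vanish).

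The core of the argument is then the following. By Lemma \ref{form of the iterates lemma}(3), $\mu_{n+1}^i$ is the exact minimizer of $\eta \mapsto H(\eta \otimes \mu_{n:i-1}^{-i})$, so for any competitor $\nu \in \P_p(\R^{k_i})$,
\[
H(\nu \otimes \mu_{n:i-1}^{-i}) \ge H(\mu_{n+1}^i \otimes \mu_{n:i-1}^{-i}) = H(\mu_{n:i}).
\]
Taking $n = n_t$ and letting $t \to \infty$, the right side tends to $H_\infty = H(\mu_*)$ by lower semicontinuity of $H$ combined with the fact that $H(\mu_{n_t}) \to H(\mu_*)$ (the limit point is automatically a point where entropy is "continuous along the sequence," by the level-set compactness and Corollary \ref{continuity of integral on sublevel sets}, which gives $\int \psi\,d\mu_{n_t} \to \int\psi\,d\mu_*$, while lower semicontinuity of $h$ plus the matching of entropy values forces convergence of $h(\mu_{n_t}^j)$ to $h(\mu_*^j)$). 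For the left side, fix $\nu$ and write $H(\nu \otimes \mu_{n_t:i-1}^{-i}) = \int \psi\,d(\nu\otimes\mu_{n_t:i-1}^{-i}) + h(\nu) + \sum_{j\ne i} h(\mu_{n_t:i-1}^j)$. One must pass to the limit in each term: the potential-energy term converges by a uniform-integrability argument exactly as in Corollary \ref{continuity of integral on sublevel sets} (the relevant product measures have uniformly bounded moments and lie in a bounded-entropy set), and the entropy terms $h(\mu_{n_t:i-1}^j)$ converge to $h(\mu_*^j)$ by the same argument sketched above. This yields $H(\nu \otimes \mu_*^{-i}) \ge H(\mu_*)$, which is exactly stationarity; finiteness $H(\mu_*) < \infty$ follows from lower semicontinuity and $\sup_n H(\mu_n) < \infty$.

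The main obstacle I anticipate is the entropy bookkeeping: $h$ is only lower semicontinuous, not continuous, so one cannot simply pass to the limit in $h(\mu_{n_t}^j)$. The resolution is to exploit that the \emph{total} relative entropy converges along the subsequence to $H(\mu_*)$ — which it does because $\int\psi\,d\mu_{n_t} \to \int\psi\,d\mu_*$ by Corollary \ref{continuity of integral on sublevel sets} and $H_\infty \le H(\mu_*)$ by lower semicontinuity of $H$, while $H_\infty = \lim H(\mu_{n_t}) \ge H(\mu_*)$ would follow if... — so in fact $H(\mu_{n_t}) = \int\psi\,d\mu_{n_t} + \sum_j h(\mu_{n_t}^j)$ has a convergent sum of which the $\int\psi$ part converges, forcing $\sum_j h(\mu_{n_t}^j) \to \sum_j h(\mu_*^j)$; combined with $\liminf h(\mu_{n_t}^j) \ge h(\mu_*^j)$ for each $j$ individually, this upgrades to $h(\mu_{n_t}^j) \to h(\mu_*^j)$ for each $j$ separately. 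The same reasoning applied to the partially-updated measures $\mu_{n_t:i}$ (whose entropies also converge to $H(\mu_*)$) handles the mixed terms. Care is also needed to ensure the competitor $\nu$ can be taken with finite $p$th moment without loss, which is exactly the content of restricting the definition of stationarity to $\P_p$, justified by Lemma \ref{moment bounds}; if $H(\nu \otimes \mu_*^{-i}) = \infty$ the inequality is trivial, so we may assume $\nu$ has finite entropy relative to $\rho$ and hence finite $p$th moment under (Q.1).
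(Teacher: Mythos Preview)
There is a genuine gap, and it is exactly where you hedge with the ellipsis. Lower semicontinuity of $H$ gives $H(\mu_*) \le \liminf_t H(\mu_{n_t}) = H_\infty$, not the reverse; you have the inequality backwards, and nothing in your argument produces the missing direction $H_\infty \le H(\mu_*)$. Without $H(\mu_*)=H_\infty$ you cannot conclude $\sum_j h(\mu_{n_t}^j) \to \sum_j h(\mu_*^j)$, and hence cannot upgrade the marginal entropies $h(\mu_{n_t}^j)$ to converge individually. This in turn breaks your passage to the limit on the left-hand side of $H(\nu\otimes\mu_{n_t:i-1}^{-i}) \ge H(\mu_{n_t:i})$. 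The ``small argument'' that $\mu_{n_t:i}\to\mu_*$ suffers from the same circularity: vanishing decrements $H(\mu_{n_t:i-1})-H(\mu_{n_t:i})\to 0$ give only that $\int\psi\,d\mu_{n_t:i-1}+h(\mu_{n_t}^i)$ and $\int\psi\,d\mu_{n_t:i}+h(\mu_{n_t+1}^i)$ have the same limit, which by itself does not identify the weak limit of $\mu_{n_t+1}^i$ with $\mu_*^i$.

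The paper supplies the missing idea: \emph{strict convexity of $h$ in the ordinary (linear-interpolation) sense}. If along a subsequence $\mu_{n_t:1}\to\sigma\ne\lambda:=\lim\mu_{n_t}$, observe that $\eta_t:=\tfrac12(\mu_{n_t}+\mu_{n_t:1})$ is still a product measure (the two summands share all marginals except the first). One shows, via the optimality of the update $\mu_{n_t+1}^1$, that $\int\psi\,d\eta+h(\eta^1)=\ell$ where $\ell$ is the common limit above and $\eta=\tfrac12(\lambda+\sigma)$; but strict convexity of $h$ then forces $\ell<\tfrac12[\int\psi\,d\lambda+h(\lambda^1)]+\tfrac12[\int\psi\,d\sigma+h(\sigma^1)]\le\ell$, a contradiction. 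This yields $\mu_{n_t:i}\to\lambda$ for every $i$ without ever needing $H(\lambda)=H_\infty$. For the final stationarity step, the paper avoids your entropy-convergence problem by first subtracting the common terms $\sum_{j\ne i}h(\mu_{n_t:i}^j)$ from both sides of $H(\mu_{n_t:i})\le H(\eta\otimes\mu_{n_t:i}^{-i})$; then the right-hand side contains only the fixed $h(\eta)$ and a potential-energy term (which converges by Corollary~\ref{continuity of integral on sublevel sets}), and the left-hand side needs only lower semicontinuity of $h$, which goes in the right direction.
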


The proof will adapt techniques developed in \cite{QualitativeConvergenceTseng2001} for the Euclidean case, exploiting heavily the composite structure of $H(\cdot)$ noted in \eqref{eq:Hcomposite}. Corollary \ref{continuity of integral on sublevel sets} lets us take limits of $\int_{\R^k}\psi \,d\mu_n$, but the differential entropy $h(\mu_n)$ is not continuous. Nonetheless, we will be able to exploit its separable dependence on the marginals as well as its lower semicontinuity and strict convexity (in the standard sense, not the geodesic sense).

\begin{proof}[Proof of Proposition \ref{qualitative converegence}]
Tightness was noted in Corollary \ref{continuity of integral on sublevel sets}.
Let $\mu_{n_t}$ be a subsequence converging weakly to some limit $\lambda$. We wish to prove that $\mu_{{n_t}+1}$, up to extracting a subsequence, also converges to the same limit $\lambda$. To begin, we will first prove that, up to extracting a subsequence, $\mu_{n_t:1}$ converges to the same limit. 
Suppose for the sake of contradiction that $\mu_{n_t:1}\to \sigma \neq \lambda$. Let 
\[\eta_{t} = \frac{1}{2} (\mu_{{n_t}}+\mu_{{n_t:1}})=\frac{1}{2}\mu_{{n_t}}^1 \otimes \mu_{n_{t}}^{-1}+\frac{1}{2}\mu_{n_{t}+1}^1\otimes \mu_{n_t}^{-1}=\frac{1}{2}\left(\mu_{n_t}^1+\mu_{n_{t}+1}^1\right)\otimes \mu_{n_t}^{-1}.\]
Note that $\eta_{t}$ is in $\P_p^{\otimes d}(\R^k)$, and that $\eta=\frac{1}{2}(\lambda+\sigma)$ is the limit of $\eta_t$. Note also that $\mu_{n_t}$ and $\mu_{n_t:1}$ have the same $i$th marginal for each $i \ge 2$, and so $\sigma^i=\lambda^i$ for all $i \ge 2$ as well.

Differential entropy is known to be lower semicontinuous  along sequences with uniformly bounded second moments; this is folklore, but see \cite[Lemma A2]{liu2018forward} for a concise proof (noting they use the information theory sign convention for entropy). Combined with Corollary \ref{continuity of integral on sublevel sets}, we deduce that
\[
\int_{\R^k}\psi\,d\lambda  + h(\lambda^1) \leq \liminf_{t \to \infty}\int_{\R^k}\psi \,d\mu_{n_t} + h(\mu_{n_t}^1).
\]
Suppose (again up to a subsequence) that 
\begin{equation}
\int_{\R^k}\psi \,d\mu_{n_t}  + h(\mu_{n_t}^1) \to \ell. \label{pf:qual1}
\end{equation}
Using the identity \eqref{eq:Hcomposite} and the fact that $\mu_{n_t}=\mu_{n_t:1}$ have the same $i$th marginal for $i \ge 2$, observe that 
\[\int_{\R^k}\psi \,d\mu_{n_t} + h(\mu_{n_t}^1)-\int_{\R^k}\psi \,d\mu_{n_t:1} + h(\mu_{n_{t}+1}^1)=H(\mu_{n_t})-H(\mu_{n_t:1})\to 0,\]
the limit following from the fact that $H(\cdot)$ decreases with every update by Lemma \ref{form of the iterates lemma}(4).
This implies 
\begin{equation}
\int_{\R^k}\psi \,d\mu_{n_t:1} + h(\mu_{n_{t}+1}^1) \to \ell. \label{pf:qual2}
\end{equation}
By convexity and again Lemma \ref{form of the iterates lemma}(4), we have $H(\eta_t) \leq H(\mu_{n_t})$ for every t, or equivalently
\[
\int_{\R^k}\psi \,d\eta_t + h(\eta_t^1) \leq \int_{\R^k}\psi \,d\mu_{n_t}  + h(\mu_{n_t}^1).
\] 
Hence,
\begin{equation}
\int_{\R^k}\psi \,d\eta + h(\eta^1) \leq \liminf_{t\to \infty}\int_{\R^k}\psi \,d\eta_t + h(\eta_t^1)\leq \ell. \label{pf:qual3}
\end{equation}
We will prove that the inequality \eqref{pf:qual3} is actually an equality. Suppose for the sake of contradiction that $\int_{\R^k}\psi \,d\eta + h(\eta^1)< \ell$. Then, by Corollary \ref{continuity of integral on sublevel sets} and \eqref{pf:qual2},
\begin{align*}
\lim_{t \to \infty}\int_{\R^k}\psi \,d(\eta^1 \otimes \mu_{n_t}^{-1})  + h(\eta^1) &= \int_{\R^k}\psi \,d\eta + h(\eta^1) \\
	&< \ell = \lim_{t \to \infty} \int_{\R^k}\psi \,d\mu_{n_t:1} + h(\mu_{n_{t}+1}^1).
\end{align*}
Thus, we can find $T$ large enough such that for every $t \geq T$ we obtain
\[\int_{\R^k}\psi \,d(\eta^1 \otimes \mu_{n_t}^{-1})  + h(\eta^1) < \int_{\R^k}\psi \,d\mu_{n_t:1} +h(\mu_{n_t:1}^1). \]
Adding $\sum_{i \ge 2} h(\mu_{n_t}^i)$ to both sides we find, for $t \ge T$,
\[H(\eta^1 \otimes \mu_{t}^{-1}) < H(\mu_{n_t:1}). \]
This contradicts the optimality of $\mu_{n_t:1}$ from Lemma \ref{form of the iterates lemma}(3), and we deduce that \eqref{pf:qual3} collapses to equality.
Using strict convexity of $h$ (in the usual sense) followed by Corollary \ref{continuity of integral on sublevel sets} and lower semicontinuity of entropy, we find
\begin{align*}
    \ell = \int_{\R^k}\psi \,d\eta + h(\eta^1) &< \frac{1}{2}\left(\int_{\R^k}\psi \,d\sigma  +h(\sigma^1)\right)+ \frac{1}{2}\left(\int_{\R^k}\psi \,d\lambda +h(\lambda^1)\right)\\ 
    &\leq \lim_{t \to \infty} \frac{1}{2}\left(\int_{\R^k}\psi \,d\mu_{n_t:1}  +h(\mu^1_{n_t:1})\right)+ \frac{1}{2}\left(\int_{\R^k}\psi \,d\mu_{n_t} +h(\mu_{n_t}^1)\right).
\end{align*}
The right-hand side equals $\ell$ according to  \eqref{pf:qual1} and \eqref{pf:qual2}, which yields
a contradiction. It must be then that $\sigma=\lambda$.

Now that we know that $\mu_{n_t}$ and $\mu_{n_t:1}$ have the same limit $\lambda$ (up to a subsequence), we can repeat the same argument to find that $\mu_{n_t:2}$ has the same limit, and so on, ultimately showing that $\mu_{n_t+1} \to \lambda$. We will finally show that $\lambda$ is a stationary point. Fix a block $i$ and an arbitrary $\eta \in \P_p(\R^{k_i})$. By Lemma \ref{form of the iterates lemma}(4) we have $H(\mu_{n_t:i}) \le H(\eta \otimes\mu_{n_t:i}^{-i})$, and subtracting $\sum_{j \neq i}h(\mu^j_{n_t:i})$ from both sides (again recalling \eqref{eq:Hcomposite}) implies
\[\int_{\R^k}\psi \,d\mu_{n_t:i} +h(\mu_{n_t+1}^i) \leq \int_{\R^k}\psi \,d(\eta \otimes \mu_{n_t:1}^{-i}) + h(\eta^i)\]
By Corollary \ref{continuity of integral on sublevel sets} and lower semicontinuity of entropy, we get
\begin{align*}
    \int_{\R^k}\psi \,d\lambda + h(\lambda^i) \leq \liminf_{t\to \infty}\int_{\R^k}\psi \,d\mu_{n_t:1} +h(\mu_{n_t+1}^i)\leq  \int_{\R^k}\psi \,d(\eta \otimes \lambda^{-i}) +h(\eta ).
\end{align*}
Adding $\sum_{j \neq i} h(\lambda^j)$ to both sides we obtain  $H(\lambda) \leq H(\eta \otimes \lambda^{-i})$, which proves that $\lambda$ is a stationary point.
\end{proof}

We remark that Proposition \ref{qualitative converegence} requires only mild integrability conditions on $\psi$ and $\mu_0$. The conclusion is accordingly weak. Nonetheless, the following proposition provides some structure of stationary points, by characterizing stationary points as solutions of the so-called \emph{mean field equation}. This characterization is well known but perhaps not documented at this level of generality, so we present some details; it boils down to the Gibbs variational principle after integrating out the other blocks.

\begin{proposition}
\label{mean field equation}
Assume (Q.1--3), and let $\mu \in \P_p^{\otimes d}(\R^k)$. Then $\mu$ is a stationary point if and only if it satisfies the following \emph{mean field equation}:
\begin{equation}
\mu^i(x^i) \propto \exp\left\{-\int_{\R^{k-k_i}} \psi(x^i,y^{-i})\,\mu^{-i}(dy^{-i})\right\}, \qquad \forall i\in [d] . \label{eq:MFeq}
\end{equation}
In this case, the measure $\mu$ is subexponential.
\end{proposition}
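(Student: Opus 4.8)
\textbf{Proof proposal for Proposition \ref{mean field equation}.}

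The plan is to prove the equivalence by reducing, one block at a time, to the finite-dimensional Gibbs variational principle. Fix $\mu \in \P_p^{\otimes d}(\R^k)$ with $H(\mu) < \infty$, and fix a block $i \in [d]$. First I would use the tensorization identity \eqref{eq:Hcomposite} together with Fubini to isolate the $i$th block: for any $\eta \in \P_p(\R^{k_i})$,
\[
H(\eta \otimes \mu^{-i}) = \int_{\R^{k_i}} f_i(x^i)\,\eta(dx^i) + h(\eta) + \sum_{j \ne i} h(\mu^j), \qquad f_i(x^i) := \int_{\R^{k-k_i}} \psi(x^i, y^{-i})\,\mu^{-i}(dy^{-i}).
\]
The terms $\sum_{j \ne i} h(\mu^j)$ do not depend on $\eta$, so minimizing $H(\eta \otimes \mu^{-i})$ over $\eta \in \P_p(\R^{k_i})$ amounts to minimizing $\int f_i\,d\eta + h(\eta)$, which is exactly $H(\eta \,\|\,\gamma_i)$ up to an additive constant, where $\gamma_i(x^i) \propto e^{-f_i(x^i)}$. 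The Gibbs variational principle then says this is uniquely minimized by $\eta = \gamma_i$. Hence $\mu$ is a stationary point if and only if $\mu^i = \gamma_i$ for every $i$, which is precisely \eqref{eq:MFeq}.

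The steps in order: (a) verify $f_i$ is well-defined and finite $\mu^i$-a.e., and that $\gamma_i$ is a genuine probability density — here I would reuse the computations from the proof of Lemma \ref{form of the iterates lemma}(1), since $f_i$ has the same structure as the exponent in the CAVI update \eqref{eq:iterates2} (from (Q.1) one gets a subexponential lower bound on $e^{-f_i}$ forcing the normalizing constant finite, and from (Q.2) a lower bound forcing it positive), using that $\mu^{-i}$ inherits a finite $p$th moment; (b) establish the decomposition above by Fubini, checking integrability via (Q.1--2) and the moment bound, and noting $h(\mu^j) < \infty$ for all $j$ because $H(\mu) < \infty$ forces each marginal entropy to be finite; (c) rewrite $\int f_i\,d\eta + h(\eta) = H(\eta\,\|\,\gamma_i) - \log Z_i$ and invoke the strict convexity / uniqueness in the Gibbs principle (equivalently, nonnegativity of relative entropy with equality iff $\eta = \gamma_i$) to conclude the coordinatewise minimizer is unique and equals $\gamma_i$; (d) the "only if" direction: if $\mu$ is stationary then for each $i$ the choice $\eta = \mu^i$ must be the unique minimizer, so $\mu^i = \gamma_i$, giving \eqref{eq:MFeq}; conversely if \eqref{eq:MFeq} holds then $\mu^i = \gamma_i$ minimizes each coordinate problem, so $H(\eta\otimes\mu^{-i}) \ge H(\mu)$ for all $\eta$ and $\mu$ is stationary. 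Finally, subexponentiality of $\mu$ follows from the same (Q.1)-based Gaussian/exponential-type upper bound on each $\mu^i = \gamma_i$ as in Lemma \ref{form of the iterates lemma}(1).

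The main obstacle is the bookkeeping in step (b): one must be careful that $\int_{\R^{k-k_i}} \psi(x^i,y^{-i})\,\mu^{-i}(dy^{-i})$ is finite for $\mu^i$-a.e.\ $x^i$ (not just a.e.\ $x^i$ w.r.t.\ Lebesgue) and that the double integral $\int\int |\psi(x^i,y^{-i})|\,\mu^{-i}(dy^{-i})\,\eta(dx^i)$ is finite for $\eta \in \P_p(\R^{k_i})$, so that Fubini and the rearrangement of terms are legitimate; the growth bound $|\psi(x)| \le c(1+|x|^p)$ from (Q.2) together with finiteness of $p$th moments handles this, but the argument needs the observation (from Lemma \ref{moment bounds}, or directly from $H(\mu) < \infty$ and (Q.1)) that each marginal $\mu^j$ has a finite $p$th moment. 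A secondary subtlety is making precise that the Gibbs principle applies on $\P_p(\R^{k_i})$ rather than all of $\P(\R^{k_i})$ — but since $\gamma_i$ is subexponential it lies in $\P_p(\R^{k_i})$, and restricting the feasible set only makes the minimizer's uniqueness easier, so this causes no trouble.
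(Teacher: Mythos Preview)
Your proposal is correct and follows essentially the same route as the paper: isolate block $i$ via the composite decomposition \eqref{eq:Hcomposite}, define $f_i$ and $\gamma_i \propto e^{-f_i}$, rewrite the one-block problem as $H(\eta\,\|\,\gamma_i)$ plus a constant, and conclude by the Gibbs variational principle. The only cosmetic difference is in the converse direction, where the paper observes that a solution of \eqref{eq:MFeq} is a fixed point of the CAVI map and then invokes Lemma~\ref{form of the iterates lemma}(1,3,4) to obtain both stationarity (including $H(\mu)<\infty$) and subexponentiality at once, rather than arguing directly as you do.
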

\begin{proof}
    Assume that $\mu$ is stationary and fix a coordinate $i \in [d]$. Let 
    \begin{equation}
    f_i(x^i)= \int_{\R^{k-k_i}} \psi(x^i,x^{-i})\,\mu^{-i}(dx^{-i}). \label{def:fi}
    \end{equation}
    By definition, for each $\nu^i \in \P_p(\R)$ we have $H(\mu) \leq H(\nu \otimes \mu^{-i})$. Because $H(\mu) < \infty$ by definition of a stationary point and because $\psi \in L^p(\nu \otimes \mu^{-i})$ by (Q.2), it follows that $H(\nu \otimes \mu^{-i}) < \infty$ if and only if $h(\nu) < \infty$. Hence, using the composite structure \eqref{eq:Hcomposite}, the inequality $H(\mu) \leq H(\nu \otimes \mu^{-i})$ is then equivalent to
    \begin{equation}
    \int_{\R^{k_i}} f_i \,d\mu^i + h(\mu^i) \leq \int_{\R^{k_i}} f_i \,d\nu +h(\nu), \label{pf:MFeq1}
    \end{equation}
    for every $\nu \in \P_p(\R^{k_i})$ such that $h(\nu) < \infty$.
Define $\eta^i \in \P(\R^{k_i})$ by $\eta^i(dx^i) \propto e^{-f_i(x^i)}dx^i$, which is well defined by (Q.1,2), and note that $\eta^i$ has finite $p$th moment by the same argument as in Lemma \ref{form of the iterates lemma}(1). The inequality \eqref{pf:MFeq1} then rewrites as $H(\mu^i\,\|\,\eta^i) \le H(\nu\,\|\,\eta^i)$ for all $\nu \in \P_p(\R^{k_i})$. This implies that $\mu^i=\eta^i$, which is exactly \eqref{eq:MFeq}.

Assume instead that $\mu$ satisfies \eqref{eq:MFeq}. Then, if initialized at $\mu_0=\mu$, the CAVI iteration is constant, $\mu_n=\mu$ for all $n$. Stationarity then follows immediately from Lemma \ref{form of the iterates lemma}(4), and the subexponential claim follows from Lemma \ref{form of the iterates lemma}(1). 
\end{proof}

The following simple corollary is worth recording:

\begin{corollary}
Assume (Q.1--3).
If the mean field equation  \eqref{eq:MFeq} has a unique solution $\mu_*$, then $\mu_*$ is the unique solution to the MFVI problem  \eqref{MFVI-formal}, and $\mu_n \to \mu_*$.
\end{corollary}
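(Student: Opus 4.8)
The plan is to combine Proposition \ref{qualitative converegence}, Proposition \ref{mean field equation}, and the monotonicity of $H(\cdot)$ along the CAVI iterates. First I would invoke Proposition \ref{qualitative converegence}: since $\mu_0 \in \P_p^{\otimes d}(\R^k)$, the sequence $(\mu_n)$ is tight, and every weak limit point is a stationary point of $H$. By Proposition \ref{mean field equation}, each stationary point solves the mean field equation \eqref{eq:MFeq}; by the uniqueness hypothesis, the only solution is $\mu_*$. Hence $\mu_*$ is the unique weak limit point of the tight sequence $(\mu_n)$, and a tight sequence with a unique limit point converges, so $\mu_n \to \mu_*$ weakly.

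Next I would argue that $\mu_*$ solves the MFVI problem. Let $\mu_{**}$ be any minimizer of \eqref{MFVI-formal}; its existence follows from the lower semicontinuity of $H(\cdot\,\|\,\rho)$ and tightness of a minimizing sequence, and without loss of generality it has finite $p$th moment in light of Lemma \ref{moment bounds} (or simply restrict attention to $\P_p^{\otimes d}$, under which the infimum is unchanged by the integrability assumptions (Q.1--3)). A global minimizer over $\P_p^{\otimes d}(\R^k)$ is in particular a coordinatewise minimizer, hence a stationary point in the sense defined in Section \ref{qualitative section}. By Proposition \ref{mean field equation} and the uniqueness hypothesis, $\mu_{**}=\mu_*$. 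Since $\mu_{**}$ was an arbitrary minimizer, $\mu_*$ is the unique solution to \eqref{MFVI-formal}.

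The only mild subtlety is confirming that a genuine minimizer of \eqref{MFVI-formal} exists and may be taken in $\P_p^{\otimes d}(\R^k)$, so that Proposition \ref{mean field equation} applies to it; this is routine given the compactness of sublevel sets of relative entropy (\cite[Lemma 2.4(c)]{BudhirajaDupuis}) together with the moment control already established. There is no real obstacle here: the corollary is essentially a bookkeeping consequence of the two preceding propositions. I would write the proof in three short sentences: (i) limit points are stationary by Proposition \ref{qualitative converegence}, hence equal $\mu_*$ by Proposition \ref{mean field equation} and uniqueness, so $\mu_n \to \mu_*$; (ii) any MFVI minimizer is a stationary point, hence equals $\mu_*$; (iii) therefore $\mu_*$ is the unique MFVI minimizer.
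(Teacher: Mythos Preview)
The paper states this corollary without proof, and your proposal supplies exactly the argument the paper has in mind: combine Proposition~\ref{qualitative converegence} (limit points are stationary) with Proposition~\ref{mean field equation} (stationary points are precisely mean field solutions), then use uniqueness to pin down both the limit and the minimizer.

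One small correction: Lemma~\ref{moment bounds} bounds moments of CAVI \emph{iterates}, not of an arbitrary MFVI minimizer, so it is not the right citation for placing a minimizer $\mu_{**}$ in $\P_p^{\otimes d}(\R^k)$. A cleaner way to close the loop, avoiding that issue entirely, is to argue from the convergence you already established: since $\mu_n \to \mu_*$ starting from any $\mu_0 \in \P_p^{\otimes d}(\R^k)$, lower semicontinuity of $H(\cdot\,\|\,\rho)$ together with the monotonicity of Lemma~\ref{form of the iterates lemma}(4) yields $H(\mu_*) \le \lim_n H(\mu_n) \le H(\mu_0)$, so $\mu_*$ already attains the infimum over $\P_p^{\otimes d}(\R^k)$, and a routine approximation shows this infimum coincides with that over $\P^{\otimes d}(\R^k)$. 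Your step (ii) then gives uniqueness exactly as written. This is a cosmetic fix; the logical skeleton of your proof is correct.
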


Another useful consequence of Proposition \ref{mean field equation} is that it implies some regularity for stationary points. In turn, this regularity allows us to conclude that stationary points are minimizers when $\psi$ is convex. In what follows, recall that convex functions always admit locally bounded weak derivatives.
Denote by $\nabla_i \psi$ the weak gradient with respect to the variables in block $i$. 

\begin{proposition}
\label{differentiability at stationary points}
Under the assumptions of Theorem \ref{main theorem},  $\mu \in \P_p^{\otimes d}(\R^k)$ is a stationary point if and only if it is a minimizer for the MFVI problem \eqref{MFVI-formal}. 
\end{proposition}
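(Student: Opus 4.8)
The plan is to show the two implications separately. The direction ``minimizer $\Rightarrow$ stationary point'' is essentially immediate: if $\mu$ minimizes $H(\cdot)$ over $\P^{\otimes d}_p(\R^k)$, then in particular, for each block $i$ and each $\nu \in \P_p(\R^{k_i})$, the competitor $\nu \otimes \mu^{-i}$ lies in $\P^{\otimes d}_p(\R^k)$ (by the growth bounds (Q.2) the relative entropy is still meaningful), so $H(\mu) \le H(\nu \otimes \mu^{-i})$, which is exactly the definition of a stationary point; one should also note $H(\mu)<\infty$ at a minimizer since the infimum is finite by Lemma \ref{form of the iterates lemma}. The nontrivial direction is ``stationary point $\Rightarrow$ minimizer''.

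For the hard direction, the strategy is to use the Wasserstein-geometric convexity inequality of Proposition \ref{characterization of convexity with gradient}. First I would invoke Proposition \ref{mean field equation} to deduce that a stationary point $\mu$ solves the mean field equation \eqref{eq:MFeq} and is subexponential; in particular each marginal $\mu^i(x^i) \propto e^{-f_i(x^i)}$ with $f_i$ as in \eqref{def:fi}, and since $\psi$ is convex with the growth \eqref{asmp:growth-psi}, each $f_i$ is convex with locally bounded weak gradient, so $\mu^i$ has a positive, weakly differentiable density with $\nabla \log \mu^i(x^i) = -\nabla f_i(x^i) = -\nabla_i\psi$ integrated against $\mu^{-i}$. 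The key regularity claim is then that $\nabla \log \mu = (\nabla\log\mu^i)_i \in L^2(\mu)$: this follows by writing $\nabla_i \log \mu^i(x^i) = -\int \nabla_i\psi(x^i,y^{-i})\,\mu^{-i}(dy^{-i})$, using Jensen and the polynomial growth $|\nabla\psi|\le c(1+|x|^p)$ together with the fact that $\mu$, being subexponential, has all polynomial moments finite. Hence by Theorem \ref{differentiability of internal energy} (or directly Theorem \ref{differentiability of relative internal energy}) $H(\cdot) = \Psi + h$ has nonempty subdifferential at $\mu$, with $\wgrad H(\mu,x) = \nabla\psi(x) + \nabla\log\mu(x)$, whose $i$th block is $\nabla_i\psi(x) - \int \nabla_i\psi(x^i,y^{-i})\,\mu^{-i}(dy^{-i})$.

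Now I would apply Proposition \ref{characterization of convexity with gradient} with $\lambda = 0$ (geodesic convexity of $H$ on $\P_{2,ac}^{\otimes d}(\R^k)$, from Theorem \ref{th:geoconvex-examples}(3) and Proposition \ref{geodesic convexity of product measures}): for any competitor $\nu \in \P^{\otimes d}_{2,ac}(\R^k)$,
\[
H(\nu) \ge H(\mu) + \int_{\R^k} \langle \wgrad H(\mu,x), T_{\mu\to\nu}(x) - x\rangle\,\mu(dx).
\]
By Lemma \ref{product structure of Brenier Maps} the transport map splits as $T_{\mu\to\nu}(x) = (T_{\mu^i\to\nu^i}(x^i))_i$, so the inner product decomposes blockwise and the $i$th term is
\[
\int \Big\langle \nabla_i\psi(x^i,x^{-i}) - \textstyle\int \nabla_i\psi(x^i,y^{-i})\,\mu^{-i}(dy^{-i}),\ T_{\mu^i\to\nu^i}(x^i)-x^i\Big\rangle\,\mu(dx).
\]
Since $T_{\mu^i\to\nu^i}(x^i) - x^i$ depends only on $x^i$, integrating out $x^{-i}$ against $\mu^{-i}$ makes the first and second halves of the bracket cancel, so the whole cross term vanishes and we get $H(\nu)\ge H(\mu)$ for all absolutely continuous product $\nu$ with finite second moment. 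To upgrade to all of $\P^{\otimes d}_p(\R^k)$ one notes, via \eqref{MFVI-P2} and a standard mollification/truncation approximation argument (absolutely continuous measures with finite second moment are dense in the relevant sense, and $H$ is lower semicontinuous), that the infimum over $\P^{\otimes d}_p$ equals the infimum over $\P^{\otimes d}_{2,ac}$, so $\mu$ is a global minimizer.

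The main obstacle I anticipate is the regularity/integrability bookkeeping needed to guarantee $\wgrad H(\mu,\cdot)$ exists and has the claimed explicit form — i.e.\ verifying $\nabla\log\mu \in L^2(\mu)$ and justifying differentiation under the integral sign in $f_i$ — which rests on combining convexity of $\psi$, the polynomial growth \eqref{asmp:growth-psi}, and the subexponential tails of $\mu$ from Proposition \ref{mean field equation}. A secondary technical point is the reduction from the constraint class $\P^{\otimes d}_p$ appearing in the definition of stationarity to the absolutely-continuous, finite-second-moment class on which the Wasserstein calculus is available; this should follow from \eqref{MFVI-P2} and density, but needs a line or two of care.
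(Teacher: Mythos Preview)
Your proposal is correct and follows essentially the same route as the paper: invoke the mean field equation and subexponentiality from Proposition \ref{mean field equation}, verify $\nabla\log\mu\in L^2(\mu)$ via Jensen and the polynomial growth of $\nabla\psi$ to get nonempty subdifferential of $H$ at $\mu$, then apply the geodesic convexity inequality (Proposition \ref{characterization of convexity with gradient}) and the block structure of Brenier maps (Lemma \ref{product structure of Brenier Maps}) to see the cross term vanishes. Your final ``upgrade'' step is slightly more than needed---any competitor with $H(\nu)<\infty$ is automatically absolutely continuous, and by \eqref{MFVI-P2} it suffices to beat $\nu\in\P_2^{\otimes d}(\R^k)$---but this does not affect correctness.
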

\begin{proof}
Every minimizer is clearly a stationary point, so we focus on the nontrivial implication.
Note that the assumptions of Theorem \ref{main theorem} imply (Q.1--3), by Remark \ref{re:convex->Q}, up to shifting $\psi$ by a scalar for (Q.3).
    Let $\mu \in \P_p^{\otimes d}(\R^k)$ be a stationary point.
    We first check that $h$ has nonempty Wasserstein subdifferential at $\mu$, by checking the assumption of Theorem \ref{differentiability of internal energy}. Let 
    \[f(x)= \sum_{i=1}^d\int_{\R^{k-k_i}} \psi(x^i,y^{-i})\,\mu^{-i}(dy^{-i}), \quad x  \in \R^k. \]
    Note that $f$ is bounded from below by (Q.1) and is finite everywhere because $\mu$ has finite $p$th moment. 
    By Proposition \ref{mean field equation}, $\mu$ satisfies the the mean field equation, which writes as $\mu \propto e^{-f}$.  In particular, $\mu$ is strictly positive.

    Convexity of $\psi$ easily implies that $f$ is convex and thus weakly differentiable.
    We claim that the weak gradient is given by
    \begin{equation}
    \nabla_if(x) = \int_{\R^{k-k_i}} \nabla_i \psi(x^i,y^{-i}) \,\mu^{-i}(dy^{-i}). \label{eq:fderivative}
    \end{equation}
    This is formally clear by exchanging integration and derivative, and to check it rigorously we perform the following calculation for a smooth vector field $\varphi=(\varphi^1,\ldots,\varphi^d) : \R^k \to \R^k$ of compact support:
    \begin{align*}
        \int_{\R^k} f(x) \text{ div }\varphi(x)\,dx &= \sum_{i=1}^d\int_{\R^{k}}\left(\int_{\R^{k-k_i}} \psi(x^i,y^{-i})\text{ div }\varphi(x)\,\mu^{-i}(dy^{-i})\right)\,dx \\
        &=\sum_{i=1}^d\int_{\R^{k-k_i}}\left(\int_{\R^{k} }\psi(x^i,y^{-i})\text{ div }\varphi(x)\,dx\right)\,\mu^{-i}(dy^{-i})\\
        &=-\sum_{i=1}^d\int_{\R^{k-k_i}}\left(\int_{\R^{k}} \left\langle \nabla_i\psi(x^i,y^{-i}) , \varphi^i(x)\right\rangle\,dx\right)\,\mu^{-i}(dy^{-i}) \\
        &= - \int_{\R^k}\sum_{i=1}^d\left\langle\int_{\R^{k-k_i}} \nabla_i \psi(x^i,y^{-i}) \,\mu^{-i}(dy^{-i}), \varphi^i(x)\right\rangle\,dx.
    \end{align*} 
    We use Fubini for the second and fourth equality, with the third owing to the definition of weak gradient of $\psi$. To justify that Fubini's theorem applies to both cases, we note that for every compact set $S\subset\R^{k}$ assumption \eqref{asmp:growth-psi} implies
    \[ \int_{\R^{k-k_i}}\int_S | \nabla_i\psi(x^i,y^{-i})|\,dx\,\mu^{-i}(dy^{-i}) \le \int_{\R^{k-k_i}}\int_S c(1+ |(x^i,y^{-i})|^p)\,dx\,\mu^{-i}(dy^{-i}), \]
    which is finite because $\mu$ has finite $p$th moment. The same holds with $|\psi|$ in place of $|\nabla\psi|$.
    This proves \eqref{eq:fderivative}.
The chain rule for weak derivatives  entails that $\mu=e^{-f}$ is weakly differentiable with $\nabla \mu =  -\nabla f \cdot e^{-f}$; this chain rule is normally stated for Lipschitz functions of $f$, but it applies nonetheless because $f$ is bounded from below.
Finally,  using \eqref{eq:fderivative} and Jensen's inequality,
\begin{align*}
\int_{\R^k} |\nabla \log \mu|^2\,d\mu &= \int_{\R^k} |\nabla f |^2\,d\mu = \int_{\R^k} \sum_{i=1}^d\bigg|\int_{\R^{k-k_i}} \nabla_i \psi(x^i,y^{-i}) \,\mu^{-i}(dy^{-i})\bigg|^2\,\mu(dx)\\
	&\le \int_{\R^k}|\nabla\psi(x)|^2 \,\mu(dx) \le \int_{\R^k}c^2(1+|x|^p)^2 \,\mu(dx).
\end{align*}
This is finite because $\mu$ is subexponential by Proposition \ref{mean field equation}.
Because also $\nabla\psi\in L^2(\mu)$,
we finally deduce  from Theorem \ref{differentiability of relative internal energy} that $H(\nu)=h(\nu)+\int\psi\,d\nu$ has nonempty subdifferential at $\nu=\mu$, with $\wgrad H(\mu,\cdot)=\nabla\psi +\nabla\log\mu$.

To complete the proof, let $\nu \in \P_2^{\otimes d}(\R^k)$, and note that $H$ is geodesically convex by Theorem \ref{th:geoconvex-examples}. We may thus apply Lemma \ref{characterization of convexity with gradient} and then Lemma \ref{product structure of Brenier Maps} to get
     \begin{align*}
         H(\nu)&\geq H(\mu)+\int_{\R^{k}} \left\langle \wgradp H(\mu,x),T_{\mu \to \nu}(x)-x\right\rangle\,\mu(dx)\\
         &= H(\mu)+\sum_{i=1}^d \int_{\R^k}\left\langle \nabla_i \psi(x)+\nabla_i\log \mu^i (x^i),T^i_{\mu^i \to \nu^i}(x^i)-x^i\right\rangle\,\mu(dx)\\
         &=H(\mu)+\sum_{i=1}^d \int_{\R^{k_i}}\!\!\left\langle\int_{\R^{k-k_i}} \nabla_i \psi(x^i,y^{-i})\,\mu^{-i}(dy^{-i})+\nabla_i\log \mu^i (x^i),T^i_{\mu^i \to \nu^i}(x^i)-x^i\right\rangle \mu^i(dx^i)\\
         &=H(\mu),
     \end{align*} 
     with the last step following from the mean field equation \eqref{eq:MFeq}.
\end{proof}

\begin{proof}[Proof of Theorem \ref{main theorem}(1,2)]
Theorem \ref{main theorem}(1) is an immediate consequence of Propositions \ref{differentiability at stationary points} and \ref{qualitative converegence}. Theorem \ref{main theorem}(2) also follows from the above considerations: Strict convexity of $\psi$ ensures that $H$ is geodesically strictly convex, by Theorem \ref{th:geoconvex-examples}, and it thus has at most one minimizer on the geodesically convex set $\P_2^{\otimes d}(\R^k)$ (see Proposition \ref{geodesic convexity of product measures}).
\end{proof}

\section{The Lipschitz gradient case}\label{Lipschitz Gradient}

This section is devoted to the proof of Theorem \ref{main theorem}(3), and we impose the assumptions thereof throughout this section.
The proof relies on two key ingredients. The first is that the objective function $H$ has non empty subdifferential along the iterates $\mu_n$, along with a formula for its Wasserstein gradient. The second point is a simple lifting of a classical convexity inequality from Euclidean to Wasserstein space, which will yield an estimate of \emph{sufficient descent}. With these pieces in place, we find upper and lower bounds for $H(\mu_n)-H(\mu_*)$, where $\mu_*$ is a minimizer, which we can iterate to obtain the claimed rate of convergence.

\begin{lemma}
\label{differentiability of entropy}
    For every $n \ge 1$ and every $i \in [d]$, $h$ has nonempty subdifferential at $\mu_n^i$, $H$ is has nonempty subdifferential at $\mu_{n:i}$, and
    \begin{align*}
    \wgrad h(\mu_{n+1}^i,x^i) &=  \nabla_i \log \mu_{n+1}^i(x^i)= -\int_{\R^{k-k_i}} \nabla_i \psi(x^i,y^{-i})\,\mu_{n:i}^{-i}(dy^{-i}),  \\
    \wgrad H(\mu_{n:i},x) & = \bigg(\nabla_j\psi(x) + \nabla_j \log \mu_{n:i}^{j}(x^j)\bigg)_{j=1,\ldots,d}.
    \end{align*} 
\end{lemma}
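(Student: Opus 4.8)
The plan is to verify the hypotheses of Theorems \ref{differentiability of internal energy}, \ref{differentiability of potential energy}, and \ref{differentiability of relative internal energy} directly from the explicit form of the iterates. The key observation is that the mean field update \eqref{eq:iterates2} gives $\mu_{n+1}^i \propto e^{-f_{n,i}}$ where $f_{n,i}(x^i) = \int_{\R^{k-k_i}}\psi(x^i,y^{-i})\,\mu_{n:i-1}^{-i}(dy^{-i})$ is the same kind of averaged-potential function that appears in the proof of Proposition \ref{differentiability at stationary points}. So the bulk of the work is essentially a replay of that earlier argument, applied along the iterates rather than at a stationary point.

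First I would show that $h$ has nonempty subdifferential at $\mu_{n+1}^i$ via Theorem \ref{differentiability of internal energy}. The density $\mu_{n+1}^i = (1/Z)e^{-f_{n,i}}$ is strictly positive. Convexity of $\psi$ makes $f_{n,i}$ convex, hence weakly differentiable, and the same Fubini argument as in Proposition \ref{differentiability at stationary points} (using the growth bound \eqref{asmp:growth-psi} and the finiteness of all moments of $\mu_n$ from Lemma \ref{moment bounds}) gives $\nabla_i f_{n,i}(x^i) = \int_{\R^{k-k_i}}\nabla_i\psi(x^i,y^{-i})\,\mu_{n:i-1}^{-i}(dy^{-i})$, and then the chain rule for weak derivatives (valid since $f_{n,i}$ is bounded below by (Q.1)) gives $\nabla_i\log\mu_{n+1}^i = -\nabla_i f_{n,i}$. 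To invoke Theorem \ref{differentiability of internal energy} I need $\nabla_i\log\mu_{n+1}^i \in L^2(\mu_{n+1}^i)$; by Jensen this is bounded by $\int|\nabla_i\psi|^2\,d(\mu_{n+1}^i\otimes\mu_{n:i-1}^{-i}) = \int|\nabla_i\psi|^2\,d\mu_{n:i} \le \int c^2(1+|x|^p)^2\,d\mu_{n:i}$, which is finite because $\mu_{n:i}$ has all finite moments by (the proof of) Lemma \ref{moment bounds}. This yields $\wgrad h(\mu_{n+1}^i,\cdot) = \nabla_i\log\mu_{n+1}^i$ and the first displayed formula.

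Next, for the statement about $H$ at $\mu_{n:i}$: since $\mu_{n:i} = \mu_{n+1}^1\otimes\cdots\otimes\mu_{n+1}^i\otimes\mu_n^{i+1}\otimes\cdots\otimes\mu_n^d$, each marginal $\mu_{n:i}^j$ is a mean field update (for $j\le i$ at stage $n$, for $j>i$ at stage $n-1$), so the previous paragraph applies to every marginal. Because differential entropy tensorizes over the blocks and the Wasserstein gradient of a tensorized functional is the concatenation of the per-block gradients (by the product structure of Brenier maps, Lemma \ref{product structure of Brenier Maps}), $h$ has nonempty subdifferential at $\mu_{n:i}$ with $\wgrad h(\mu_{n:i},x) = (\nabla_j\log\mu_{n:i}^j(x^j))_j$. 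Meanwhile $\Psi(\mu) = \int\psi\,d\mu$ has nonempty subdifferential at $\mu_{n:i}$ with gradient $\nabla\psi$ by Theorem \ref{differentiability of potential energy}, whose hypotheses ($\psi\in L^1(\mu_{n:i})$ and $\nabla\psi\in L^2(\mu_{n:i})$) follow from \eqref{asmp:growth-psi} and the moment bounds. Adding the two, and using the remark after Theorem \ref{differentiability of relative internal energy} (if $h$ and $\Psi$ have nonempty subdifferential at $\eta$ then so does $H$, and the gradients add), we conclude $H$ has nonempty subdifferential at $\mu_{n:i}$ with $\wgrad H(\mu_{n:i},x) = (\nabla_j\psi(x) + \nabla_j\log\mu_{n:i}^j(x^j))_j$, the second displayed formula. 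For the $n\ge 1$ range I should also note $H(\mu_n) < \infty$ (Lemma \ref{form of the iterates lemma}(2)) so all the functionals are finite where needed.

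\textbf{Main obstacle.} The only genuinely technical point is the rigorous justification, via Fubini and the chain rule for weak derivatives, that $\nabla_i\log\mu_{n+1}^i$ equals $-\int\nabla_i\psi\,d\mu_{n:i-1}^{-i}$ — i.e., that differentiation passes through the averaging over $y^{-i}$. This is exactly the computation carried out in detail in the proof of Proposition \ref{differentiability at stationary points}, so I would simply point to it rather than repeat it; the growth assumption \eqref{asmp:growth-psi} together with the uniform moment bounds of Lemma \ref{moment bounds} (which also gave subexponential tails in Lemma \ref{form of the iterates lemma}(1)) supply the domination needed to apply Fubini and to conclude the required $L^1$ and $L^2$ integrability. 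Everything else is bookkeeping with tensorization and Lemma \ref{product structure of Brenier Maps}.
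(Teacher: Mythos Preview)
Your proposal is correct and follows essentially the same approach as the paper: compute $\nabla_i\log\mu_{n+1}^i$ from the explicit update \eqref{eq:iterates2} by the same Fubini/chain-rule argument as in Proposition \ref{differentiability at stationary points}, verify the $L^2$ hypothesis of Theorem \ref{differentiability of internal energy}, and then read off $\wgrad H$ from Theorem \ref{differentiability of relative internal energy}. The only cosmetic differences are that the paper uses the standing $L$-Lipschitz assumption on $\nabla\psi$ (giving $|\nabla\psi(x)|\le C(1+|x|)$) rather than the general polynomial bound \eqref{asmp:growth-psi} plus moment estimates for the $L^2$ check, and it applies Theorem \ref{differentiability of relative internal energy} directly at $\mu_{n:i}$ rather than assembling $\wgrad H$ from the $h+\Psi$ decomposition---both shortcuts that your argument also supports.
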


To be completely clear, the first identity in Lemma \ref{differentiability of entropy} is for the Wasserstein gradient of $h$ viewed as a functional on $\P_2(\R^{k_i})$, which is a vector field $\R^{k_i} \ni x^i \mapsto \wgrad h(\mu_{n+1}^i,x^i)  \in \R^{k_i}$. The second identity is for the Wasserstein gradient of the functional $H$ on $\P_2(\R^k)$, which is a vector field $\R^k \ni x \mapsto \wgrad H(\mu_{n:i},x)  \in \R^{k}$. Note also that when $i=d$ the second identity becomes 
\begin{align}
\wgrad H(\mu_{n+1},x)  &= \bigg(\nabla_i\psi(x) + \nabla_i \log \mu_{n+1}^i(x^i)\bigg)_{i=1,\ldots,d}  \nonumber \\
	&= \bigg(\nabla_i\psi(x) - \int_{\R^{k-k_i}} \nabla_i \psi(x^i,y^{-i})\, \mu_{n:i}^{-i}(dy^{-i})\bigg)_{i=1,\ldots,d}. \label{eq:wgradH}
\end{align}

\begin{proof}[Proof of Lemma \ref{differentiability of entropy}]
Formally differentiating the formula \eqref{eq:iterates2} for the iterates $\mu_{n+1}^i$ yields the identity
    \begin{equation*}
    \nabla_i \log \mu_{n+1}^i(x^i) = -\int_{\R^{k-k_i}} \nabla_i\psi(x^i,y^{-i})\,\mu_{n:i}^{-i}(dy^{-i}).
    \end{equation*}
(Note that $\mu_{n:i}^{-i}=\mu_{n:i-1}^{-i}$.)
    To justify this rigorously, in the sense of weak gradient, one can argue exactly as in the proof of Theorem \ref{differentiability at stationary points}.
Since $\nabla\psi$ is $L$-Lipschitz, it must be that $|\nabla\psi(x)|\le C(1+|x|)$ and  $\psi(x)\leq C(1+|x|^2)$ for some $C$, and we deduce that $\psi \in L^2(\mu_{n+1})$ and thus $\nabla_i\log \mu_{n+1}^i \in L^2(\mu_{n+1}^i)$.
We can then apply Theorem \ref{differentiability of internal energy} to conclude that $h$ has nonempty subdifferential at $\mu_{n+1}^i$, with $\wgrad h(\mu_{n+1}^i,x^i)=\nabla_i \log \mu_{n+1}^i(x^i)$. For the same reasons, we may also apply Theorem \ref{differentiability of relative internal energy} to deduce that $H$ has non empty subdifferential at $\mu_{n:i}$, with
\begin{align*}
\wgrad H(\mu_{n:i},x) &= \nabla\psi(x) + \nabla \log \mu_{n:i}(x) = \nabla\psi(x) + (\nabla_j\log \mu_{n:i}^{j}(x^j))_{j=1,\ldots,d} . \qedhere
\end{align*}
\end{proof}

In the following, for a measurable map $g=(g^1,\ldots,g^d)$ from $\R^k = \R^{k_1}\times \cdots \times \R^{k_d}$ to itself, and for $\nu \in \P(\R^k)$, we use the standard norm for the vector-valued $L^2(\nu)$ space:
\begin{equation*}
\|g\|_{L^2(\nu)}^2 = \sum_{i=1}^d \int_{\R^k} |g^i(x)|^2\,\nu(dx).
\end{equation*}
To exploit the Lipschitz gradient assumed of $\psi$, we will use the following ``lift" of a classical Euclidean convexity inequality.

\begin{lemma} 
\label{Nesterov inequality}
For any $\mu,\nu \in \P_{2,ac}(\R^k)$,
    \[\int_{\R^k} \psi \,d(\mu-\nu) \geq \int_{\R^k}\left\langle\nabla\psi(x),T_{\nu \to \mu}(x)-x\right\rangle\,\nu(dx)+\frac{1}{2L}\|\nabla \psi\circ T_{\nu \to \mu}-\nabla \psi\|^2_{L^2(\nu)}.\]
\end{lemma}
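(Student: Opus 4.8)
The plan is to deduce this Wasserstein inequality from the standard Euclidean ``co-coercivity'' inequality for convex functions with $L$-Lipschitz gradient, applied pointwise along the optimal transport map. Recall that $L$-Lipschitzness of $\nabla\psi$ together with convexity is equivalent to (see \cite[Theorem 5.8]{beck2017first}), for all $u,v \in \R^k$,
\[
\psi(u) \geq \psi(v) + \langle \nabla\psi(v), u - v\rangle + \frac{1}{2L}|\nabla\psi(u) - \nabla\psi(v)|^2.
\]
First I would set $u = T_{\nu\to\mu}(x)$ and $v = x$, obtaining, for $\nu$-a.e. $x$,
\[
\psi(T_{\nu\to\mu}(x)) \geq \psi(x) + \langle \nabla\psi(x), T_{\nu\to\mu}(x) - x\rangle + \frac{1}{2L}|\nabla\psi(T_{\nu\to\mu}(x)) - \nabla\psi(x)|^2.
\]
Then I would integrate this inequality against $\nu(dx)$. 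On the left, since $T_{\nu\to\mu}$ pushes $\nu$ forward to $\mu$, the change of variables formula gives $\int \psi(T_{\nu\to\mu}(x))\,\nu(dx) = \int \psi\,d\mu$; on the right, the first term integrates to $\int\psi\,d\nu$, the second to $\int \langle\nabla\psi(x), T_{\nu\to\mu}(x)-x\rangle\,\nu(dx)$, and the third to $\frac{1}{2L}\|\nabla\psi\circ T_{\nu\to\mu} - \nabla\psi\|_{L^2(\nu)}^2$. Rearranging yields exactly the claimed inequality.

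The main technical point to be careful about is \emph{integrability}: one must ensure that all the integrals appearing are well-defined and finite (or at least that the rearrangement is legitimate, e.g. that we are not subtracting $+\infty$ from $+\infty$). Under the hypotheses in force in this section ($\nabla\psi$ is $L$-Lipschitz, hence $|\nabla\psi(x)| \le C(1+|x|)$ and $\psi(x) \le C(1+|x|^2)$ for a suitable $C$), one has $\psi \in L^1(\mu) \cap L^1(\nu)$ and $\nabla\psi \in L^2(\mu)\cap L^2(\nu)$ whenever $\mu,\nu$ have finite second moment, which is part of the standing assumption $\mu,\nu \in \P_{2,ac}(\R^k)$. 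The cross term $\int \langle \nabla\psi(x), T_{\nu\to\mu}(x)-x\rangle\,\nu(dx)$ is finite by Cauchy--Schwarz since $\nabla\psi \in L^2(\nu)$ and $T_{\nu\to\mu}(x) - x \in L^2(\nu)$ (indeed $\|T_{\nu\to\mu} - \mathrm{id}\|_{L^2(\nu)} = \W_2(\mu,\nu) < \infty$). The Brenier map $T_{\nu\to\mu}$ exists as a genuine map because $\nu \in \P_{2,ac}(\R^k)$. So in fact there is no real obstacle here; the statement is a clean pointwise-to-integral lift, and the only thing to write down carefully is the justification that each term is integrable and that the pushforward identity applies (which requires no regularity of $\mu$ beyond finiteness of the second moment, since $\psi$ has at most quadratic growth).

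I would also remark that this is precisely the Wasserstein analogue of the Euclidean descent-type inequality quoted in the introduction, and that the roles of $\mu$ and $\nu$ are not symmetric: $\nu$ must be absolutely continuous for the Brenier map to be defined, but $\mu$ need only lie in $\P_{2,ac}(\R^k)$ (in fact $\P_2(\R^k)$ would suffice for this particular argument). If one wanted, the proof could be stated for $\mu \in \P_2(\R^k)$ and $\nu \in \P_{2,ac}(\R^k)$, but since all applications in the sequel involve absolutely continuous iterates, stating it for $\mu,\nu \in \P_{2,ac}(\R^k)$ is harmless.
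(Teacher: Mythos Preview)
Your proposal is correct and follows exactly the same approach as the paper: invoke the Euclidean co-coercivity inequality from \cite[Theorem 5.8]{beck2017first}, substitute $y=T_{\nu\to\mu}(x)$ and $x=x$, and integrate against $\nu$. Your added discussion of integrability and the observation that $\mu\in\P_2(\R^k)$ would suffice are accurate elaborations, but the paper omits these details and gives only the two-line argument.
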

\begin{proof}
As $\psi$ is convex with $L$-Lipschitz gradient, the following inequality is well known \cite[Theorem 5.8]{beck2017first}:
\[\psi(y) - \psi(x)\geq  \langle \nabla \psi(x),y-x\rangle + \frac{1}{2L}|\nabla \psi(y)-\nabla \psi(x)|^2.\]
Take $y=T_{\nu\to\mu}(x)$ and integrate against $\nu(dx)$ to prove the claim.
\end{proof}

The above inequality actually has an analogue for more general geodesically convex functionals in Wasserstein space, but this simple version suffices for our purposes

The main line of the proof of Theorem \ref{main theorem}(3) is inspired by the study of the Euclidean case in \cite{QuantitativeConvergenceHong2016}. The high level idea is to find a way to bound $H(\mu_n)-H(\mu_*)$ from below to estimate how much the function value decreases at each iteration; this is done with the help of Lemma \ref{Nesterov inequality}. We will likewise upper bound $H(\mu_{n})-H(\mu_*)$ to estimate how many more iterations are needed after $n$, and then match the bounds to obtain a rate of convergence. 

\begin{proposition} 
    \label{lower bound lipschitz case}
    For $n \ge 1$,
    \[H(\mu_n)-H(\mu_{n+1}) \geq \frac{1}{2L}\sum_{i=1}^d \|\nabla \psi\circ T_i - \nabla \psi\|^2_{L^2(\mu_{n:i})},\]
    where $T_i := T_{\mu_{n:i} \to \mu_{n:i-1}}$ is the Brenier map from $\mu_{n:i}$ to $\mu_{n:i-1}$.
\end{proposition}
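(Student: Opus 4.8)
The plan is to decompose the one-cycle decrease $H(\mu_n) - H(\mu_{n+1})$ into the telescoping sum of the per-coordinate decreases $H(\mu_{n:i-1}) - H(\mu_{n:i})$ and bound each term separately. Recall from the composite structure \eqref{eq:Hcomposite} that $\mu_{n:i-1}$ and $\mu_{n:i}$ differ only in the $i$th marginal, so $H(\mu_{n:i-1}) - H(\mu_{n:i})$ equals $\big(\int\psi\,d\mu_{n:i-1} + h(\mu_n^i)\big) - \big(\int\psi\,d\mu_{n:i} + h(\mu_{n+1}^i)\big)$. The key move is to relate this difference to the convexity inequality of Lemma \ref{Nesterov inequality} applied with $\mu = \mu_{n:i-1}$, $\nu = \mu_{n:i}$, exploiting that $\mu_{n+1}^i$ is the CAVI optimizer, so that $\nabla_i\log\mu_{n+1}^i(x^i) = -\int \nabla_i\psi(x^i,y^{-i})\,\mu_{n:i}^{-i}(dy^{-i})$ by Lemma \ref{differentiability of entropy}.

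First I would handle the entropy terms. Because $\mu_{n+1}^i$ minimizes $\eta \mapsto H(\eta\otimes\mu_{n:i-1}^{-i})$ (Lemma \ref{form of the iterates lemma}(3)), the first-order optimality of this coordinate subproblem — which after integrating out $x^{-i}$ is just the minimization of $\eta\mapsto \int f_i\,d\eta + h(\eta)$ over $\eta\in\P(\R^{k_i})$ with $f_i(x^i) = \int\psi(x^i,y^{-i})\,\mu_{n:i-1}^{-i}(dy^{-i})$ — gives that $\mu_{n+1}^i \propto e^{-f_i}$, hence $h(\mu_n^i) - h(\mu_{n+1}^i) \ge \int f_i\,d(\mu_{n+1}^i - \mu_n^i)$ by convexity of $h$ in the usual sense, or more directly $\int f_i\,d\mu_n^i + h(\mu_n^i) \ge \int f_i\,d\mu_{n+1}^i + h(\mu_{n+1}^i)$. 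Combined with the fact that $\int\psi\,d\mu_{n:i-1} = \int f_i\,d\mu_n^i + \text{(terms not depending on the $i$th marginal)}$ and similarly for $\mu_{n:i}$, we get $H(\mu_{n:i-1}) - H(\mu_{n:i}) \ge$ a quantity controlled by the displacement of the $i$th marginal. The cleanest route, though, is to use Theorem \ref{differentiability of relative internal energy}: $H$ has nonempty subdifferential at $\mu_{n:i}$ with gradient $(\nabla_j\psi(x) + \nabla_j\log\mu_{n:i}^j(x^j))_j$, and since $\mu_{n:i}$ and $\mu_{n:i-1}$ differ only in block $i$, geodesic convexity (Proposition \ref{characterization of convexity with gradient}) yields $H(\mu_{n:i-1}) \ge H(\mu_{n:i}) + \int\langle \wgrad H(\mu_{n:i},x), T_i(x) - x\rangle\,\mu_{n:i}(dx)$. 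By the product structure of the Brenier map (Lemma \ref{product structure of Brenier Maps}), $T_i - \mathrm{id}$ vanishes in all blocks $j\neq i$, so only the $i$th block survives in the inner product; and in block $i$, $\nabla_i\log\mu_{n:i}^i(x^i) = -\int\nabla_i\psi(x^i,y^{-i})\,\mu_{n:i}^{-i}(dy^{-i})$, so the linear term becomes $\int\langle \nabla_i\psi(x) - \int\nabla_i\psi(x^i,y^{-i})\mu_{n:i}^{-i}(dy^{-i}),\ T_i^i(x^i) - x^i\rangle\,\mu_{n:i}(dx)$.

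That last expression is still only a lower bound linear in the displacement; to upgrade it to the quadratic $\|\nabla\psi\circ T_i - \nabla\psi\|^2$ I would instead apply Lemma \ref{Nesterov inequality} directly to the pair $(\mu_{n:i-1},\mu_{n:i})$, giving $\int\psi\,d(\mu_{n:i-1} - \mu_{n:i}) \ge \int\langle\nabla\psi(x), T_i(x)-x\rangle\,\mu_{n:i}(dx) + \frac{1}{2L}\|\nabla\psi\circ T_i - \nabla\psi\|_{L^2(\mu_{n:i})}^2$, and then observe that $H(\mu_{n:i-1}) - H(\mu_{n:i}) = \int\psi\,d(\mu_{n:i-1}-\mu_{n:i}) + h(\mu_n^i) - h(\mu_{n+1}^i)$, while the entropy difference plus the cross term must be handled so that the leftover is nonnegative. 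Concretely: the $i$th-block Wasserstein gradient of $h$ at $\mu_{n+1}^i$ is $\nabla_i\log\mu_{n+1}^i$, and geodesic convexity of $h$ on $\P_2(\R^{k_i})$ gives $h(\mu_n^i) - h(\mu_{n+1}^i) \ge \int\langle\nabla_i\log\mu_{n+1}^i(x^i), T_i^i(x^i)-x^i\rangle\,\mu_{n+1}^i(dx^i) = -\int\langle\int\nabla_i\psi(x^i,y^{-i})\mu_{n:i}^{-i}(dy^{-i}),\ T_i^i(x^i)-x^i\rangle\,\mu_{n:i}^i(dx^i)$. Adding this to the Lemma \ref{Nesterov inequality} bound, the two linear terms combine: $\int\langle\nabla_i\psi(x),T_i(x)-x\rangle\,\mu_{n:i}(dx)$ minus $\int\langle\overline{\nabla_i\psi}(x^i), T_i^i(x^i)-x^i\rangle\,\mu_{n:i}^i(dx^i)$, where $\overline{\nabla_i\psi}(x^i) := \int\nabla_i\psi(x^i,y^{-i})\,\mu_{n:i}^{-i}(dy^{-i})$. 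Since $T_i(x)-x$ is supported in block $i$ and equals $T_i^i(x^i)-x^i$ there, integrating the first term over $x^{-i}$ first replaces $\nabla_i\psi(x)$ by exactly $\overline{\nabla_i\psi}(x^i)$, so the two linear terms cancel exactly, leaving $H(\mu_{n:i-1}) - H(\mu_{n:i}) \ge \frac{1}{2L}\|\nabla\psi\circ T_i - \nabla\psi\|_{L^2(\mu_{n:i})}^2$. Summing over $i=1,\dots,d$ telescopes to the claim.

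The main obstacle I anticipate is the rigorous justification of the entropy inequality $h(\mu_n^i) - h(\mu_{n+1}^i) \ge \int\langle\nabla_i\log\mu_{n+1}^i, T_i^i - \mathrm{id}\rangle\,d\mu_{n+1}^i$: this requires that $h$ (on $\P_2(\R^{k_i})$) has nonempty subdifferential at $\mu_{n+1}^i$ with the stated gradient and that $\mu_n^i$ is absolutely continuous — the former is exactly Lemma \ref{differentiability of entropy}, and the latter holds for $n\ge 1$ since all iterates past the first cycle have smooth positive densities, but one must also confirm the integrability $\nabla_i\psi\in L^2(\mu_{n:i})$ needed to invoke Lemma \ref{Nesterov inequality} and Theorem \ref{differentiability of internal energy}, which follows from the Lipschitz-gradient growth bound $|\nabla\psi(x)|\le C(1+|x|)$ together with the uniform moment bounds of Lemma \ref{moment bounds}. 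A secondary subtlety is making sure the ``integrate out $x^{-i}$ first'' step that collapses $\nabla_i\psi(x)$ to $\overline{\nabla_i\psi}(x^i)$ is valid, i.e. that $\langle\nabla_i\psi(x), T_i^i(x^i)-x^i\rangle$ is $\mu_{n:i}$-integrable so Fubini applies — again a consequence of the linear growth of $\nabla\psi$ and finiteness of second moments of $\mu_{n:i}$ and of the transport cost $\W_2^2(\mu_{n:i},\mu_{n:i-1})$.
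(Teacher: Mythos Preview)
Your proposal is correct and follows essentially the same route as the paper: apply Lemma \ref{Nesterov inequality} to the potential part $\int\psi\,d(\mu_{n:i-1}-\mu_{n:i})$, use geodesic convexity of $h$ at $\mu_{n+1}^i$ together with Lemma \ref{differentiability of entropy} for the entropy part, observe that the two linear terms cancel because $T_i-\mathrm{id}$ is supported in block $i$ and integrating $\nabla_i\psi(x)$ against $\mu_{n:i}^{-i}(dx^{-i})$ reproduces $\overline{\nabla_i\psi}(x^i)$, and then sum over $i$. The exploratory detours in your second paragraph (using Lemma \ref{form of the iterates lemma}(3) and linear convexity of $h$, or using the full Wasserstein gradient of $H$) are unnecessary --- the clean argument you land on in the third paragraph is exactly the paper's.
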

\begin{proof} 
Write $T_i(x)=(T_i^1(x^1),\ldots,T_i^d(x^d))$ for the coordinates, where $T^j_i$ is the Brenier map from $\mu_{n:i}^j$ to $\mu_{n:i-1}^j$, with the separable structure coming from Lemma \ref{product structure of Brenier Maps}. A key point is that $\mu_{n:i}$ and $\mu_{n:i-1}$ have the same marginals except in coordinate $i$, which means that $T^j_i(x^j)=x^j$ for all $j \neq i$.

Let $i \in [d]$, and note that $T_i^i$ is the Brenier map from $\mu_{n+1}^i=\mu_{n:i}^i$ to $\mu_n^i=\mu_{n:i-1}^i$. Combine Proposition \ref{characterization of convexity with gradient} and Lemma \ref{differentiability of entropy} to get
\begin{align*}
h(\mu_{n}^i) - h(\mu_{n+1}^i) &\ge \int_{\R^{k_i}} \langle \wgrad h(\mu_{n+1}^i,x^{i}), T_i^{i}(x^{i})-x^{i}\rangle\,\mu_{n+1}^i(dx^{i}) \\
	&= -\int_{\R^{k_i}}  \bigg(\int_{\R^{k-k_i}} \langle\nabla_i \psi(x^i,y^{-i}), T_i^{i}(x^{i})-x^{i}\rangle\,\mu_{n:i}^{-i}(dy^{-i})\bigg) \,\mu_{n+1}^i(dx^{i}) \\
	&= - \int_{\R^k} \langle \nabla_i\psi(x), T_i^{i}(x^{i})-x^{i}\rangle\,\mu_{n:i}(dx).
\end{align*}
Apply Lemma \ref{Nesterov inequality}, using that $T_i^j(x^j)-x^j=0$ for $j \neq i$, to get
\begin{equation*}
\int_{\R^k}\psi \,d\big(\mu_{n:i-1} - \mu_{n:i}\big) \ge \int_{\R^k} \langle \nabla_i\psi(x),T_i^i(x^i)-x^i\rangle \,\mu_{n:i}(dx) + \frac{1}{2L}\|\nabla \psi\circ T_i-\nabla \psi\|^2_{L^2(\mu_{n:i})}.
\end{equation*}
Add these inequalities to get
\begin{equation*}
\int_{\R^k}\psi \,d\big(\mu_{n:i-1} - \mu_{n:i}\big) + h(\mu_{n}^i) - h(\mu_{n+1}^i) \ge \frac{1}{2L}\|\nabla \psi\circ T_i-\nabla \psi\|^2_{L^2(\mu_{n:i})}.
\end{equation*}
Use the composite structure \eqref{eq:Hcomposite} to get
\begin{align*}
H(\mu_{n:i-1})-H(\mu_{n:i}) &= \int_{\R^k}\psi \,d\big(\mu_{n:i-1} - \mu_{n:i}\big) + \sum_{j=1}^d h(\mu^j_{n:i-1}) - h(\mu^j_{n:i}) \\
	&= \int_{\R^k}\psi \,d\big(\mu_{n:i-1} - \mu_{n:i}\big) + h(\mu_{n}^i) - h(\mu_{n+1}^i) \\
	&\ge \frac{1}{2L}\|\nabla \psi\circ T_i-\nabla \psi\|^2_{L^2(\mu_{n:i})}.
\end{align*}
Sum over $i=1,\ldots,d$ to complete the proof.
\end{proof}

We next establish a bound in the opposite direction, which is stated in terms of the $\W_2$-diameter of the CAVI iterates, called $R$ in Theorem \ref{main theorem}. At the end of the section we will elaborate on how to estimate this quantity.

\begin{proposition}
    \label{upper bound lipschitz case}
    Let $\mu_*$ be a minimizer, and let $R=\sup_{n \in \N}\W_2(\mu_n,\mu_*)$. Then $R < \infty$, and for each $n \ge 1$ we have
    \[\left(H(\mu_{n+1})-H(\mu_*)\right)^2\leq R^2d \sum_{i=1}^{d} \|\nabla\psi\circ T_i-\nabla\psi\|^2_{L^2(\mu_{n:i})},\]
    where again $T_i := T_{\mu_{n:i} \to \mu_{n:i-1}}$ is the Brenier map from $\mu_{n:i}$ to $\mu_{n:i-1}$.
\end{proposition}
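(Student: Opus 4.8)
The plan follows the ``upper bound'' step of the Euclidean block-coordinate-descent analysis in \cite{QuantitativeConvergenceHong2016}, transplanted to Wasserstein space. First, $R<\infty$: Lemma \ref{moment bounds} gives a uniform bound on the second moments of the $\mu_n$, and $\mu_*$ has finite second moment (it is a stationary point by Proposition \ref{differentiability at stationary points}, hence subexponential by Proposition \ref{mean field equation}), so $\W_2(\mu_n,\mu_*)\le(\int|x|^2\,d\mu_n)^{1/2}+(\int|x|^2\,d\mu_*)^{1/2}$ is bounded in $n$. Next, $H$ is geodesically convex (Theorem \ref{th:geoconvex-examples}) and has nonempty subdifferential at $\mu_{n+1}$ with the gradient \eqref{eq:wgradH} (Lemma \ref{differentiability of entropy}, $i=d$); applying Proposition \ref{characterization of convexity with gradient} with $\Phi=H$, $\nu=\mu_{n+1}$, $\mu=\mu_*$ and $\lambda=0$ yields
\[H(\mu_{n+1})-H(\mu_*)\le \int_{\R^k}\langle\wgrad H(\mu_{n+1},x),\,x-T_{\mu_{n+1}\to\mu_*}(x)\rangle\,\mu_{n+1}(dx).\]
Since $\mu_{n+1}$ and $\mu_*$ are product measures, $x-T_{\mu_{n+1}\to\mu_*}(x)$ has the separable form $(x^j-T_{\mu_{n+1}^j\to\mu_*^j}(x^j))_j$ by Lemma \ref{product structure of Brenier Maps}, so integrating out $x^{-j}$ (Fubini, using $\wgrad H(\mu_{n+1},\cdot)\in L^2(\mu_{n+1})$) replaces $\wgrad H(\mu_{n+1},\cdot)_j$ by its conditional mean
\[\widetilde{G}_j(x^j):=\int_{\R^{k-k_j}}\nabla_j\psi(x^j,y^{-j})\,\mu_{n+1}^{-j}(dy^{-j})+\nabla_j\log\mu_{n+1}^j(x^j).\]
Cauchy--Schwarz per block, then Cauchy--Schwarz over blocks together with the tensorization $\sum_j\W_2^2(\mu_{n+1}^j,\mu_*^j)=\W_2^2(\mu_{n+1},\mu_*)\le R^2$, give $(H(\mu_{n+1})-H(\mu_*))^2\le R^2\sum_{j=1}^d\|\widetilde{G}_j\|^2_{L^2(\mu_{n+1}^j)}$.

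It remains to prove $\sum_{j}\|\widetilde{G}_j\|^2_{L^2(\mu_{n+1}^j)}\le d\sum_i\|\nabla\psi\circ T_i-\nabla\psi\|^2_{L^2(\mu_{n:i})}$. Using the update rule in the form $\nabla_j\log\mu_{n+1}^j(x^j)=-\int\nabla_j\psi(x^j,y^{-j})\,\mu_{n:j}^{-j}(dy^{-j})$ (Lemma \ref{differentiability of entropy}), $\widetilde{G}_j$ becomes a \emph{difference} of two averages of $\nabla_j\psi(x^j,\cdot)$, against $\mu_{n+1}^{-j}$ and $\mu_{n:j}^{-j}$, which agree on the first $j-1$ blocks and differ only on the last $d-j$. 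Introduce, for $0\le m\le d$, the product map $\Xi_m:\R^k\to\R^k$ acting as $T_{\mu_{n+1}^\ell\to\mu_n^\ell}$ on block $\ell$ for $\ell>m$ and as the identity for $\ell\le m$; then $(\Xi_m)_\#\mu_{n+1}=\mu_{n:m}$, $\Xi_d=\mathrm{id}$, and $T_m\circ\Xi_m=\Xi_{m-1}$ (since $T_m$ coincides with $T_{\mu_{n+1}^m\to\mu_n^m}$ on block $m$ and is the identity elsewhere). Pushing the $\mu_{n:j}^{-j}$-average back to $\mu_{n+1}^{-j}$ via $\Xi_j$, applying Jensen, and integrating $x^j$ against $\mu_{n+1}^j$ (so $(x^j,y^{-j})\sim\mu_{n+1}$) gives $\|\widetilde{G}_j\|^2_{L^2(\mu_{n+1}^j)}\le\int_{\R^k}|\nabla_j\psi(z)-\nabla_j\psi(\Xi_j(z))|^2\,\mu_{n+1}(dz)$. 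Telescoping along $\Xi_d=\mathrm{id},\Xi_{d-1},\dots,\Xi_j$ with $v_i(z):=\nabla\psi(\Xi_i(z))-\nabla\psi(\Xi_{i-1}(z))\in\R^k$, one has $\nabla_j\psi(z)-\nabla_j\psi(\Xi_j(z))=\sum_{i=j+1}^d v_i^j(z)$, whence pointwise
\[\sum_{j=1}^d|\nabla_j\psi(z)-\nabla_j\psi(\Xi_j(z))|^2=\sum_{j=1}^d\Big|\sum_{i=j+1}^d v_i^j(z)\Big|^2\le\sum_{j=1}^d(d-j)\sum_{i=j+1}^d|v_i^j(z)|^2\le(d-1)\sum_{i=1}^d|v_i(z)|^2.\]
Since $\Xi_{i-1}=T_i\circ\Xi_i$ and $(\Xi_i)_\#\mu_{n+1}=\mu_{n:i}$, integrating against $\mu_{n+1}$ turns $\int|v_i(z)|^2\,\mu_{n+1}(dz)$ into $\|\nabla\psi\circ T_i-\nabla\psi\|^2_{L^2(\mu_{n:i})}$, and combining the last two displays (with constant $d-1\le d$) finishes the proof.

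The main obstacle is this last step: the proposition insists on the ``finer'' quantity $\|\nabla\psi\circ T_i-\nabla\psi\|^2_{L^2(\mu_{n:i})}$ rather than $L^2\W_2^2(\mu_{n+1}^i,\mu_n^i)$ (the latter is all a crude Lipschitz estimate delivers), which forces us to carry the full gradient $\nabla\psi$ — not merely its block-$j$ component or a modulus-of-continuity bound — through the telescoping, and to order the two Cauchy--Schwarz applications so that the block count enters only linearly in $d$, not quadratically. A secondary subtlety is that $\wgrad H(\mu_{n+1},\cdot)$ itself is not small; only its projection $\widetilde{G}_j$ onto the tangent space of the product manifold is, which is why one must integrate out $x^{-j}$ before estimating, relying on the product structure of the optimal map to $\mu_*$.
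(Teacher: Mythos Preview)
Your proof is correct, but it proceeds along a genuinely different path from the paper's. Both start from the same convexity inequality for $H$ at $\mu_{n+1}$ and exploit the product structure of $T_{\mu_{n+1}\to\mu_*}$, but the subsequent bookkeeping diverges. The paper rewrites the $j$th block term, using $\mu_{n+1}^j=\mu_{n:j}^j$, as $\int\langle\nabla_j\psi,\,T^j-x^j\rangle\,d(\mu_{n+1}-\mu_{n:j})$, telescopes the signed measure $\mu_{n+1}-\mu_{n:j}=\sum_{i>j}(\mu_{n:i}-\mu_{n:i-1})$, and only then passes to $\nabla_j\psi-\nabla_j\psi\circ T_i$ via the pushforward $T_i$; Cauchy--Schwarz is applied first inside each $i$ (over blocks $j<i$) to extract $\|\nabla\psi-\nabla\psi\circ T_i\|_{L^2(\mu_{n:i})}$ and the factor $R$, and then the sum over $i$ is squared. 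You instead isolate the projected gradient $\widetilde G_j$ immediately, peel off $R$ by Cauchy--Schwarz at the outset, and then control $\|\widetilde G_j\|_{L^2}^2$ by building the explicit coupling maps $\Xi_m$, applying Jensen, and telescoping pointwise via $T_m\circ\Xi_m=\Xi_{m-1}$. The paper's argument is shorter, needing no auxiliary maps beyond the $T_i$ themselves; your argument is more constructive and makes the role of $\widetilde G_j$ as the tangential component of $\nabla_\W H$ on the product submanifold explicit, and it actually delivers the slightly sharper constant $R^2(d-1)$ in place of $R^2 d$.
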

\begin{proof}
Recall that $R$ is finite by Lemma \ref{moment bounds}.
Abbreviate $T=T_{\mu_{n+1}\to\mu_*}$, with coordinates $T(x)=(T^1(x^1),\ldots,T^d(x^d))$. 
Using geodesic convexity, in the form of Proposition \ref{characterization of convexity with gradient}, along with \eqref{eq:wgradH},
\begin{align*}
     H(\mu_*) &- H(\mu_{n+1}) \geq \int_{\R^{k}} \left \langle \wgrad H(\mu_{n+1},x), T(x)-x\right\rangle\,\mu_{n+1}(dx)\\
     &= \sum_{i=1}^d \int_{\R^{k}} \left\langle \nabla_i\psi(x) -\int_{\R^{k-k_i}}\nabla_i\psi(x^i,y^{-i})\,\mu^{-i}_{n:i}(dy^{-i}),T^i(x^i)-x^i\right\rangle\,\mu_{n+1}(dx)\\
     &=\sum_{i=1}^d \int_{\R^k} \left\langle\nabla_i \psi(x),T^i(x^i)-x^i\right\rangle\,(\mu_{n+1}-\mu_{n:i})(dx),
\end{align*}
with the last step using the identity of marginals $\mu_{n+1}^i=\mu^i_{n:i}$. The $i=d$ term in the sum vanishes because $\mu_{n+1}=\mu_{n:d}$.  Introduce a telescoping sum to get
\begin{align*}
H(\mu_{n+1})-H(\mu_*) &\le \sum_{i=1}^{d-1} \sum_{j=i+1}^{d} \left \vert\int_{\R^k} \left\langle\nabla_i \psi(x),T^i(x^i)-x^i\right\rangle\,(\mu_{n:j}-\mu_{n:j-1})(dx)\right\vert \\
	&= \sum_{i=1}^{d-1} \sum_{j=i+1}^{d} \left \vert\int_{\R^k} \left\langle\nabla_i \psi(x)-\nabla_i \psi \circ T_j(x),T^i(x^i)-x^i\right\rangle\,\mu_{n:j}(dx)\right\vert,
\end{align*}
where the second step uses the fact noted in the previous proof that $T^i_j(x^i)=x^i$ for all $j \neq i$. 
Interchange the summations and apply Cauchy-Schwarz to get
\begin{align*}
H(\mu_{n+1})-H(\mu_*) &\le \sum_{j=2}^d \sqrt{\sum_{i=1}^{j-1} \| \nabla_i \psi -\nabla_i \psi \circ T_j \|^2_{L^2(\mu_{n_:j})} }\sqrt{\sum_{i=1}^{j-1}  \int_{\R^k}|T^i(x^i)-x^i|^2\,\mu_{n:j}(dx) }.
\end{align*}
The first square root is bounded by the norm $\| \nabla \psi -\nabla \psi \circ T_j \|_{L^2(\mu_{n:j})}$. 
Noting that the $i$-th marginal of $\mu_{n:j}$ is $\mu_{n+1}^i$ for $i < j$, the second square root is
\begin{align*}
\sqrt{\sum_{i=1}^{j-1}  \W_2^2(\mu^i_{n+1},\mu_*^i) } \le \sqrt{\sum_{i=1}^{d}  \W_2^2(\mu^i_{n+1},\mu_*^i) } = \W_2(\mu_{n+1},\mu_*) \le R,
\end{align*}
where we used the tensorization identity stated in Lemma \ref{product structure of Brenier Maps}.
Hence,
\begin{align*}
H(\mu_{n+1})-H(\mu_*) &\le  R\sum_{j=1}^d \| \nabla \psi -\nabla \psi \circ T_j \|_{L^2(\mu_{n:j})}.
\end{align*}
Square both sides and apply Cauchy-Schwarz to complete the proof.
\end{proof}

Finally, we put the previous two bounds together:

\begin{proposition} 
\label{lipschitz gradient sharp rate lemma}
For $n \ge 1$,
    \[H(\mu_{n})-H(\mu_*) \leq C \frac{2LR^2d}{n},\]
    Where we define $R$ as in Proposition \ref{upper bound lipschitz case}, as well as $a = 1/(2LR^2d)$ and 
    \[C = \max\left\{2,\frac{4a(H(\mu_{1})-H(\mu_*))}{1+\sqrt{1+4a(H(\mu_{1})-H(\mu_*))}}\right\}.\]
\end{proposition}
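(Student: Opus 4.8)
The plan is to combine Propositions \ref{lower bound lipschitz case} and \ref{upper bound lipschitz case} to obtain a self-improving recursion for the gap $\delta_n := H(\mu_n)-H(\mu_*)$, and then solve that recursion. First I would introduce the shorthand $S_n := \sum_{i=1}^d \|\nabla\psi\circ T_i - \nabla\psi\|_{L^2(\mu_{n:i})}^2$ (which depends on $n$ through the Brenier maps between consecutive partial updates). Proposition \ref{lower bound lipschitz case} gives $\delta_n - \delta_{n+1} = H(\mu_n)-H(\mu_{n+1}) \ge \frac{1}{2L}S_n$, while Proposition \ref{upper bound lipschitz case} gives $\delta_{n+1}^2 \le R^2 d\, S_n$. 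Eliminating $S_n$ yields the key inequality
\[
\delta_n - \delta_{n+1} \ge \frac{1}{2LR^2 d}\,\delta_{n+1}^2 = a\,\delta_{n+1}^2, \qquad a := \frac{1}{2LR^2 d},
\]
valid for all $n \ge 1$, with $\delta_n \ge 0$ and $\delta_n$ nonincreasing (the latter from Lemma \ref{form of the iterates lemma}(4)).

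The second step is the standard ``$1/n$-rate'' argument for such recursions. Dividing the key inequality by $\delta_n \delta_{n+1} > 0$ (handling separately the trivial case where some $\delta_N = 0$, in which case $\delta_n = 0$ for all $n \ge N$ and there is nothing to prove), I get
\[
\frac{1}{\delta_{n+1}} - \frac{1}{\delta_n} \ge a\,\frac{\delta_{n+1}}{\delta_n} .
\]
If $\delta_{n+1}/\delta_n$ were bounded below by a constant we would be done by telescoping; in general one uses a two-regime analysis. When $\delta_{n+1} \ge \delta_n/2$ the above gives $\frac{1}{\delta_{n+1}} - \frac{1}{\delta_n} \ge a/2$; when $\delta_{n+1} < \delta_n/2$ one instead observes $\frac{1}{\delta_{n+1}} - \frac{1}{\delta_n} > \frac{1}{\delta_{n+1}} - \frac{1}{2\delta_{n+1}} = \frac{1}{2\delta_{n+1}}$, which combined with $\delta_{n+1} \le \delta_1$ (monotonicity) still gives $\frac{1}{\delta_{n+1}} - \frac{1}{\delta_n} \ge \frac{1}{2\delta_1}$. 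Alternatively — and this matches the explicit constant $C$ appearing in the statement — one can solve the quadratic: from $a\delta_{n+1}^2 + \delta_{n+1} - \delta_n \le 0$ we get $\delta_{n+1} \le \frac{-1+\sqrt{1+4a\delta_n}}{2a} = \frac{2\delta_n}{1+\sqrt{1+4a\delta_n}}$, so $\frac{1}{\delta_{n+1}} \ge \frac{1+\sqrt{1+4a\delta_n}}{2\delta_n} = \frac1{\delta_n}\cdot\frac{1+\sqrt{1+4a\delta_n}}{2}$; since $\delta_n$ is nonincreasing, $\frac{1+\sqrt{1+4a\delta_n}}{2} \ge \frac{1+\sqrt{1+4a\delta_n}}{2}$ evaluated at the largest relevant gap, and one extracts $\frac{1}{\delta_{n+1}} - \frac{1}{\delta_n} \ge \frac{\sqrt{1+4a\delta_n}-1}{2\delta_n} \ge \frac1C$ with $C$ as defined (the $\max$ with $2$ covering the small-gap regime where $\sqrt{1+x}-1 \approx x/2$ degenerates).

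Either way, telescoping from $1$ to $n$ gives $\frac{1}{\delta_n} \ge \frac{1}{\delta_1} + \frac{n-1}{C} \ge \frac{n}{C}$ (using $\delta_1 \le C$, which follows since $C \ge 2 \ge$ ... one must check $C \ge \delta_1$, which holds because $C \ge \frac{4a\delta_1}{1+\sqrt{1+4a\delta_1}} \cdot$ — actually $\frac{4a\delta_1}{1+\sqrt{1+4a\delta_1}} = \frac{\sqrt{1+4a\delta_1}-1}{a}$, and one verifies $\delta_1 \le \frac{\sqrt{1+4a\delta_1}-1}{a}$ is equivalent to $a\delta_1 \le \sqrt{1+4a\delta_1}-1$, i.e. $(1+a\delta_1)^2 \le 1+4a\delta_1$, i.e. $a^2\delta_1^2 \le 2a\delta_1$, i.e. $a\delta_1 \le 2$ — so in the regime $a\delta_1 \le 2$ we have $\delta_1 \le$ that term, and in the regime $a\delta_1 > 2$ we have $\delta_1 < 2/a \le \cdots$; in both cases $C \ge \delta_1$ needs the right branch). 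Then $\delta_n \le C/ (n/C)$... I should be careful: telescoping gives $1/\delta_n \ge (n-1)/C + 1/\delta_1$; using $1/\delta_1 \ge 1/C$ (i.e. $\delta_1 \le C$) yields $1/\delta_n \ge n/C$, hence $\delta_n \le C n^{-1}$, and recalling $1/a = 2LR^2d$ this is exactly $\delta_n \le C\,\frac{2LR^2 d}{n}$.

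The main obstacle is bookkeeping rather than conceptual: one must verify that $C$ as defined genuinely dominates $\delta_1$ (equivalently, that the $\max$ with $2$ is the right correction to make the telescoping clean), carefully handle the degenerate case $\delta_N = 0$, and confirm that the per-step decrement bound $\frac{1}{\delta_{n+1}} - \frac{1}{\delta_n} \ge 1/C$ holds uniformly — this last point is where the monotonicity $\delta_{n+1}\le\delta_n\le\delta_1$ from Lemma \ref{form of the iterates lemma}(4) is essential, since the function $t \mapsto \frac{\sqrt{1+4at}-1}{2t}$ is decreasing in $t$, so evaluating it at the worst case $t = \delta_1$ (resp. capping by $2$) gives the uniform bound.
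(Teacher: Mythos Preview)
Your approach is essentially the paper's: combine Propositions \ref{lower bound lipschitz case} and \ref{upper bound lipschitz case} to obtain the recursion $a\delta_{n+1}^2 \le \delta_n - \delta_{n+1}$, solve the quadratic for $\delta_{n+1} \le \tfrac{2\delta_n}{1+\sqrt{1+4a\delta_n}}$, and iterate. The paper then proceeds by direct induction on the hypothesis $\delta_n \le C/(an)$ starting at $n=2$: the base case $\delta_2 \le C/(2a)$ is immediate from the second branch of the definition of $C$, and the induction step reduces to the elementary inequality $\tfrac{2}{n+\sqrt{n^2+4Cn}} \le \tfrac{1}{n+1}$, which holds because $C \ge 2$. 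This is cleaner than your telescoping of $1/\delta_n$ and sidesteps the bookkeeping you got tangled in.

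Two concrete slips in your sketch: (i) the per-step increment should read $\tfrac{1}{\delta_{n+1}} - \tfrac{1}{\delta_n} \ge \tfrac{a}{C}$, not $\tfrac{1}{C}$, and correspondingly the base-case requirement is $a\delta_1 \le C$, not $\delta_1 \le C$; (ii) your case analysis at ``$a\delta_1 > 2$'' is backwards (you wrote ``$\delta_1 < 2/a$'' under that hypothesis). In fact one checks that $a\delta_1 \le C$ is \emph{equivalent} to $a\delta_1 \le 2$, so when $a\delta_1 > 2$ the $n=1$ instance of the stated bound does not follow from the recursion alone; the paper's induction likewise only establishes the bound for $n \ge 2$.
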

\begin{proof}
    Let $Q_n = H(\mu_{n})-H(\mu_*)$. Combine Propositions \ref{upper bound lipschitz case} and \ref{lower bound lipschitz case} to get
    \[ aQ_{n+1}^2  \leq Q_{n}-Q_{n+1}.\] 
Solving the quadratic inequality, we obtain
    \[Q_{n+1} \leq \frac{-1 + \sqrt{1+4aQ_n}}{2a} =  \frac{2Q_n}{1+\sqrt{1+4aQ_n}}.\]
    In particular, $Q_2 \le C/2a$ if we define $C$ as above. 
    Now we proceed by induction. Assume $Q_n \leq C/an$ for some $n \ge 2$. By the quadratic inequality we get
    \begin{align*}
        Q_{n+1}&\le \frac{2C/an}{1+\sqrt{1+4C/n}}  =\frac{C}{a}\cdot \frac{2}{n+\sqrt{n^2+4Cn}} \leq \frac{C}{a} \cdot\frac{1}{n+1}.
    \end{align*}
    Where the last inequality follows because $C\geq 2$. This completes the proof.
\end{proof}

\begin{proof}[Proof of Theorem \ref{main theorem}(3)]
The bound of Proposition \ref{lipschitz gradient sharp rate lemma} is stronger than the one stated in Theorem \ref{main theorem}(3). Indeed, this is seen by applying the elementary inequalities $\max(x,y) \le x+y$, then $\frac{x}{1+\sqrt{1+x}}\leq \sqrt{x}$, and then $2\sqrt{xy} \le x+y$, for $x,y \ge 0$. 
\end{proof}

\subsection{Bounds on the diameter $R$} \label{se:Restimates}
Theorem \ref{main theorem}(3) is stated in terms of the somewhat mysterious constant $R$.
A common and usually innocuous assumption in convex optimization is that the level sets of the objective functions are bounded, but this is somewhat demanding in our infinite-dimensional setting. Nonetheless, the iterates of the CAVI algorithm remain in a $\W_2$-ball, and in this section we present some estimates of its size.
In full generality, it is exponential in $d$. Under stronger assumptions, we are able to get better estimates.
\begin{itemize}
\item For $r > 0$, a probability measure $\nu$ satisfies \emph{Talagrand's inequality} with constant $r$ if for each probability measure $\mu$ it holds
$\W_2^2(\mu,\nu)\leq 2r \h(\mu\,\|\,\nu)$ .
\item For $r > 0$, a probability measure $\nu$ is $r$-subgaussian if  $\int_{\R^k} e^{r |x|^2}\,\nu(dx)<\infty$.
\end{itemize}
Both concepts are standard, except perhaps for the way our constant $c$ enters in the latter, and the former implies the latter (for some different $r$). Talagrand's inequality is implied by a log-Sobolev inequality and is known to hold for any strongly log-concave measure; see \cite{gozlan2010transport} for examples and sufficient conditions as well as its connection with concentration of measure.

Recall in the following that, because $e^{-\psi}$ is a probability density and $\psi$ convex, the assumption (Q.1) holds automatically; see Remark \ref{re:convex->Q}. That is, there exist $\alpha \in \R$ and $\beta > 0$ such that $\psi(x) \ge \alpha + \beta|x|$ for all $x \in \R^k$. We use these constants in the following lemma.

\begin{lemma}
Let $\mu_*$ be a minimizer of  \eqref{MFVI-formal}, and define $R= \sup_{n \in \N}\W_2(\mu_n,\mu_*)$.
\begin{enumerate}[(1)]
\item It holds that
\begin{align*}
R &\leq 2 e^{\tfrac12(2d+1)H(\mu_1) -\tfrac12(d+1)\alpha}\left(\int_{\R^{k}} e^{-(\beta|x|+\alpha)\slash2}\,dx\right)^{d} \sqrt{\int_{\R^k} |x|^2 e^{-\beta\sum_i |x^i|}\,dx}.
\end{align*} 
\item If  $\rho$ satisfies Talagrand's inequality with constant $r$,  then
    \[R\leq  2\sqrt{2rH(\mu_1)} <\infty.\]
    \item If $\rho$ is $r$-subgaussian,  
    \[R\leq \frac{2}{\sqrt{r}} \left(H(\mu_1)+\log \int_{\R^k}e^{r |x|^2}\,\rho(dx)\right)^{1/2} <\infty.\]
\end{enumerate}
\end{lemma}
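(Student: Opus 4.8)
The plan is to reduce all three bounds to uniform second-moment (or $\W_2$-to-$\rho$) control, combined with one application of the triangle inequality. The elementary fact I would lean on is that $\W_2(\mu,\delta_0)^2=\int_{\R^k}|x|^2\,\mu(dx)$ for any $\mu\in\P_2(\R^k)$, so that for any reference measure $\sigma$ one has $R=\sup_n\W_2(\mu_n,\mu_*)\le \sup_n\W_2(\mu_n,\sigma)+\W_2(\mu_*,\sigma)$; it then suffices to bound the second moments of $\mu_n$ and $\mu_*$ (parts (1) and (3), with $\sigma=\delta_0$) or their distances to $\rho$ (part (2), with $\sigma=\rho$). Throughout I would use $H(\mu_n)\le H(\mu_1)$ for $n\ge1$ (Lemma~\ref{form of the iterates lemma}(4)) and $H(\mu_*)\le H(\mu_1)<\infty$ (minimality of $\mu_*$ and Theorem~\ref{main theorem}). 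The one point needing attention is that the estimates at hand are phrased for the iterates, not for $\mu_*$; to transfer them I would note that, by Propositions~\ref{differentiability at stationary points} and~\ref{mean field equation}, $\mu_*\in\P_p^{\otimes d}(\R^k)$ satisfies the mean field equation, hence the CAVI sequence initialized at $\mu_0=\mu_*$ is constant equal to $\mu_*$, and every bound proved for iterates applies verbatim to $\mu_*$ (with $H(\mu_1)$ replaced by the smaller $H(\mu_*)$).

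For part (1), I would apply Lemma~\ref{moment bounds} with $q=2$ both to $(\mu_n)$ and to the constant sequence at $\mu_*$, obtaining $\int_{\R^k}|x|^2\,d\mu_n\le M$ and $\int_{\R^k}|x|^2\,d\mu_*\le M$, where $M$ is the right-hand side of Lemma~\ref{moment bounds} evaluated at $q=2$ with $H(\mu_1)$ in the exponent. Then $R\le\sqrt M+\sqrt M=2\sqrt M$, and $2\sqrt M$ is exactly the displayed bound in part (1) after distributing the square root.

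For part (2), Talagrand's inequality with constant $r$ gives $\W_2(\mu_n,\rho)\le\sqrt{2r\,H(\mu_n)}\le\sqrt{2r\,H(\mu_1)}$ and likewise $\W_2(\mu_*,\rho)\le\sqrt{2r\,H(\mu_1)}$, so taking $\sigma=\rho$ yields $R\le 2\sqrt{2r\,H(\mu_1)}<\infty$. For part (3), I would use the Gibbs (Donsker--Varadhan) variational inequality $\int f\,d\mu\le H(\mu\,\|\,\rho)+\log\int e^{f}\,d\rho$ — the same tool already invoked in the proof of Lemma~\ref{moment bounds}, cf.\ \cite[Proposition~2.3]{BudhirajaDupuis} — with $f(x)=r|x|^2$, which gives $\int_{\R^k}|x|^2\,d\mu\le\tfrac1r\big(H(\mu\,\|\,\rho)+\log\int e^{r|x|^2}\,d\rho\big)$ for $\mu\in\{\mu_n,\mu_*\}$; by a truncation argument this also yields finiteness of the second moment, so no a priori integrability is needed. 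Taking $\sigma=\delta_0$ then gives $R\le\tfrac{2}{\sqrt r}\big(H(\mu_1)+\log\int e^{r|x|^2}\,d\rho\big)^{1/2}$, finite because $\int e^{r|x|^2}\,d\rho<\infty$ by hypothesis and $H(\mu_1)<\infty$.

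I do not expect a serious obstacle: once the right reference measure is substituted into the triangle inequality, each part is a one-line consequence of an already-available estimate. The two points that require a little care are (i) transferring the uniform bounds from the iterates to $\mu_*$, which I would do via the mean field equation as above, and (ii) verifying the standing hypotheses of Talagrand's and the variational inequalities, namely $\rho,\mu_n,\mu_*\in\P_2(\R^k)$ (which follows from (Q.1) and the earlier lemmas) and $\int e^{r|x|^2}\,d\rho<\infty$ (assumed in part (3)). It is worth noting that the exponential-in-$d$ dependence in part (1) appears intrinsic to the fully general setting and reflects the absence of a dimension-free contraction; hypotheses (2) and (3) are precisely what remove it.
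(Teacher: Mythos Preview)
Your proposal is correct and follows essentially the same approach as the paper: triangle inequality through $\delta_0$ (parts (1),(3)) or through $\rho$ (part (2)), followed by the uniform moment bound of Lemma~\ref{moment bounds}, Talagrand's inequality, or the Gibbs variational inequality, respectively, together with the observation that $\mu_*$ is a fixed point of CAVI so that the iterate estimates apply to it as well. Your justification for transferring the bounds to $\mu_*$ via the mean field equation is in fact slightly more explicit than the paper's, which simply cites Lemma~\ref{form of the iterates lemma}.
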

\begin{proof}{ \ }
\begin{enumerate}[(1)] 
\item  Start with the triangle inequality
    \begin{align*}
        \W_2(\mu_n,\mu_*) &\le 
        \sqrt{\int_{\R^{k}} |x|^2\,\mu_n(dx)}+\sqrt{\int_{\R^{k}} |x|^2\,\mu_*(dx)}. 
    \end{align*}
    Lemma \ref{moment bounds} bounds the second moment of $\mu_n$. Noting that $\mu_*$ is a fixed point of the CAVI algorithm by Lemma \ref{form of the iterates lemma}, Lemma \ref{moment bounds} applies to $\mu_*$ as well.
    \item If $\rho$ satisfies the Talagrand inequality, we instead obtain 
    \begin{align*}
        \W_2(\mu_n,\mu_*) &\leq \W_2(\mu_n,\rho)+\W_2(\rho,\mu_*) \leq \sqrt{2rH(\mu_n)}+\sqrt{2rH(\mu_*)} \leq 2\sqrt{2rH(\mu_1)} ,
    \end{align*}
    where we used the monotonicity of $H$ along CAVI iterates. Lemma \ref{form of the iterates lemma} shows that the right-hand side is finite.
    \item  If $\rho$ is $r$-subgaussian, we apply the triangle inequality as in (1), and then use the Gibbs variational principle \cite[Proposition 2.3(b)]{BudhirajaDupuis}:
    \[\int_{\R^k}|x|^2\,\mu_n(dx)\leq \frac{1}{r} \left(H(\mu_n)+\log \int_{\R^k}e^{r |x|^2}\,\rho(dx)\right),\]
    and then again $H(\mu_n) \le H(\mu_1)$. Argue similarly for $\mu_*$, using also $H(\mu_*) \le H(\mu_1)$.
\end{enumerate}
\vskip-.53cm
\end{proof}

\section{The strongly convex and Lipschitz gradient case}\label{strongly convex}

This section is devoted to the proof of Theorem \ref{main theorem}(3), and we impose the assumptions thereof throughout this section, most notably the strong $\lambda$-convexity of $\psi$ for $\lambda > 0$. 
As a result, $H$ and $\mu \mapsto \int_{\R^k}\psi\,d\mu$ are both $\lambda$-geodesically strongly convex functionals, as noted in Theorem \ref{th:geoconvex-examples}. 
The general strategy remains the same as in the previous section: obtain matching bounds and iterate, and we make use of the Wasserstein gradient formulas obtained in Lemma \ref{differentiability of entropy}. 
We adapt techniques developed in \cite{StronglyConvexCaseLi2018} for the Euclidean case.

\begin{proposition}
\label{lower bound strongly convex case}
For $n \ge 1$,
    \[H(\mu_n)-H(\mu_{n+1}) \geq \frac{\lambda}{2} \W_2^2(\mu_n,\mu_{n+1}).\]
\end{proposition}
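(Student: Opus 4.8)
The plan is to prove the per-block estimate $H(\mu_{n:i-1}) - H(\mu_{n:i}) \ge \frac{\lambda}{2}\W_2^2(\mu_{n:i-1},\mu_{n:i})$ for each $i\in[d]$, and then to sum over $i$ and invoke tensorization. Throughout, fix $n\ge 1$; by Lemmas \ref{form of the iterates lemma} and \ref{moment bounds} each of $\mu_{n:0},\ldots,\mu_{n:d}$ lies in $\P_{2,ac}(\R^k)$ and has finite relative entropy, so all the Wasserstein calculus invoked below applies. Since $\psi$ is $\lambda$-convex, $H$ is $\lambda$-geodesically convex by Theorem \ref{th:geoconvex-examples}(3), and by Lemma \ref{differentiability of entropy} it has nonempty subdifferential at $\mu_{n:i}$, with $\wgrad H(\mu_{n:i},x)=(\nabla_j\psi(x)+\nabla_j\log\mu_{n:i}^j(x^j))_{j=1,\ldots,d}$. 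Applying the strongly convex gradient inequality (Proposition \ref{characterization of convexity with gradient}) with base point $\mu_{n:i}$ and target $\mu_{n:i-1}$ yields
\[
H(\mu_{n:i-1}) \ge H(\mu_{n:i}) + \int_{\R^k}\left\langle \wgrad H(\mu_{n:i},x),\, T_{\mu_{n:i}\to\mu_{n:i-1}}(x)-x\right\rangle \mu_{n:i}(dx) + \frac{\lambda}{2}\W_2^2(\mu_{n:i-1},\mu_{n:i}).
\]

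The crux --- and the step I expect to demand the most care --- is showing that the cross-term integral above vanishes; this is where the defining optimality of the CAVI update in block $i$ enters. The measures $\mu_{n:i}$ and $\mu_{n:i-1}$ have the same marginal in every block $j\ne i$, so by the product structure of Brenier maps (Lemma \ref{product structure of Brenier Maps}) the transport map $T:=T_{\mu_{n:i}\to\mu_{n:i-1}}$ has $T^j(x^j)=x^j$ for $j\ne i$, and hence only the block-$i$ component $\nabla_i\psi(x)+\nabla_i\log\mu_{n:i}^i(x^i)$ of $\wgrad H(\mu_{n:i},x)$ contributes. Differentiating the explicit update formula \eqref{eq:iterates2} (rigorously, as in the proof of Proposition \ref{differentiability at stationary points}) gives $\nabla_i\log\mu_{n:i}^i(x^i)=-\int_{\R^{k-k_i}}\nabla_i\psi(x^i,y^{-i})\,\mu_{n:i}^{-i}(dy^{-i})$. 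Since $\mu_{n:i}=\mu_{n:i}^i\otimes\mu_{n:i}^{-i}$ is a product and $T^i(x^i)-x^i$ depends only on $x^i$, I integrate out $y^{-i}$ first --- Fubini being justified by $\nabla_i\psi\in L^2(\mu_{n:i})$ (linear growth of $\nabla\psi$ plus finite second moment) and $T^i\in L^2(\mu_{n:i}^i)$ --- so the block-$i$ integrand collapses to $\left\langle 0,\, T^i(x^i)-x^i\right\rangle=0$. This establishes $H(\mu_{n:i-1}) - H(\mu_{n:i}) \ge \frac{\lambda}{2}\W_2^2(\mu_{n:i-1},\mu_{n:i})$.

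Finally I sum over $i=1,\ldots,d$. The left-hand side telescopes to $H(\mu_{n:0})-H(\mu_{n:d})=H(\mu_n)-H(\mu_{n+1})$. On the right, $\mu_{n:i-1}$ and $\mu_{n:i}$ differ only in block $i$, where they equal $\mu_n^i$ and $\mu_{n+1}^i$ respectively, so the tensorization identity of Lemma \ref{product structure of Brenier Maps} gives $\W_2^2(\mu_{n:i-1},\mu_{n:i})=\W_2^2(\mu_n^i,\mu_{n+1}^i)$, and applying that identity once more gives $\sum_{i=1}^d \W_2^2(\mu_n^i,\mu_{n+1}^i)=\W_2^2(\mu_n,\mu_{n+1})$. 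Combining the two sides yields exactly $H(\mu_n)-H(\mu_{n+1})\ge\frac{\lambda}{2}\W_2^2(\mu_n,\mu_{n+1})$. I note that the per-block bound can alternatively be obtained without Wasserstein gradients: the slice functional $\eta\mapsto H(\eta\otimes\mu_{n:i-1}^{-i})$ equals $\int f_i\,d\eta+h(\eta)$ up to an additive constant, where $f_i(x^i)=\int\psi(x^i,y^{-i})\,\mu_{n:i-1}^{-i}(dy^{-i})$ is $\lambda$-strongly convex (integrate the defining inequality of $\lambda$-convexity of $\psi$ in the variable $x^i$); hence this functional is $\lambda$-geodesically strongly convex with unique minimizer $\mu_{n:i}^i$ by Lemma \ref{form of the iterates lemma}(3), and the standard estimate for a minimizer of a $\lambda$-geodesically convex functional gives the same inequality.
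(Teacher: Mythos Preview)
Your proof is correct and follows essentially the same route as the paper's: apply the $\lambda$-geodesic convexity inequality (Proposition \ref{characterization of convexity with gradient}) at base point $\mu_{n:i}$, use the product structure of the Brenier map together with the explicit formula for $\nabla_i\log\mu_{n:i}^i$ from Lemma \ref{differentiability of entropy} to show the cross-term vanishes, then sum over $i$ and invoke tensorization. Your closing remark about the alternative slice-functional argument is a nice addition not present in the paper.
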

\begin{proof}
Let $T_i=T_{\mu_{n:i}\to \mu_{n:i-1}}$ be the Brenier map from $\mu_{n:i}$ to $\mu_{n:i-1}$, for each $i \in [d]$.
    By strong convexity and Proposition \ref{characterization of convexity with gradient},
\begin{equation*}
H(\mu_{n:i-1})-H(\mu_{n:i}) \ge \int_{\R^k} \left \langle\wgrad H(\mu_{n:i},x) , T_{i}(x)-x\right \rangle\, \mu_{n:i}(dx) +\frac{\lambda}{2}\W_2^2(\mu_{n:i},\mu_{n:i-1}).
	\end{equation*}
    We next argue as in Proposition \ref{lower bound lipschitz case} that $H$ has nonempty subdifferential at $\mu_{n:i}$ and that the integral term vanishes.
Indeed, because $\mu_{n:i}$ and $\mu_{n:i-1}$ differ in only the $i$th marginal, we have $T_i^j(x^j)=x^j$ for all $j \neq i$. Applying  the identity $\mu_{n:i}^i=\mu_{n+1}^i$ and then Lemma \ref{differentiability of entropy} yields
\begin{align*}
\int_{\R^k} &\left \langle\wgrad H(\mu_{n:i},x) , T_{i}(x)-x\right \rangle\, \mu_{n:i}(dx) \\
	&= \int_{\R^k} \left \langle \nabla_i\psi(x) + \nabla_i \log \mu_{n+1}^i(x^i), T^i_i(x^i)-x^i\right \rangle\, \mu_{n:i}(dx) \\
	&= \int_{\R^k} \left \langle \nabla_i\psi(x) - \int_{\R^{k-k_i}} \nabla_i \psi(x^i,y^{-i})\,\mu_{n:i}^{-i}(dy^{-i}), T^i_i(x^i)-x^i\right \rangle\, \mu_{n:i}(dx) = 0.
\end{align*}
    Now sum over coordinates to obtain
    \begin{align*}
        H(\mu_n)-H(\mu_{n+1}) &= \sum_{i=1}^d \big(H(\mu_{n:i-1})-H(\mu_{n:i}\big) \\
        	&\geq \frac{\lambda}{2}\sum_{i=1}^{d}  \W_2^2(\mu_{n:i-1},\mu_{n:i}) = \frac{\lambda}{2}\sum_{i=1}^d \W_2^2(\mu_{n}^i,\mu_{n+1}^i).
    \end{align*}
    The proof is complete upon recalling the tensorization identity of Lemma \ref{product structure of Brenier Maps}.
\end{proof}

In the following, we let $\mu_*$ denote the minimizer for the MFVI problem \eqref{MFVI-formal}, which is unique because $H$ is strictly geodesically convex by Theorem \ref{th:geoconvex-examples}, and because $\P_2^{\otimes d}(\R^k)$ is a geodesically convex set as noted in Proposition \ref{geodesic convexity of product measures}.

\begin{proposition} 
\label{upper bound strongly convex case}
For $n \ge 1$,
\[H(\mu_{n+1})-H(\mu_*)\leq \frac{dL^2}{2\lambda}\W_2^2(\mu_n,\mu_{n+1}).\]
\end{proposition}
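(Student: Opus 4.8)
The plan is to establish a Wasserstein analogue of the gradient-domination (Polyak--{\L}ojasiewicz) inequality for $H$ and then to bound the relevant ``gradient norm'' by $\W_2^2(\mu_n,\mu_{n+1})$, mirroring the Euclidean BCD argument of \cite{StronglyConvexCaseLi2018}. First I would note that $\mu_*$ is absolutely continuous with finite second moment (being a solution of the mean field equation, by Proposition \ref{mean field equation}) and that, by Lemma \ref{differentiability of entropy} with $i=d$, $H$ has nonempty subdifferential at $\mu_{n+1}$ with $\wgrad H(\mu_{n+1},x)=\big(\nabla_i\psi(x)+\nabla_i\log\mu_{n+1}^i(x^i)\big)_{i=1}^d$. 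Since $\psi$ is $\lambda$-strongly convex, $H$ is $\lambda$-geodesically convex (Theorem \ref{th:geoconvex-examples}), so Proposition \ref{characterization of convexity with gradient} applied at $\mu_{n+1}$ with target $\mu_*$ gives
\[H(\mu_{n+1})-H(\mu_*)\le -\int_{\R^k}\langle\wgrad H(\mu_{n+1},x),T(x)-x\rangle\,\mu_{n+1}(dx)-\tfrac{\lambda}{2}\W_2^2(\mu_{n+1},\mu_*),\]
where $T:=T_{\mu_{n+1}\to\mu_*}$.

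The key point is that $T$ is block-diagonal, $T(x)=(T^i(x^i))_{i=1}^d$, by Lemma \ref{product structure of Brenier Maps}, so in the $i$th summand of the inner product I may integrate out the variables $x^{-i}$. Using the formula $\nabla_i\log\mu_{n+1}^i(x^i)=-\int\nabla_i\psi(x^i,y^{-i})\,\mu_{n:i}^{-i}(dy^{-i})$ from Lemma \ref{differentiability of entropy}, this replaces $\wgrad H(\mu_{n+1},\cdot)$ by the block-averaged field $\bar g(x)=(\bar g_i(x^i))_{i=1}^d$ with
\[\bar g_i(x^i):=\int_{\R^{k-k_i}}\nabla_i\psi(x^i,y^{-i})\,\big(\mu_{n+1}^{-i}-\mu_{n:i}^{-i}\big)(dy^{-i}),\]
that is, $\int\langle\wgrad H(\mu_{n+1},x),T(x)-x\rangle\,\mu_{n+1}(dx)=\int\langle\bar g(x),T(x)-x\rangle\,\mu_{n+1}(dx)$. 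Applying Young's inequality pointwise, $-\langle\bar g(x),T(x)-x\rangle\le\frac{1}{2\lambda}|\bar g(x)|^2+\frac{\lambda}{2}|T(x)-x|^2$, and using $\int|T(x)-x|^2\,\mu_{n+1}(dx)=\W_2^2(\mu_{n+1},\mu_*)$, the two $\tfrac{\lambda}{2}\W_2^2(\mu_{n+1},\mu_*)$ terms cancel, leaving the gradient-domination bound $H(\mu_{n+1})-H(\mu_*)\le\frac{1}{2\lambda}\|\bar g\|_{L^2(\mu_{n+1})}^2$.

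It then remains to show $\|\bar g\|_{L^2(\mu_{n+1})}^2\le dL^2\,\W_2^2(\mu_n,\mu_{n+1})$. Here the point is that $\mu_{n+1}^{-i}$ and $\mu_{n:i}^{-i}$ are product measures with identical marginals in blocks $j<i$ and with marginals $\mu_{n+1}^j$, resp.\ $\mu_n^j$, in blocks $j>i$. Coupling them through the product of the identity coupling on blocks $j<i$ and the optimal couplings of $\mu_n^j,\mu_{n+1}^j$ on blocks $j>i$, and using that $\nabla\psi$ is $L$-Lipschitz, Jensen's and Cauchy--Schwarz inequalities give the uniform bound $|\bar g_i(x^i)|^2\le L^2\sum_{j>i}\W_2^2(\mu_n^j,\mu_{n+1}^j)$. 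Summing over $i$ and invoking the tensorization identity $\W_2^2(\mu_n,\mu_{n+1})=\sum_j\W_2^2(\mu_n^j,\mu_{n+1}^j)$ from Lemma \ref{product structure of Brenier Maps} then yields $\|\bar g\|_{L^2(\mu_{n+1})}^2\le(d-1)L^2\,\W_2^2(\mu_n,\mu_{n+1})$, which completes the proof.

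I expect the only real subtlety to be conceptual bookkeeping: recognizing that it is the block-averaged gradient $\bar g$ --- and not the full Wasserstein gradient $\wgrad H(\mu_{n+1},\cdot)$, which does not vanish at a mean-field stationary point --- that enters the inner product, and keeping the indices in $\mu_{n:i}$, $\mu_{n:i}^{-i}$, and $\mu_{n+1}^{-i}$ straight when setting up the coupling. The analytic ingredients (Young's inequality, $L$-Lipschitzness of $\nabla\psi$, tensorization) are then routine.
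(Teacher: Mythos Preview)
Your proposal is correct and follows essentially the same route as the paper: the same strong convexity inequality at $\mu_{n+1}$, the same reduction of $\wgrad H(\mu_{n+1},\cdot)$ to the block-averaged field $\bar g$ via Lemma \ref{differentiability of entropy}, and the same Young/complete-the-square step. Your explicit coupling for bounding $|\bar g_i|$ is equivalent to the paper's use of Kantorovich duality plus Jensen, and your final double-sum bound $(d-1)\sum_j\W_2^2(\mu_n^j,\mu_{n+1}^j)$ is in fact marginally sharper than the paper's $d\sum_j$.
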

\begin{proof}
Let $T=T_{\mu_{n+1}\to\mu_*}$ denote the Brenier Map from $\mu_{n+1}$ to $\mu_*$, written in coordinates as $T(x)=(T^1(x^1),\ldots,T^d(x^d))$.
By strong convexity and Proposition \ref{characterization of convexity with gradient},
\begin{equation}
H(\mu_*)\geq H(\mu_{n+1})+\int_{\R^k} \left\langle\wgradp H(\mu_{n+1},x),T(x)-x\right\rangle\,\mu_{n+1}(dx)+\frac{\lambda}{2} \W_2^2(\mu_{n+1},\mu_*). \label{pf:ub1convex}
\end{equation}
The identity \eqref{eq:wgradH} yields
\begin{align*}
\int_{\R^k} &\left\langle\wgradp H(\mu_{n+1},x),T(x)-x\right\rangle\,\mu_{n+1}(dx) \\
	&=\sum_{i=1}^d \int_{\R^{k_i}} \left\langle \nabla_i\psi(x) - \int_{\R^{k-k_i}} \nabla_i \psi(x^i,y^{-i})\, \mu_{n:i}^{-i}(dy^{-i}), T^i(x^i)-x^i\right\rangle \, \mu_{n+1}(dx) \\
	&= \sum_{i=1}^d \int_{\R^{k_i}}  \left\langle\int_{\R^{k-k_i}} \nabla_i \psi (x^i,y^{-i})\,[\mu_{n+1}^{-i}-\mu_{n:i}^{-i}](dy^{-i}), T^i(x^i)-x^i\right\rangle \,\mu_{n+1}^{i}(dx^i).
\end{align*}
Optimality of the transport map $T$ yields
\[
\frac{\lambda}{2} \W_2^2(\mu_{n+1},\mu_*) = \frac{\lambda}{2} \sum_{i=1}^d \int_{\R^{k_i}}|T^i(x^i)-x^i|^2\,\mu_{n+1}^{i}(dx^i).
\]
Plug the two preceding identities into \eqref{pf:ub1convex} and complete the square to get
\begin{align*}
        H(\mu_{n+1})-H(\mu_*)&\leq \frac{1}{2\lambda}\sum_{i=1}^d \int_{\R^{k_i}} \left\vert\int_{\R^{k-k_i}} \nabla_i \psi (x^i,y^{-i})\,[\mu_{n+1}^{-i}-\mu_{n:i}^{-i}](dy^{-i})\right\vert^2\,\mu^i_{n+1}(dx^i). 
\end{align*}
The function $y^{-i} \mapsto \nabla_i\psi(x^i,y^{-i})$ is $L$-Lipschitz. We may thus apply Kantorovich duality \cite[Theorem 1.14]{villani2021topics} and Jensen's inequality (to bound the 1-Wasserstein distance by the 2-Wasserstein distance) to get
\begin{equation*}
\left\vert\int_{\R^{k-k_i}} \nabla_i \psi (x^i,y^{-i})\,[\mu_{n+1}^{-i}-\mu_{n:i}^{-i}](dy^{-i})\right\vert \le L \W_2(\mu_{n+1}^{-i},\mu_{n:i}^{-i}) \le L \W_2(\mu_{n+1},\mu_{n:i}).
\end{equation*}
Hence, recalling the tensorization identity of Lemma \ref{product structure of Brenier Maps},
\begin{align*}
H(\mu_{n+1})-H(\mu_*) &\leq \frac{L^2}{2\lambda} \sum_{i=1}^d \W_2^2(\mu_{n+1},\mu_{n:i}) = \frac{L^2}{2\lambda}\sum_{i=1}^d\sum_{j =i+1}^d \W_2^2(\mu_n^j,\mu_{n+1}^j) \\
    &\leq \frac{L^2}{2\lambda}\sum_{i=1}^d\sum_{j=1}^d\W_2^2(\mu_n^j,\mu_{n+1}^j) \\
    &=\frac{dL^2}{2\lambda}\W_2^2(\mu_n,\mu_{n+1}). \qedhere
\end{align*}
\end{proof}

The first claimed inequality of Theorem \ref{main theorem}(4) will be obtained as a consequence of the following strong convexity inequality  which may be of independent interest:

\begin{proposition}
    \label{pr: strongly convex functional inequality}
For every $\nu \in \mathcal{P}_{2,ac}^{\otimes d} (\mathbb{R}^k)$, we have
    \[H(\nu)-H(\mu_*)\geq \frac{\lambda}{2}\W_2^2(\nu,\mu_*).\]
\end{proposition}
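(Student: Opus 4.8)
The plan is to derive this inequality from geodesic strong convexity of $H$, exactly as one obtains the quadratic growth bound $f(y) - f(x_*) \ge \frac{\lambda}{2}|y - x_*|^2$ for a $\lambda$-strongly convex Euclidean function at its minimizer. The key technical hurdle is that Proposition \ref{characterization of convexity with gradient} requires $H$ to have nonempty subdifferential at the base point, and we want to apply it with the base point being the minimizer $\mu_*$ rather than an iterate; so the first step is to establish that $H$ is Wasserstein-differentiable at $\mu_*$. But $\mu_*$ is a stationary point (indeed the unique minimizer, by Theorem \ref{main theorem}(2)), so by Proposition \ref{mean field equation} it satisfies the mean field equation and is subexponential. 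We can then repeat verbatim the argument from the proof of Proposition \ref{differentiability at stationary points}: writing $\mu_* \propto e^{-f}$ with $f(x) = \sum_i \int \psi(x^i, y^{-i})\,\mu_*^{-i}(dy^{-i})$, the function $f$ is convex, bounded below by (Q.1), and weakly differentiable with $\nabla_i f(x) = \int \nabla_i\psi(x^i,y^{-i})\,\mu_*^{-i}(dy^{-i})$; Jensen plus the growth bound \eqref{asmp:growth-psi} and subexponentiality of $\mu_*$ give $\nabla\log\mu_* \in L^2(\mu_*)$ and $\nabla\psi \in L^2(\mu_*)$, so Theorem \ref{differentiability of relative internal energy} applies and $\wgrad H(\mu_*,\cdot) = \nabla\psi + \nabla\log\mu_*$.

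Once differentiability at $\mu_*$ is in hand, the second step is the one-line core of the argument. Fix $\nu \in \P_{2,ac}^{\otimes d}(\R^k)$. Since $\psi$ is $\lambda$-strongly convex, $H$ is $\lambda$-geodesically convex by Theorem \ref{th:geoconvex-examples}(3), so Proposition \ref{characterization of convexity with gradient} applied with base point $\mu_*$ gives
\[
H(\nu) \ge H(\mu_*) + \int_{\R^k}\left\langle \wgrad H(\mu_*,x), T_{\mu_* \to \nu}(x) - x\right\rangle\,\mu_*(dx) + \frac{\lambda}{2}\W_2^2(\nu,\mu_*).
\]
It therefore suffices to show the integral term vanishes. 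Using Lemma \ref{product structure of Brenier Maps}, $T_{\mu_* \to \nu}$ decomposes blockwise, so the integral splits as $\sum_i \int \langle \nabla_i\psi(x) + \nabla_i\log\mu_*^i(x^i),\, T^i_{\mu_*^i \to \nu^i}(x^i) - x^i\rangle\,\mu_*(dx)$; integrating out $y^{-i}$ and invoking the mean field equation \eqref{eq:MFeq} — which says precisely $\nabla_i\log\mu_*^i(x^i) = -\int \nabla_i\psi(x^i,y^{-i})\,\mu_*^{-i}(dy^{-i})$ — each summand is zero. This is exactly the cancellation already exploited at the end of the proof of Proposition \ref{differentiability at stationary points}.

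The third step is a routine density/approximation remark: the statement is quantified over $\nu \in \P_{2,ac}^{\otimes d}(\R^k)$, which is the natural domain, so no extension beyond absolutely continuous measures is needed (and in applications, e.g.\ the first inequality of Theorem \ref{main theorem}(4), it is applied to $\nu = \mu_n$, which is absolutely continuous for $n \ge 1$ by Lemma \ref{form of the iterates lemma}). The main obstacle is really just the first step — verifying the hypotheses of the Wasserstein subdifferential theorems at $\mu_*$ — but since $\mu_*$ has exactly the same structural description (mean field equation, subexponentiality) as a stationary point, this is a direct transcription of work already done in Section \ref{qualitative section}, and the remainder is immediate.
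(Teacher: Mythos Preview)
Your proposal is correct and follows essentially the same route as the paper: establish that $H$ has nonempty Wasserstein subdifferential at $\mu_*$ via the mean field equation, apply the $\lambda$-strong geodesic convexity inequality (Proposition \ref{characterization of convexity with gradient}) at base point $\mu_*$, and use the blockwise Brenier map together with the mean field equation to kill the first-order term. The only cosmetic difference is that the paper invokes Lemma \ref{differentiability of entropy} (viewing $\mu_*$ as a fixed CAVI iterate) for the differentiability step, whereas you invoke the argument from the proof of Proposition \ref{differentiability at stationary points}; both are valid and lead to the same computation.
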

\begin{proof}
As a fixed point of the CAVI iteration, $\mu_*$ satisfies the mean field equations as in Proposition \ref{mean field equation}, and in particular
\begin{equation*}
\nabla_i\log \mu_*^i (x^i) = - \int_{\R^{k-k_i}} \nabla_i \psi(x^i,y^{-i})\,\mu_*^{-i}(dy^{-i}).
\end{equation*}
Lemma \ref{differentiability of entropy} applies to $\mu_*$ to show that $H$ admits nonempty subdifferential at $\mu_*$. Recall from Lemma \ref{product structure of Brenier Maps} the product form $T_{\mu_*\to\nu}(x)=(T_{\mu_*\to\nu}^1(x^1),\ldots,T_{\mu_*\to\nu}^n(x^n))$ for the Brenier map between product measures.
    Use Proposition \ref{characterization of convexity with gradient} followed by Theorem \ref{differentiability of relative internal energy} to get
\begin{align*}
    H&(\nu)-H(\mu_*) - \frac{\lambda}{2}\W_2^2(\nu,\mu_*)\geq \int_{\R^{k}} \left\langle \wgradp H(\mu_*,x),T_{\mu_* \to \nu}(x)-x\right\rangle\,\mu_*(dx) \\
         &=\sum_{i=1}^d \int_{\R^k}\left\langle \nabla_i \psi(x)+\nabla_i\log \mu_*^i (x^i),T^i_{\mu_*^i \to \nu^i}(x^i)-x^i\right\rangle\,\mu_*(dx) \\
         &=\sum_{i=1}^d \int_{\R^{k_i}}\!\!\left\langle\int_{\R^{k-k_i}} \nabla_i \psi(x^i,y^{-i})\,\mu_*^{-i}(dy^{-i})+\nabla_i\log \mu_*^i (x^i),T^i_{\mu_*^i \to \nu^i}(x^i)-x^i\right\rangle \mu_*^i(dx^i) \\
         &= 0. \qedhere
\end{align*}
\end{proof}

\begin{proof}[Proof of Theorem \ref{main theorem}(4)]
Combine Proposition \ref{lower bound strongly convex case} and \ref{upper bound strongly convex case} to get
    \begin{align*}
        H(\mu_{n})-H(\mu_*)&=H(\mu_n)-H(\mu_{n+1})+H(\mu_{n+1})-H(\mu_*) \\
        &\geq\frac{\lambda}{2} \W_2^2(\mu_n,\mu_{n+1}) +H(\mu_{n+1})-H(\mu_*)\\
        &\geq \frac{\lambda^2}{L^2 d} (H(\mu_{n+1})-H(\mu_*)) + H(\mu_{n+1})-H(\mu_*).
    \end{align*}
    Rearrange and iterate to find
\begin{align*}
H(\mu_{n+1})-H(\mu_*) &\leq \Big(1+\frac{\lambda^2}{L^2 d}\Big)^{-1}(H(\mu_{n})-H(\mu_*))\leq \Big(1+\frac{\lambda^2}{L^2 d}\Big)^{-n}(H(\mu_{1})-H(\mu_*)).
\end{align*}
To find the bound on $\W_2^2(\mu_n,\mu_*)$ apply Proposition (\ref{pr: strongly convex functional inequality}).
\end{proof}
\appendix

\section{Gaussian Case}
\label{se:gaussian}

Here we prove the dimension-free exponential convergence in the Gaussian case. Recall here that there are $d=k$ blocks of dimension 1, and $\rho \sim \mathcal{N}(m,A)$ for a given vector $m$ and a positive definite matrix $A$.

\begin{proof}[Proof of Proposition \ref{gaussian case}]
Let $m_n$ be the mean vector of $\mu_n$, and let $a_{ij}$ be the entries of $A$. Using the formula \eqref{def:update-explicit} for iterates,
\begin{align*}
    \mu_{n+1}^1(x^1) &\propto \exp\bigg(-\frac{1}{2}\int (x-m)^\top A(x-m)\,\mu^{-1}_n(dx^{-1})\bigg)\\
    &\propto \exp\bigg(-\frac{1}{2}a_{11}(x^1-m^1)^2-(x^1-m^1)\sum_{j > 1} a_{1j}(m^j_n-m^j) \bigg). 
\end{align*}
In turn, this implies that $\mu^1_{n+1} \sim \mathcal{N}(m_{n+1}^1,1/a_{11})$ where 
\[m_{n+1}^1 =m^1-\frac{1}{a_{11}}\sum_{j > 1}a_{1j}(m_n^j-m^j).\]
By induction, for $n \ge 0$ and $i \in [d]$, each marginal $\mu_{n+1}^i$ is Gaussian with variance $1/a_{ii}$, and with means obeying the following update rule:
\begin{equation}
\label{gaussian update rule}
    m_{n+1}^i = m^i-\frac{1}{a_{ii}}\left(\sum_{j < i }a_{ij}(m_{n+1}^j-m^j)+\sum_{j < i }a_{ij}(m_{n}^j-m^j)\right).
\end{equation}
In particular, the unique fixed point of this iteration is $m_n=m$, and we confirm the well known result that $\mu_* \sim \mathcal{N}(0,(A \odot I)^{-1})$.
A well known formula for Gaussians yields, for $n \ge 1$,
\[H(\mu_n\,\|\,\rho)-H(\mu_*\,\|\,\rho)= \frac{1}{2}(m_n-m)^\top A(m_n-m)=\psi(m_n).\]
We now prove that the update rule \eqref{gaussian update rule} coincides with the update rule obeyed by the BCD algorithm applied to $\psi(x)$. Indeed, if we set $\nabla_i\psi(x)=0$ and solve for $x^i$ in terms of the other coordinates $(x^j)_{j \neq i}$, we obtain
\begin{align*}
    x^i=m^i-\frac{1}{a_{ii}}\sum_{j \neq i} (x^j-m^j)a_{i,j}.
\end{align*}
Hence, applying \cite[Theorem 4]{StronglyConvexCaseLi2018}, and noting that $\psi(m)=0$,
\begin{align*}
\psi(m_n) &\leq \left(1-\frac{\lambda^2}{\lambda^2+64(L-\lambda)^2 \log^2(3)}\right)^{n-1}\psi(m_1). \qedhere
\end{align*}
\end{proof}

\section{Norm compactness of CAVI densities} \label{se:compactness}

In this section we continue to use the notation for CAVI iterates introduced in Section \ref{CAVI ALGORITHM SECTION}, and we prove the second claim of Remark \ref{re:convergencemode}.

\begin{proposition} \label{pr:normcompact}
Let $q \in [1,\infty)$.
Under the assumptions of Theorem \ref{main theorem}, the sequence of densities of CAVI iterates $(\mu_n)_{n \ge 1}$ is precompact in $L^q(\R^k)$.
\end{proposition}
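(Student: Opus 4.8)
The plan is to apply the Riesz--Fréchet--Kolmogorov compactness criterion in $L^q(\R^k)$: a family $\mathcal{K} \subset L^q(\R^k)$ is relatively compact provided it is bounded in $L^q$, translations are equicontinuous (i.e.\ $\sup_{f\in\mathcal{K}}\|f(\cdot+h)-f\|_{L^q}\to 0$ as $h\to 0$), and mass does not escape to infinity (i.e.\ $\sup_{f\in\mathcal{K}}\|f\|_{L^q(\R^k\setminus B_R)}\to 0$ as $R\to\infty$). I would verify each of these three conditions for $\mathcal{K}=\{\mu_n : n\ge 1\}$ (identifying each iterate with its density), taking care that all constants are uniform in $n$ by virtue of the estimates established in Section \ref{CAVI ALGORITHM SECTION}.

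The first step is to record uniform pointwise bounds on the densities. Writing $\mu_n^i(x^i)=(Z_n^i)^{-1}\exp(-f_n^i(x^i))$ with $f_n^i(x^i)=\int_{\R^{k-k_i}}\psi(x^i,y^{-i})\,\mu_{n-1:i-1}^{-i}(dy^{-i})$, assumption (Q.1) gives $f_n^i(x^i)\ge \alpha+\beta|x^i|$, while (Q.2) together with the uniform moment bound of Lemma \ref{moment bounds} gives $f_n^i(x^i)\le C(1+|x^i|^p)$ with $C$ independent of $n$ and $i$. Hence $Z_n^i$ is bounded above and below by positive constants uniform in $n$, so that $\mu_n(x)=\prod_{i=1}^d\mu_n^i(x^i)\le C'\exp(-\beta\sum_{i=1}^d|x^i|)$ for some $C'$ independent of $n$. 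This single bound immediately yields both the uniform $L^q$ bound and the equi-decay at infinity.

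The second step is to bound $\|\nabla\mu_n\|_{L^q(\R^k)}$ uniformly in $n$. Arguing exactly as in the proof of Proposition \ref{differentiability at stationary points}---convexity of $f_n^i$, differentiation under the integral sign (justified by the growth bound \eqref{asmp:growth-psi} and finiteness of $p$th moments), and the chain rule for weak derivatives---one sees that $\mu_n^i$ is weakly differentiable with $\nabla_i\log\mu_n^i(x^i)=-\int_{\R^{k-k_i}}\nabla_i\psi(x^i,y^{-i})\,\mu_{n-1:i-1}^{-i}(dy^{-i})$, whence $|\nabla_i\log\mu_n^i(x^i)|\le C(1+|x^i|^p)$ uniformly in $n$ by \eqref{asmp:growth-psi} and Lemma \ref{moment bounds}. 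Consequently $\mu_n$ has a weak gradient satisfying $|\nabla\mu_n(x)|\le C(1+|x|^p)\mu_n(x)\le C''(1+|x|^p)\exp(-\beta\sum_i|x^i|)$, which is bounded in $L^q(\R^k)$ uniformly in $n$; thus $\sup_n\|\mu_n\|_{W^{1,q}(\R^k)}<\infty$. The elementary inequality $\|\mu_n(\cdot+h)-\mu_n\|_{L^q}\le |h|\,\|\nabla\mu_n\|_{L^q}$ then delivers equicontinuity of translations, and the Riesz--Fréchet--Kolmogorov theorem gives the claim.

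I do not expect a genuine obstacle here; the only point requiring attention is ensuring that every constant above is truly independent of $n$, which is exactly what Lemma \ref{moment bounds} (and the uniform lower bound on the normalization constants obtained inside its proof) provides. Alternatively, one could sidestep the compactness theorem on $\R^k$ and instead establish precompactness of each marginal sequence $(\mu_n^i)_{n\ge1}$ in $L^q(\R^{k_i})$ by the same estimates, then pass to the product densities using $f_n\otimes g_n - f_m\otimes g_m=(f_n-f_m)\otimes g_n + f_m\otimes(g_n-g_m)$ together with the uniform $L^q$ bounds on the factors.
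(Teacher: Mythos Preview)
Your proposal is correct and follows essentially the same route as the paper: both arguments establish a uniform $W^{1,q}(\R^k)$ bound on the densities by combining the uniform exponential pointwise decay $\mu_n(x)\le C'\exp(-\beta\sum_i|x^i|)$ (from the lower bound on $f_n^i$ and the uniform control of the normalization constants) with the polynomial bound on $\nabla\log\mu_n$ coming from \eqref{asmp:growth-psi} and Lemma~\ref{moment bounds}. The only cosmetic difference is the final step: you invoke Riesz--Fr\'echet--Kolmogorov directly on $\R^k$, whereas the paper applies Rellich--Kondrachov on balls together with a diagonal argument and the uniform tail decay; your route is marginally cleaner in that it avoids the diagonal extraction.
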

\begin{proof}
We first show that
\begin{equation}
\sup_{n \ge 1}\int_{\R^k}\big( |\mu_n(x)|^q + |\nabla \mu_n(x)|^q\big)\,dx < \infty. \label{Sobolev1}
\end{equation}
Let $Z_{n+1}$ be the integration constant of $\mu_{n+1}$ hidden in the definition of the iterates \eqref{eq:iterates2}. It was shown in the proof of Lemma \ref{moment bounds} that $\inf_{n \ge 1} Z_n > 0$. Using $\psi(x) \ge \alpha + \beta|x^i|$ for each $i$, it follows that
\begin{align}
 |\mu_{n+1}(x)|^q &= \frac{1}{Z_{n+1}^q} \exp\left\{ -q\sum_{i=1}^d \int_{\R^{k-k_i}} \psi(x^i,y^{-i})\,\mu_{n:i-1}^{-i}(dy^{-i})\right\}  \nonumber \\
	&\le \frac{1}{Z_{n+1}^q} \exp\left\{ -q\alpha - q\beta\sum_{i=1}^d|x^i|\right\} \label{expbound-q}
\end{align}
has finite integral, uniformly in $n \ge 0$. 
Apply Lemma \ref{differentiability of entropy} to get
note also that
\begin{align*}
 |\nabla \mu_{n+1}(x)|^{q} &= \int_{\R^k} |\nabla \log \mu_{n+1}(x)|^{q}|\mu_{n+1}(x)|^{q} \,dx \\
	&= \frac{1}{Z_{n+1}^{q}}\int_{\R^k} \bigg( \sum_{i=1}^d \bigg|\int_{\R^{k-k_i}} \nabla_i \psi(x^i,y^{-i})\,\mu_{n:i}^{-i}(dy^{-i})\bigg|^2 \bigg)^{q/2}|\mu_{n+1}(x)|^{2q}.
\end{align*}
Recall that $|\nabla \psi|$ has polynomial growth, and $\mu_n$ has finite moments of all orders bounded uniformly in $n$ by Lemma \ref{moment bounds}. Combining these facts with \eqref{expbound-q}, we easily deduce \eqref{Sobolev1}.

With \eqref{Sobolev1} established, we may apply the Rellich-Kondrachov theorem \cite[Theorem 8.9]{lieb2001analysis} to deduce that $(\mu_n|_B)_{n \ge 1}$ is precompact in $L^q(B)$ for any $q \ge 1$ and for any ball $B \subset \R^k$. By a diagonal argument, for any subsequence of $(\mu_n)_{n \ge 1}$ we may thus extract a further subsequence which converges to some $\mu_*$ in $L^q(B)$ for every centered ball $B$ of integer radius (say). Let $B_r\subset \R^k$ be the centered ball of radius $r$. The estimate \eqref{expbound-q} clearly shows that
\begin{align*}
\lim_{r\to\infty} \sup_{n \ge 1}\int_{B_r^c} |\mu_{n}(x)|^q\,dx = 0, 
\end{align*}
and it follows readily that $(\mu_n)_{n \ge 1}$ in fact converges in $L^q(\R^k)$ to the same $\mu_*$.
\end{proof}

\bibliographystyle{abbrv}
\bibliography{Biblio}

\end{document}